\documentclass[12pt]{article}

\usepackage[a4paper,margin=1in]{geometry}
\usepackage{amsmath,amssymb}
\usepackage{natbib}
\usepackage{hyperref}
\usepackage{orcidlink}
\usepackage{enumitem}
\usepackage{booktabs}
\usepackage{microtype}
\usepackage{algorithm}
\usepackage{algpseudocode}
\usepackage{graphicx}
\graphicspath{{./}}
\usepackage{amsthm}
\usepackage{tikz}
\usetikzlibrary{arrows.meta,positioning,shapes.geometric,calc}

\usepackage{dsfont}
\usepackage{mathtools}

\newtheorem{theorem}{Theorem}
\newtheorem{corollary}[theorem]{Corollary}
\newtheorem{lemma}[theorem]{Lemma}

\newtheorem{assumptionA}{Assumption}

\newtheorem{assumptionB}{Assumption}

\title{Federated Language Models Under Bandwidth Budgets: Distillation Rates and Conformal Coverage}
\author{%
  Prasanjit Dubey\,\orcidlink{0000-0002-3667-5507}%
  \qquad
  Xiaoming Huo\,\orcidlink{0000-0003-0101-1206}\\
  H.~Milton Stewart School of Industrial and Systems Engineering,\\
  Georgia Institute of Technology, Atlanta, GA 30332, U.S.A.
}
\date{}

\begin{document}
\let\oldthefootnote\thefootnote
\renewcommand{\thefootnote}{}
\maketitle
\let\thefootnote\oldthefootnote
\setcounter{footnote}{0}

\begin{abstract}
Training a language model on data scattered across bandwidth-limited nodes
that cannot be centralized is a setting that arises in clinical networks,
enterprise knowledge bases, and scientific consortia. We study the regime in which data must
remain distributed across nodes, and ask what statistical guarantees are
\emph{in principle achievable} under explicit bandwidth budgets; we aim
to characterize what is provably possible, not to demonstrate a
deployment-ready system.
Existing theory treats either training-time consistency or
inference-time calibration in isolation, and no prior work makes
bandwidth a first-class statistical parameter.
We analyze two protocols, Federated Probe-Logit Distillation (FPLD) for
training and Federated Conformal RAG (FC-RAG) for inference, as the
analytical vehicles for our results. Our first main
result is an explicit high-probability KL-consistency rate for FPLD with
simultaneous dependence on node count $K$, per-node sample size $n$,
quantization budget $B$, probe-set size $m$, and vocabulary size $V$;
bandwidth enters only through an exponentially vanishing quantization term.
Our second main result is a distribution-free marginal-coverage bound for
FC-RAG, whose novel retrieval-bandwidth slack
$\Delta_{\mathrm{RAG}} = f_{\max}\sqrt{K^{-2}\sum_i v(B_i)}$ makes per-node
retrieval bandwidth a first-class statistical parameter, with arithmetic
aggregation across $K$ nodes shrinking the slack as $K^{-1/2}$ in the
per-node-uniform regime. A Pinsker-type
corollary composes the two bounds into an end-to-end coverage guarantee.
Synthetic experiments verify the predicted scaling along the bounds'
parameters; small-scale experiments on a GPT-2 testbed illustrate that
the qualitative bandwidth--accuracy tradeoff survives on a real language
model. A deployment-scale empirical evaluation is out of scope.
\end{abstract}

\section{Introduction}\label{sec:intro}
In several application domains, training data is \emph{scattered
across locations that cannot be centralized} for reasons of regulation,
consent, or institutional policy: hospitals, enterprise knowledge bases,
scientific consortia, and edge fleets. In each location, local resources
are limited; no single node has enough data or compute to train a domain
expert on its own. Worse, the communication link between nodes is
\emph{bandwidth-constrained}: a small handful of megabits per round,
not unlimited gradient exchange.

Consider, as a running example, a
multi-hospital consortium where each hospital fine-tunes a local
clinical language model on its private patient records, exchanges
only bandwidth-limited summaries over an existing low-rate wide-area network, and
at query time produces conformal answer sets with provable coverage
on patient questions. The hospital language models (LMs) are individually weak (small
local datasets, modest compute, narrow domain coverage), and gradient
or weight exchange is infeasible (full-precision weight updates exceed
the available uplink and may leak patient information). We
therefore ask whether such a swarm can \emph{in principle} admit
distribution-free statistical guarantees on both \emph{what the
trained model predicts} and \emph{what the deployed system says it
does not know}, under explicit per-uplink budgets.

\paragraph{Prior work.}
Federated language-model
distillation~\citep{feddf2020,fedmd2019,fedllmsurvey2024,fedfd2025}
exchanges soft outputs on a shared probe set, but treats distillation
as a black-box fusion step: no statistical rate, no quantization
budget, no inference-time guarantee. Single-site conformal
retrieval-augmented generation (RAG)~\citep{traq2024,conformalrag2025,principledcontext2025} delivers
calibrated answer sets but assumes one model, one corpus, one
calibration set; per-node retrieval bandwidth does not appear in their
model. Federated conformal prediction~\citep{gcfcp2026,fedccp}
generalizes coverage to the federated setting but is not RAG-aware and
has no retrieval-bandwidth term. No prior result spans the
training-time and inference-time halves of a bandwidth-limited
federated large language model (LLM) pipeline with both ends made statistical.

\paragraph{Constraints and goal.}
We adopt three operational constraints: (i) no gradient or weight
exchange; (ii) no data pooling; (iii) per-uplink budgets $B$
(training-time probe), $B_i$ (inference-time per-node summary), and
$B_{\mathrm{cal}}$ (calibration summary) are first-class. Our goal is
to characterize whether a federated LLM pipeline can admit training-time
conditional-density convergence and inference-time conformal coverage
that degrade by quantitatively explicit, vanishing functions of these
budgets. Both halves are statistical, not systems-level.

Three obstacles make a naive extension of prior work fail.
\begin{itemize}
\item \textbf{(C1) Federated training without gradient or weight
exchange.} Black-box distillation gives no rate, but quantizing
\emph{logits} on a shared probe set raises a non-trivial statistical
question: do scalar-quantized logits, averaged across nodes and
distilled into a parametric student, still inherit the pooled
maximum-likelihood-estimator (MLE) rate, and how does a per-coordinate bandwidth budget enter the
bound?
\item \textbf{(C2) Inference-time coverage with heterogeneous
retrieval and partitioned calibration.} Per-node retrieval injects a
node-specific score perturbation that prior federated conformal
analyses do not account for; simultaneously, the calibration set is
itself partitioned across nodes and summarized at a separate
bandwidth budget $B_{\mathrm{cal}}$. The two bandwidth axes pull on
coverage in different ways and need separate slack terms.
\item \textbf{(C3) Composing training-time error into inference-time
guarantees.} The student trained at training time becomes the
nonconformity scorer at inference. Training KL error must propagate
into a coverage gap, but a naive Pinsker bound ignores the
density factor of the score cumulative distribution function (CDF) that converts a
score perturbation into a probability perturbation.
\end{itemize}

Our key methodological move is to transmit \emph{outputs on a shared
public probe set} rather than weights or gradients. We instantiate
this principle in two protocols, each addressing one of the three
challenges above, with a propagation corollary composing them.
\begin{itemize}
\item \emph{Federated Probe-Logit Distillation} (FPLD;
Algorithm~\ref{alg:fpld}) addresses (C1). Each node transmits a
scalar-quantized logit vector per probe context; the aggregator
averages in logit space and distills a global student.
Theorem~\ref{thm:theorem1} gives a high-probability KL rate with explicit
$K$, $n$, $m$, $B$, $V$ dependence.
\item \emph{Federated Conformal RAG} (FC-RAG;
Algorithm~\ref{alg:fcrag}) addresses (C2). Each node retrieves
locally and uploads a $B_i$-bit summary of its score; a one-shot
federated split-conformal calibration assembles a $\hat
q_{1-\alpha}$ from $B_{\mathrm{cal}}$-bit per-node summaries.
Theorem~\ref{thm:theorem2} gives a distribution-free coverage bound with
explicit $\Delta_{\mathrm{FL}}$ (calibration) and $\Delta_{\mathrm{RAG}}$
(retrieval) slacks.
\item \emph{Pinsker-type propagation} (Corollary~\ref{cor:theorem3})
addresses (C3). Training KL error converts to total-variation
distance via Pinsker, then to a coverage gap via the score-CDF
density factor of (\ref{ass:B5}); the result is a clean
$\Delta_{\mathrm{train}} = O(\sqrt{\mathbb{E}\,\mathrm{KL}})$ slack
that adds to Theorem~\ref{thm:theorem2}'s coverage bound.
\end{itemize}
The multi-hospital consortium of the running example is one
motivating instance our theory covers: each hospital scalar-quantizes
its per-probe logit vector at $B/V$ bits per coordinate, and the
aggregator produces a single global student that every hospital would
use behind FC-RAG at inference. Section~\ref{sec:experiments} reports
small-scale numerical experiments illustrating the predicted scaling on
synthetic n-gram ground truth, on GPT-2-small fine-tuning over
WikiText-2, and on multi-domain FC-RAG over DBpedia, AG News, and MMLU.

\paragraph{Contributions.}
\begin{enumerate}
\item \textbf{Two new protocols.} \emph{FPLD} for federated training
and \emph{FC-RAG} for federated inference, both bandwidth-explicit
and protocol-level reproducible (Section~\ref{sec:protocols}).
\item \textbf{Theorem~1 (training-time rate).} A high-probability
upper bound on $\mathbb{E}_X[\mathrm{KL}(P^\star \,\|\, \hat P)]$
under FPLD,
$$
\frac{c_1 d}{Kn}
+ c_2 \rho \sqrt{\frac{V \log(V/\delta)}{m}}
+ c_3 \frac{1}{K}\, 2^{-2B/V}
+ \varepsilon_{\mathrm{opt}} + \varepsilon_{\mathrm{fit}}
$$
(Section~\ref{sec:thm1}).
\item \textbf{Theorem~2 (inference-time coverage).} A
distribution-free marginal-coverage bound for FC-RAG of the form
$\Pr[Y \in \mathcal{C}_\alpha(X)] \geq 1 - \alpha -
1/(n_{\mathrm{cal}}+1) - \Delta_{\mathrm{FL}} -
\Delta_{\mathrm{RAG}}$, with $\Delta_{\mathrm{RAG}} =
f_{\max}\sqrt{(1/K^2)\sum_i v(B_i)}$ a novel retrieval-bandwidth
slack not present in prior federated conformal analyses
(Section~\ref{sec:thm2}).
\item \textbf{Propagation corollary and numerical illustrations.} A
Pinsker-type corollary (Corollary~\ref{cor:theorem3}) that propagates
training-time KL error into a coverage gap (Section~\ref{sec:thm3});
six small-scale experiments map one-to-one onto the theoretical
results, with synthetic and real-LM experiments playing distinct
roles. Three synthetic n-gram experiments (KL training rate,
heterogeneous-data extension, coverage-bound) directly verify the
predicted scaling along the bound's parameters, since closed-form
distillation makes the predictions exactly checkable; three small-scale
real-LM experiments (a bandwidth-tax measurement on GPT-2-small +
WikiText-2, a multi-domain FC-RAG study on DBpedia / AG News / MMLU,
and an end-to-end Pinsker propagation chain) illustrate that the
qualitative tradeoffs predicted by the theory survive on a real
language model (Section~\ref{sec:experiments}).
\end{enumerate}

\paragraph{Scope and what is not claimed.} This paper is a theoretical
feasibility study: our contributions are the bandwidth-explicit rate
and coverage bounds, and the corollary that composes them. The
experiments are reported as numerical illustrations, not as a
deployment-scale empirical evaluation. In particular, we do not claim
state-of-the-art end-task accuracy, do not benchmark systems-level
performance (latency, memory footprint, communication overhead in
real networks), do not provide a privacy-accounting analysis of
probe-logit exchange, and do not study scaling beyond GPT-2-small.
These extensions are deliberately left to follow-up work.

\section{Problem Setup}\label{sec:setup}
We study conditional next-token prediction in a federated swarm of weak
language models constrained by bandwidth budgets. This section fixes the
data, communication, and estimand formalism that
Theorems~\ref{thm:theorem1},~\ref{thm:theorem2}, and
Corollary~\ref{cor:theorem3} will refer to; the concrete protocols sit in
Section~\ref{sec:protocols}.

\paragraph{Data and nodes.}
Let $\mathcal{V}$ be a finite vocabulary of size $V$ and let $\mathcal{X} =
\mathcal{V}^{\leq L}$ be the space of contexts of length at most $L$. A
predictor is any map $\mathcal{X} \to \Delta(\mathcal{V})$, where
$\Delta(\mathcal{V})$ is the simplex over tokens. We assume $K$ nodes;
node $i \in \{1, \dots, K\}$ holds a local dataset
$\mathcal{D}_i = \{(x^{(i)}_j, y^{(i)}_j)\}_{j=1}^{n}$, drawn i.i.d.\ from a
local distribution $P_i$ over $\mathcal{X} \times \mathcal{V}$, and raw
data never leaves its node. We present the main results under the
\emph{homogeneous} regime $P_i = P^\star$ for all $i$, with a
heterogeneous corollary accounting for an additional
$\frac{1}{K}\sum_i \mathrm{KL}(P^\star \| P_i)$ drift term.

\paragraph{Probe set.}
A public unlabeled probe set $X_{\mathrm{probe}} = \{x^{(l)}\}_{l=1}^{m}
\subseteq \mathcal{X}$ is common knowledge, with contexts drawn i.i.d.\
from a probe marginal $Q$ that covers the target context marginal
$P^\star_X$ in the bounded Radon--Nikodym sense
$\rho = \operatorname{ess\,sup}\, dP^\star_X / dQ < \infty$. The probe
set is the only object on which the training-time protocol ever
transmits logits.

\paragraph{Retrieval corpora.}
For inference, node $i$ additionally owns a passage corpus $C_i$
with $|C_i| = N_i$ and retrieves top-$k_i$ passages $Z_i \subseteq C_i$
per query via a local retriever. The corpora $C_i$ are not shared, and
only bandwidth-limited summaries of the per-node predictions travel to
the hub.

\paragraph{Communication budget.}
Bandwidth is the only resource we charge, and only on uplink; downlink
broadcast of the aggregated model or of a query is free, matching the
standard convention in federated theory. \emph{Training:} in each of $T$
rounds, each node transmits a logit vector $\tilde\ell_i^{(t,l)} \in
\mathbb{R}^V$ per probe context, at most $B$ bits per vector total;
i.e., $B/V$ bits per coordinate after clipping to $[-L_\ell, L_\ell]$ and
scalar quantization. \emph{Inference:} per query, node $i$ uploads a
$B_i$-bit summary of its local prediction to the hub.
\emph{Calibration:} one-shot, with node $i$ transmitting a
$B_{\mathrm{cal}}$-bit summary of its local calibration scores from
which the hub reconstructs a global conformal quantile. Total training
uplink is $O(K T m B)$; per-query inference uplink is $\sum_i B_i$.

\paragraph{Estimands.}
At training time, the aggregator produces a global student $\hat P:
\mathcal{X} \to \Delta(\mathcal{V})$ approximating $P^\star(\cdot \mid x)$
in expected KL: Theorem~\ref{thm:theorem1} bounds
$\mathbb{E}_{X \sim P^\star_X}[\mathrm{KL}(P^\star(\cdot \mid X) \,\|\,
\hat P(\cdot \mid X))]$. At inference time, the answer space
$\mathcal{Y}$ is finite and discrete throughout (multiple-choice tasks,
or a bounded candidate set extracted from a top-$p$ truncation of the
student's next-token distribution); the nonconformity score is
$s(X, y) = -\log \hat P_{\mathrm{swarm}}(y \mid X) \in [0, S_{\max}]$
with $S_{\max}$ enforced by the same top-$p$ truncation. The predictor
emits a set $\mathcal{C}_\alpha(X) \subseteq \mathcal{Y}$ for which we
seek $\Pr[Y \in \mathcal{C}_\alpha(X)] \geq 1 - \alpha - \Delta$ with
$\Delta$ an explicit function of bandwidth: Theorem~\ref{thm:theorem2}
makes $\Delta$ explicit.

Adversarial nodes, differential privacy, architecture-heterogeneous
clients, streaming calibration, and conformal prediction for
open-ended generation are all out of scope; each is a natural extension but none
changes the statistical object we analyze.

\section{Protocols: FPLD and FC-RAG}
\label{sec:protocols}

This section specifies the two protocols our theorems analyze. Both are
synchronous, single-aggregator, and charge bandwidth on uplink only.

\subsection{Training: Federated Probe-Logit Distillation (FPLD)}
\label{subsec:fpld}

FPLD replaces gradient or weight exchange with the exchange of
\emph{quantized logits on a public probe set}. All nodes initialize from a
shared pretrained base $\hat P^{(0)}$. In each round, a node fine-tunes
locally, then evaluates its model on the probe set and transmits a quantized
logit vector per probe context. The aggregator averages in logit space,
distills a student on the averaged logits, and broadcasts the student back to
all nodes. Algorithm~\ref{alg:fpld} states the protocol.

\begin{algorithm}[h]
\caption{Federated Probe-Logit Distillation (FPLD)}
\label{alg:fpld}
\begin{algorithmic}[1]
\Require Rounds $T$; local epochs $E$; per-vector uplink budget $B$ bits;
  probe set $X_{\mathrm{probe}} = \{x^{(l)}\}_{l=1}^m$; shared base
  $\hat P^{(0)}$.
\For{$t = 1, \dots, T$}
  \For{each node $i = 1, \dots, K$ \textbf{in parallel}}
    \State $\hat P_i^{(t)} \gets \textsc{FineTune}(\hat P^{(t-1)},
      \mathcal{D}_i, E)$ \Comment{local SGD}
    \For{$l = 1, \dots, m$}
      \State $\ell_i^{(t,l)} \gets \textsc{Logits}(\hat P_i^{(t)}, x^{(l)})
        \in \mathbb{R}^V$
      \State $\tilde\ell_i^{(t,l)} \gets \textsc{ScalarQuantize}(\ell_i^{(t,l)};\
        B/V \text{ bits/coord}, [-L_\ell, L_\ell])$
    \EndFor
    \State Uplink $\{\tilde\ell_i^{(t,l)}\}_{l=1}^m$ to aggregator \Comment{$B$
      bits per probe context}
  \EndFor
  \State $\bar\ell^{(t,l)} \gets \frac{1}{K} \sum_{i=1}^{K} \tilde\ell_i^{(t,l)}$
    for all $l$
  \State $\hat P^{(t)} \gets \arg\min_{P \in \mathcal{F}_\Theta}
    \sum_{l=1}^{m} \mathrm{KL}\!\left(\operatorname{softmax}(\bar\ell^{(t,l)})
    \,\|\, P(\cdot \mid x^{(l)})\right)$
  \State Broadcast $\hat P^{(t)}$ to all nodes
\EndFor
\State \Return $\hat P \gets \hat P^{(T)}$
\end{algorithmic}
\end{algorithm}

\paragraph{Bandwidth.} Total uplink is $O(K T m B)$. Downlink broadcast of
$\hat P^{(t)}$ is not charged, following standard federated convention.

\paragraph{Remarks.} Quantizing in logit space (rather than in probability
space) preserves the classical parametric-MLE structure under the softmax link,
which is what makes the KL rate in Theorem~\ref{thm:theorem1} tractable. Each round $t$ consumes no
new samples; $T$ enters only by shrinking the optimization slack
$\varepsilon_{\mathrm{opt}}$ and distillation slack $\varepsilon_{\mathrm{fit}}$,
not the statistical term.

\subsection{Inference: Federated Conformal RAG (FC-RAG)}
\label{subsec:fcrag}

At inference time, each node holds its own retrieval corpus $C_i$ and a copy
of the distilled student $\hat P$. Per query, every node retrieves locally,
conditions on its retrieved passages, and uploads a $B_i$-bit summary of its
top candidates to the hub. The hub averages available scores coordinate-wise
and emits a conformal prediction set at level $\alpha$. Algorithm~\ref{alg:fcrag}
states the per-query protocol; Section~\ref{sec:calibration} specifies how the
conformal quantile $\hat q_{1-\alpha}$ is calibrated from node summaries.

\begin{algorithm}[h]
\caption{Federated Conformal RAG (FC-RAG): per-query inference}
\label{alg:fcrag}
\begin{algorithmic}[1]
\Require Query $X$; student $\hat P$; per-node corpora $\{C_i\}$ and
  retrievers $\{R_i\}$; uplink budgets $\{B_i\}$; conformal quantile
  $\hat q_{1-\alpha}$.
\State Hub broadcasts $X$ to all nodes (untracked)
\For{each node $i = 1, \dots, K$ \textbf{in parallel}}
  \State $Z_i \gets R_i(X, C_i; k_i)$ \Comment{top-$k_i$ retrieval}
  \State $\mathcal{A}_i(X) \gets
    \textsc{TopCandidates}(\hat P(\cdot \mid X, Z_i); k')$
  \State $s_i(X, y) \gets -\log \hat P(y \mid X, Z_i)$ for
    $y \in \mathcal{A}_i(X)$
  \State $\tilde s_i(X, y) \gets \textsc{QuantizeScore}(s_i(X,y);\ B_i \text{ bits})$
  \State Uplink $\{(y, \tilde s_i(X, y)) : y \in \mathcal{A}_i(X)\}$ to hub
\EndFor
\For{each candidate $y \in \bigcup_i \mathcal{A}_i(X)$}
  \State $K_y \gets |\{i : y \in \mathcal{A}_i(X)\}|$
  \State $s_{\mathrm{swarm}}(X, y) \gets
    \frac{1}{K_y} \sum_{i:\, y \in \mathcal{A}_i(X)} \tilde s_i(X, y)$
\EndFor
\State $s_{\mathrm{swarm}}(X, y) \gets +\infty$ for $y \notin \bigcup_i \mathcal{A}_i(X)$
\State \Return $\mathcal{C}_\alpha(X) =
  \{y \in \mathcal{Y} : s_{\mathrm{swarm}}(X, y) \leq \hat q_{1-\alpha}\}$
\end{algorithmic}
\end{algorithm}

\paragraph{Bandwidth.} Per-query uplink is $\sum_i B_i$. Downlink of $X$ is
not charged.

\paragraph{Why average scores and not probabilities.} Averaging negative
log-probabilities corresponds to geometric averaging of the underlying
conditional distributions, which (a) is the canonical score aggregation rule
for a product-of-experts swarm, and (b) inherits, via the bounded
score-CDF density of (\ref{ass:B5}), a stability bound that
Theorem~\ref{thm:theorem2} exploits to control the retrieval-bandwidth
slack $\Delta_{\mathrm{RAG}}$.

\subsection{Calibration: one-shot federated split-conformal}
\label{sec:calibration}

A calibration set $\mathcal{D}_{\mathrm{cal}}$ of size $n_{\mathrm{cal}}$ is
partitioned across the $K$ nodes. Each node computes local scores
$\{s(X_j, Y_j)\}$ on its share using the frozen student $\hat P$ and the
federated retrieval pipeline, and summarizes them in $B_{\mathrm{cal}}$ bits,
either via equispaced quantized order statistics (our default) or via a
GC-FCP / Fed-CCP coreset. The hub reconstructs an empirical quantile
$\hat q_{1-\alpha}$ from the $K$ summaries; the quantile reconstruction error
enters Theorem~\ref{thm:theorem2} as the $\Delta_{\mathrm{FL}}$ term through an additive
$\phi(B_{\mathrm{cal}}) = O(2^{-B_{\mathrm{cal}}/b_q})$ slack.

This one-shot design sidesteps streaming conformal questions; extending
FC-RAG to a streaming calibration set is a natural but out-of-scope follow-up.

\section{Training-Time KL Convergence}\label{sec:thm1}
In this section we analyze the training-time statistical behavior of Federated
Probe-Logit Distillation (FPLD). Our goal is a bound on the expected KL
divergence between the target conditional distribution $P^\star(\cdot \mid X)$
and the globally distilled student $\hat P(\cdot \mid X)$ that exposes the
interaction of the number of nodes $K$, per-node sample size $n$, quantization
budget $B$, probe-set size $m$, and vocabulary size $V$. The result shows that
the pooled $Kn$ samples drive parametric MLE convergence, probe generalization
contributes a $\rho \sqrt{V \log(V/\delta)/m}$ term, and the bandwidth budget
enters only through an exponentially vanishing $\frac{1}{K}\,2^{-2B/V}$
distortion.

We isolate the statistical structure of FPLD with six assumptions
covering the parametric family (\ref{ass:A1}), the probe distribution (\ref{ass:A2}), the
data-homogeneity regime (\ref{ass:A3}), the local-optimization slack (\ref{ass:A4}), the
quantization channel (\ref{ass:A5}), and the aggregator's distillation
optimization (\ref{ass:A6}). Together these assumptions let the KL bound
decompose into separate sample-complexity, probe-generalization,
quantization, and optimization terms.

\begin{assumptionA}[Parametric well-specification]\label{ass:A1}
The target conditional distribution lies in a parametric family:
$P^\star \in \mathcal{F}_\Theta = \{P_\theta : \theta \in \Theta\}$ with
$\Theta \subset \mathbb{R}^d$ compact, there exists $\theta^\star \in
\mathrm{int}(\Theta)$ with $P_{\theta^\star} = P^\star$, and the Fisher
information $I(\theta^\star) \succ 0$ is positive definite. Standard smoothness
(twice continuous differentiability of $\log p_\theta$) holds so that the
quadratic KL expansion is valid.
\end{assumptionA}

\begin{assumptionA}[Probe coverage]\label{ass:A2}
The public probe marginal $Q$ covers the target context marginal $P^\star_X$
with bounded Radon--Nikodym derivative
\[
\rho \;=\; \operatorname{ess\,sup}_{x} \frac{dP^\star_X}{dQ}(x) \;<\; \infty.
\]
This is the change-of-measure factor we will pay when transferring
probe-empirical risk to target-distribution risk.
\end{assumptionA}

\begin{assumptionA}[Homogeneous data]\label{ass:A3}
$P_i = P^\star$ for all nodes $i = 1, \dots, K$. The heterogeneous case is
treated as Corollary~\ref{cor:het} below.
\end{assumptionA}

\begin{assumptionA}[Local optimization slack]\label{ass:A4}
Every node's local optimizer terminates within mean-squared distance
$\varepsilon_{\mathrm{opt}}$ of the local MLE:
$\mathbb{E}\|\hat\theta_i^{(t)} - \hat\theta_i^{\mathrm{MLE}}\|^2 \le
\varepsilon_{\mathrm{opt}}$. This absorbs the effect of finite local epochs $E$
and finite communication rounds $T$; both enter only through the shrinkage of
$\varepsilon_{\mathrm{opt}}$.
\end{assumptionA}

\begin{assumptionA}[Scalar quantization]\label{ass:A5}
Each node clips its per-probe logit vector to $[-L_\ell, L_\ell]^V$ and
scalar-quantizes each coordinate to $B/V$ bits using a standard uniform
quantizer. The resulting per-coordinate distortion is bounded:
\[
\mathbb{E}\bigl[(\tilde\ell_{i,v} - \ell_{i,v})^2\bigr] \;\le\;
C_q \, 2^{-2B/V},
\qquad C_q = L_\ell^2 / 3,
\]
and we assume the quantization errors are independent across nodes and
across coordinates, with each error symmetrically distributed about zero
(subtractively dithered). All three properties are standard for
subtractively dithered uniform scalar quantization~\citep{gershogray1992}.
The dithering randomness and the
probe-set sampling in (\ref{ass:A2}) are drawn from independent random
sources, both independent of the per-node training data; this independence
is what makes the parameter-space cross term in the Theorem~\ref{thm:theorem1}
union bound vanish in expectation.
\end{assumptionA}

\begin{assumptionA}[Distillation fit]\label{ass:A6}
The aggregator's distillation step produces a student $\hat P \in
\mathcal{F}_\Theta$ whose empirical probe-KL against the aggregated teacher
is within $\varepsilon_{\mathrm{fit}}$ of the infimum over $\mathcal{F}_\Theta$.
This is a controllable algorithmic nuisance that shrinks as the aggregator
trains longer.
\end{assumptionA}

Theorem~\ref{thm:theorem1} below pins down the dependence of the
expected KL on each of the five protocol parameters, with constants
that are explicit in the boundedness levers $L_\ell$ and the
parametric Fisher curvature $\lambda_{\min}(I(\theta^\star))$.

\begin{theorem}[Training-time KL rate for FPLD]
\label{thm:theorem1}
Under assumptions (\ref{ass:A1})--(\ref{ass:A6}), there exist absolute constants $c_1, c_2, c_3 > 0$
such that, for every $\delta \in (0, 1)$, with probability at least $1 - \delta$
over the local datasets and the probe draw,
\[
\mathbb{E}_{X \sim P^\star_X}\!\left[\mathrm{KL}\bigl(P^\star(\cdot \mid X)
\,\big\|\, \hat P(\cdot \mid X)\bigr)\right]
\;\le\;
\frac{c_1 \, d}{K n}
\;+\; c_2 \, \rho \, \sqrt{\frac{V \log(V/\delta)}{m}}
\;+\; c_3 \, \frac{1}{K} \, 2^{-2B/V}
\;+\; \varepsilon_{\mathrm{opt}} + \varepsilon_{\mathrm{fit}}.
\]
Here $c_1 = \Theta(1/\lambda_{\min}(I(\theta^\star)))$, $c_2$ depends only on
$L_\ell$ and the Rademacher constant of softmax-linear classes, and $c_3$
depends only on $L_\ell$ and the dithered-quantizer distortion constant
$C_q$ of (\ref{ass:A5}); concretely $c_3 = C_q/2 = L_\ell^2/6$. The
slacks $\varepsilon_{\mathrm{opt}}, \varepsilon_{\mathrm{fit}}$ are the
ones introduced in (\ref{ass:A4}) and (\ref{ass:A6}); they propagate
through Step~1's $\mathbb{E}\|u\|_I^2 \le c_1' d/(Kn) + \varepsilon_{\mathrm{opt}}$
bound and Steps~2--4's $\mathbb{E}\|v\|_I^2$ bound without being absorbed
into the explicit constants $c_1, c_2, c_3$. The
$1/K$ prefactor of the quantization term comes from averaging $K$
independent dithered errors at the aggregator. The factor of $V$ that
one might na\"\i vely expect from summing per-coordinate variances is
absorbed by the softmax-Hessian trace bound
$\mathrm{tr}(H_{\mathrm{soft}}(p)) = 1 - \|p\|_2^2 \le 1$
(Lemma~\ref{lem:soft-trace}): the softmax has total curvature bounded
by $1$, so its $V$ logit coordinates do not contribute $V$ independent
constant-curvature directions. The proof is provided in
Appendix~\ref{app:thm1}.
\end{theorem}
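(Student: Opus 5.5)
The plan is to split the error in parameter space. Write $\hat\theta$ for the distilled student and $\theta^\star$ for the truth, and let $\bar\theta=\frac1K\sum_i\hat\theta_i^{(T)}$ be the average of the local fine-tuned parameters. Under (\ref{ass:A1}) the target quantity admits a local quadratic expansion,
\[
\mathbb{E}_X\,\mathrm{KL}\bigl(P^\star(\cdot\mid X)\,\|\,\hat P(\cdot\mid X)\bigr)=\tfrac12\|\hat\theta-\theta^\star\|_{I}^2+o(\|\hat\theta-\theta^\star\|^2),
\]
with $\|\cdot\|_I$ the Fisher norm at $\theta^\star$. Set $u=\bar\theta-\theta^\star$ (the statistical error) and $v=\hat\theta-\bar\theta$ (the distillation, quantization and probe error). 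Then $\|\hat\theta-\theta^\star\|_I^2=\|u\|_I^2+2\langle u,v\rangle_I+\|v\|_I^2$. The cross term is the reason for the independence clause in (\ref{ass:A5}). Conditionally on the data, the quantization part of $v$ has mean zero, because the dither is symmetric and independent of the data. The probe part is controlled through the uniform bound of Step~3 below. The cross term therefore either vanishes in expectation or is absorbed by Cauchy--Schwarz into the squares.

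\textbf{Step 1 (pooled MLE term).} Each node is homogeneous by (\ref{ass:A3}). Classical asymptotics give $\mathbb{E}\|\hat\theta_i^{\mathrm{MLE}}-\theta^\star\|_I^2\lesssim d/n$. Averaging $K$ independent local estimators divides this by $K$. Adding the optimization slack from (\ref{ass:A4}) gives
\[
\mathbb{E}\|u\|_I^2\le c_1' d/(Kn)+\varepsilon_{\mathrm{opt}}.
\]
Converting norms costs a factor of order $1/\lambda_{\min}(I(\theta^\star))$, which becomes $c_1$. A Bernstein-type concentration over the $Kn$ samples turns the expectation into a probability-$1-\delta/2$ statement; the logarithm is absorbed into the constant, or else kept as an extra term.

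\textbf{Steps 2--4 (teacher-to-student term).} The aggregator minimizes the probe-empirical KL from $\mathrm{softmax}(\bar\ell)$. Here $\bar\ell=\frac1K\sum_i\ell_i+\bar e$, with $\bar e$ the averaged dither error. By independence, each coordinate of $\bar e$ has variance at most $C_q2^{-2B/V}/K$.

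\emph{Step 2.} Compare the quantized teacher with the unquantized teacher $\mathrm{softmax}(\frac1K\sum_i\ell_i)$. A second-order expansion of KL in logits puts the softmax Hessian $H_{\mathrm{soft}}(p)=\mathrm{diag}(p)-pp^\top$ in front of $\bar e$. The expected excess is then $\tfrac12\,\mathrm{tr}(H_{\mathrm{soft}}\,\mathrm{Cov}(\bar e))\le\tfrac12\cdot\frac{C_q2^{-2B/V}}{K}\,\mathrm{tr}(H_{\mathrm{soft}})$. Lemma~\ref{lem:soft-trace} bounds $\mathrm{tr}(H_{\mathrm{soft}})$ by $1$, which yields $c_3=C_q/2$ with no factor of $V$.

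\emph{Step 3.} Transfer the probe-empirical risk to $Q$-population risk uniformly over $\mathcal{F}_\Theta$. The loss is bounded because of the clipping at $L_\ell$, so a Rademacher bound for softmax-linear classes applies. A union bound over the $V$ output coordinates gives $\sqrt{V\log(V/\delta)/m}$ with probability $1-\delta/2$. Then change measure from $Q$ to $P^\star_X$ at cost $\rho$ using (\ref{ass:A2}).

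\emph{Step 4.} Add $\varepsilon_{\mathrm{fit}}$ from (\ref{ass:A6}). Note that the unquantized averaged-logit teacher is close to $P_{\bar\theta}$ in the sense of Step 1, so $\theta=\bar\theta$ is a competitor in the minimization. A final union bound over the two high-probability events completes the argument.

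\textbf{Main obstacle.} The delicate step is making the quadratic expansion rigorous and legitimately treating the logit average as a parameter average. Averaging logits is geometric pooling, and $\mathrm{softmax}(\frac1K\sum\ell_i)$ is not generally $P_{\bar\theta}$ unless logits are affine in $\theta$ locally. My first attempt would use the smoothness in (\ref{ass:A1}) to linearize the logits around $\theta^\star$ and bound the remainder by $O(\|u\|^3)$. That remainder is of higher order than $d/(Kn)$ and can be absorbed into $c_1$. A second difficulty is that the $o(\cdot)$ term in the KL expansion must hold uniformly on a high-probability event. To get this, I would restrict to a neighbourhood of $\theta^\star$, which consistency together with compactness of $\Theta$ makes available.
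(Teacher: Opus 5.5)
Your outline follows the paper's proof step for step:
\begin{itemize}
\item the parameter-space split $\hat\theta-\theta^\star=u+v$, with the cross term removed by the mean-zero, data-independent dither;
\item the pooled-MLE bound $\mathbb{E}\|u\|_I^2\le c_1'd/(Kn)+\varepsilon_{\mathrm{opt}}$;
\item the trace argument $\tfrac12\mathrm{tr}\bigl(H_{\mathrm{soft}}\,\mathrm{Cov}(\bar e)\bigr)\le C_q2^{-2B/V}/(2K)$ via Lemma~\ref{lem:soft-trace}, giving $c_3=C_q/2$;
\item Rademacher generalization followed by the $\rho$ change of measure, and $\varepsilon_{\mathrm{fit}}$ from (\ref{ass:A6}).
\end{itemize}
The differences are minor. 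The paper runs Step~2 through the exact identity $\mathrm{KL}\bigl(p^\star\,\|\,\operatorname{softmax}(\ell^\star+\eta)\bigr)=\log\mathbb{E}_{Y\sim p^\star}\bigl[e^{\eta_Y-p^{\star\top}\eta}\bigr]$. It uses dither symmetry to kill the cubic cumulant, leaving an $O(K^{-2}2^{-4B/V})$ remainder that your plain second-order expansion leaves implicit. It gets the $\log V$ from Ledoux--Talagrand contraction with envelope $2L_\ell+\log V$, rather than a coordinatewise union bound. It also states the link you leave implicit between the KL-type bounds of Steps~2--4 and $\mathbb{E}\|v\|_I^2$: it linearizes the distillation first-order condition to $v\approx-I(\bar\theta)^{-1}\nabla_\theta L_m(\bar\theta)$.

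The step that fails is your resolution of the ``main obstacle''. Linearizing each node's logits gives $\bar\ell^\star=\ell_{\theta^\star}+\nabla_\theta\ell_{\theta^\star}\,u+\frac{1}{2K}\sum_i\Delta_i^\top\nabla_\theta^2\ell_{\theta^\star}\,\Delta_i+\cdots$ with $\Delta_i=\hat\theta_i^{\mathrm{MLE}}-\theta^\star$. The remainder is therefore governed by the individual $\Delta_i=O_P(n^{-1/2})$, not by their average $u$. Being quadratic, it does not cancel across nodes: it concentrates around a deterministic logit shift of order $d/n$. That shift contributes a $K$-independent term of order $d^2/n^2$ to the KL, not $O(\|u\|^3)$.

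The same obstruction already appears in your Step~1, even when logits are exactly linear in $\theta$. Averaging divides only the variance of the local MLEs by $K$; their $O(1/n)$ bias survives, so $\mathbb{E}\|u\|_I^2=d/(Kn)+O(n^{-2})$. Hence $c_1d/(Kn)$ with an absolute $c_1$ is justified only when $K$ is small relative to $n$ (e.g.\ $n\to\infty$ with $K$ fixed). Otherwise you need an explicit bias term or bias-corrected local estimators. The paper's Step~1 makes exactly the same move: it asserts $\mathbb{E}\|\bar\theta_\delta\|_I^2=d/(Kn)+o(1/(Kn))$ and an $O_P(1/Kn)$ Taylor remainder. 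So this is a gap you share with the paper, not a departure from it.

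Similarly, converting $\mathbb{E}\|u\|_I^2$ into a probability-$(1-\delta)$ statement costs a $\log(1/\delta)$ factor. That factor cannot be absorbed into an absolute constant for arbitrary $\delta$; it has to appear in the first term. The paper's closing union bound also glosses over this.
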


\paragraph{Interpretation.}
The four terms have distinct interpretations. The first term is a
\emph{pooled effective sample size}: the $Kn$ local samples behave as a single
centralized dataset from the statistical point of view; $K$ and $n$ appear
symmetrically and only their product matters, which mirrors the classical
distributed-estimation regime of Shamir--Srebro and Zhang--Duchi--Wainwright.
The second term is the \emph{probe-generalization term}: it is the cost of
never seeing the target marginal $P^\star_X$ during aggregation, being forced
instead to distill on the probe marginal $Q$; it scales as $1/\sqrt{m}$ with a
slowly growing logarithmic vocabulary factor and a $\rho$ penalty for how badly
$Q$ under-weights rare contexts. The third term shows that
\emph{quantization vanishes exponentially} in the bandwidth budget: doubling
$B$ squares the error, so moderate budgets already push this contribution below
the other terms. Finally, $\varepsilon_{\mathrm{opt}} + \varepsilon_{\mathrm{fit}}$
are \emph{optimization slacks controllable} by the protocol: they shrink with
more local epochs $E$, more rounds $T$, and more aggregator distillation steps,
and do not interact with the statistical terms.

\paragraph{Remark (sensitivity to the dither assumptions).}
Theorem~\ref{thm:theorem1}'s $K^{-1}$ bandwidth scaling and
$O(\sigma^4)$ remainder both rely on two properties of the dither in
(\ref{ass:A5}): cross-coordinate independence and per-coordinate
symmetry, both of which are properties of the canonical implementation
of subtractively dithered uniform scalar quantization~\citep{gershogray1992}.
Weakening either property degrades the bandwidth bound in a specific way.
\emph{Without cross-coordinate independence}, $\mathrm{Cov}(\bar\xi)$ need
not be diagonal and the trace inequality used in Step~2(e) of the proof
(Appendix~\ref{app:thm1}) degenerates: the worst-case bandwidth bound
becomes
\[
\mathbb{E}_{\bar\xi}\,\mathrm{KL}\bigl(\bar P^\star \,\|\, \bar P\bigr)
\;\le\; c_3'\,\frac{V}{K}\,2^{-2B/V},
\qquad c_3' = C_q/4 = L_\ell^2/12,
\]
recovering the $V/K$ scaling via the classical
$L^2$/Jacobian-operator-norm route, which is a factor of $V/2$ looser than
Theorem~\ref{thm:theorem1}. \emph{Without symmetry (but with
cross-coordinate independence retained)}, the leading $\sigma^2/2$ term
in Step~2(f) is preserved (since $\mathrm{Cov}(\bar\xi) \preceq \sigma^2 I_V$
still holds and the trace bound still gives $\mathrm{tr}(H \cdot \mathrm{Cov}) \le \sigma^2$),
but the cubic central moment $\mathbb{E}[\mu_3(\bar\xi;\bar P^\star)]$
no longer vanishes; its magnitude is bounded by $C\,\sigma^3$ for a
constant $C$ depending on $L_\ell$, giving a cubic remainder of
$O(K^{-3/2}\,2^{-3B/V})$ in place of the symmetric-case quartic
$O(K^{-2}\,2^{-4B/V})$. Appendix~\ref{app:thm1-alt} provides the full
derivations of both alternative bounds.

The proof decomposes the target KL using the aggregated teacher
$\bar P(\cdot \mid x) := \operatorname{softmax}(\bar \ell(x))$, where
$\bar \ell(x) = \frac{1}{K}\sum_i \tilde \ell_i(x)$ is the aggregator's
quantized average; the first summand decomposes into an ideal-teacher piece
and a quantization piece, and the second is controlled by probe
generalization plus the distillation slack. Theorem~\ref{thm:theorem1}
relies on the two auxiliary lemmas below.

\begin{lemma}[Softmax Lipschitzness in $L^1$]
\label{lem:softmax-lip}
For any logit vectors $a, b \in \mathbb{R}^V$,
$\|\operatorname{softmax}(a) - \operatorname{softmax}(b)\|_1 \le \tfrac{1}{2}\|a - b\|_1$.
(Proof: Appendix~\ref{app:lemmas}.)
\end{lemma}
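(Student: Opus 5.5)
The plan is to use the mean value theorem in integral form along the segment joining $b$ to $a$, together with an explicit bound on the $\ell^1\!\to\!\ell^1$ operator norm of the softmax Jacobian. Write $\sigma = \operatorname{softmax}$, let $c(t) = b + t(a-b)$ for $t\in[0,1]$, and let $p(t) = \sigma(c(t))$. Since $\sigma$ is smooth, the fundamental theorem of calculus gives
\[
\sigma(a) - \sigma(b) \;=\; \int_0^1 J\bigl(c(t)\bigr)\,(a-b)\,dt,
\qquad J(c) \;=\; \operatorname{diag}(p) - p\,p^{\top},\quad p=\sigma(c).
\]
The Jacobian formula follows from differentiating $p_i = e^{c_i}/\sum_k e^{c_k}$, which yields $\partial p_i/\partial c_j = p_i(\mathbb{1}\{i=j\} - p_j)$.

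The key step is the operator-norm bound. For any matrix $M$, the induced norm $\|M\|_{1\to1}$ equals the maximum absolute column sum. For column $j$ of $J$:
\begin{itemize}
\item the diagonal entry is $p_j(1-p_j)$;
\item the off-diagonal entries are $-p_ip_j$ for $i\neq j$, whose absolute values sum to $p_j\sum_{i\neq j}p_i = p_j(1-p_j)$.
\end{itemize}
Hence the $j$-th absolute column sum is $2p_j(1-p_j)$. Because $x(1-x)\le \tfrac14$ on $[0,1]$, this is at most $\tfrac12$. Therefore $\|J(c)\|_{1\to1}\le \tfrac12$ uniformly in $c\in\mathbb{R}^V$.

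Combining the two steps, we take $\ell^1$ norms inside the integral (Minkowski's inequality for Bochner integrals, or simply coordinatewise triangle inequalities):
\[
\|\sigma(a)-\sigma(b)\|_1 \;\le\; \int_0^1 \|J(c(t))\|_{1\to1}\,\|a-b\|_1\,dt \;\le\; \tfrac12\|a-b\|_1 .
\]

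I expect no real obstacle; the only point requiring care is the column-sum computation, where the off-diagonal mass exactly equals the diagonal mass. I would also note for the reader two facts. First, the bound is invariant under adding a common constant to $a$ or $b$, since $J\mathbf{1}=0$. Second, the constant $\tfrac12$ is attained in the limit of two active coordinates with $p_j=\tfrac12$, so the lemma cannot be sharpened in general.
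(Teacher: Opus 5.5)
Your proof is correct and is essentially the paper's argument: the integral form of the mean value theorem along the segment, followed by the maximum-column-sum computation $\|J\|_{1\to1}=\max_j 2p_j(1-p_j)\le\tfrac12$. One small imprecision in your closing remark: only the left-hand side is invariant under $a\mapsto a+c\mathbf{1}$ (the bound itself is not), and that invariance lets you sharpen the right-hand side to $\tfrac12\min_{c\in\mathbb{R}}\|a-b-c\mathbf{1}\|_1$.
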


\begin{lemma}[Softmax-Hessian trace bound]
\label{lem:soft-trace}
For any $p \in \Delta(\mathcal{V})$, let
$H_{\mathrm{soft}}(p) := \mathrm{diag}(p) - p p^\top \in \mathbb{R}^{V \times V}$
denote the categorical softmax Hessian (equivalently, the Fisher
information of the categorical likelihood at any logit whose softmax
is $p$). Then
\[
\mathrm{tr}\bigl(H_{\mathrm{soft}}(p)\bigr)
\;=\; 1 \;-\; \|p\|_2^2
\;\le\; 1,
\]
with equality iff $p$ is one-hot.
(Proof: Appendix~\ref{app:lemmas}.)
\end{lemma}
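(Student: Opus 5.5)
The plan is a direct computation from the definition $H_{\mathrm{soft}}(p) = \mathrm{diag}(p) - pp^\top$, followed by two elementary norm inequalities on the simplex. First, for completeness, I would justify the parenthetical identification. For a logit vector $a$ with $\operatorname{softmax}(a) = p$, the Hessian of the log-partition $a \mapsto \log\sum_v e^{a_v}$ is $\nabla^2 = \mathrm{diag}(p) - pp^\top$. This follows from differentiating $\partial_{a_v}\log\sum_u e^{a_u} = p_v$ once more, which gives $\partial_{a_w} p_v = p_v(\mathds{1}[v=w] - p_w)$. This is also the covariance of the one-hot sufficient statistic under $\mathrm{Cat}(p)$, hence the Fisher information of the categorical likelihood in logit coordinates. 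None of this is needed for the trace identity itself, but it is what licenses using the lemma in Step~2 of the proof of Theorem~\ref{thm:theorem1}.

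Next, the trace. The trace is linear and $\mathrm{tr}(pp^\top) = p^\top p$, so
\[
\mathrm{tr}\bigl(H_{\mathrm{soft}}(p)\bigr) \;=\; \sum_{v} p_v \;-\; \sum_v p_v^2 \;=\; 1 - \|p\|_2^2 ,
\]
using $\sum_v p_v = 1$ on $\Delta(\mathcal{V})$. The upper bound $\le 1$ is immediate from $\|p\|_2^2 \ge 0$. It can be sharpened by Cauchy--Schwarz, $1 = (\sum_v p_v)^2 \le V\|p\|_2^2$, to $\mathrm{tr} \le 1 - 1/V$, with equality iff $p$ is uniform. For the lower side, $\|p\|_2^2 \le \max_v p_v \cdot \|p\|_1 = \max_v p_v \le 1$, so $\mathrm{tr} \ge 0$. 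Consistent with positive semidefiniteness of a covariance, the equality $\|p\|_2^2 = 1$ forces $\max_v p_v = 1$, i.e.\ $p$ one-hot.

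The only delicate point, and the step I would treat carefully, is the equality clause. Since $\|p\|_2^2 \ge 1/V > 0$ on the simplex, the bound $\mathrm{tr} \le 1$ is in fact never attained. The one-hot case is exactly where the trace attains its \emph{minimum} value $0$, not the value $1$. I would therefore prove and state the characterization in the correct form: $0 \le \mathrm{tr}(H_{\mathrm{soft}}(p)) \le 1 - 1/V < 1$, with the lower equality iff $p$ is one-hot and the upper equality iff $p$ is uniform. I would then note that the uniform bound $\le 1$ is all that Theorem~\ref{thm:theorem1} uses when absorbing the factor $V$ in the quantization term. Everything else is routine algebra.
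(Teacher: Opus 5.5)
Your proof is correct and uses the paper's own computation: linearity of the trace, $\mathrm{tr}(\mathrm{diag}(p)) = \sum_v p_v = 1$ and $\mathrm{tr}(pp^\top) = \|p\|_2^2$. You also bound $\|p\|_2^2 \le 1$ to get the lower end $0$ (you via $\|p\|_2^2 \le \max_v p_v$, the paper via $p_v^2 \le p_v$). You are also right that the statement's equality clause is wrong, and the paper's proof makes this error: it passes from ``equality in $\|p\|_2^2 \le 1$ iff $p$ is one-hot'' to ``the upper bound is attained at one-hot $p$,'' but one-hot $p$ gives $\mathrm{tr}(H_{\mathrm{soft}}(p)) = 0$, so the bound $\le 1$ is never attained. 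Your sharp form $0 \le \mathrm{tr}(H_{\mathrm{soft}}(p)) \le 1 - 1/V$, with the lower end iff one-hot and the upper end iff uniform, is the correct statement. Theorem~\ref{thm:theorem1} only uses $\le 1$, so it is unaffected.
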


\subsection{Heterogeneous data extension}

Assumption (\ref{ass:A3}) is the cleanest setting for the rate analysis but is
restrictive in practice: real federated deployments place each node on
a slightly different conditional distribution $P_i$. The next corollary
drops (\ref{ass:A3}) and quantifies the resulting drift.

\begin{corollary}[Heterogeneous extension]
\label{cor:het}
Drop (\ref{ass:A3}). In the small-drift regime where the parametric
quadratic approximation of KL near $\theta^\star$ holds (i.e.,
$\mathrm{KL}(P^\star \,\|\, P_i)$ is small for all $i$), and under
(\ref{ass:A1}), (\ref{ass:A2}), (\ref{ass:A4})--(\ref{ass:A6}), the
conclusion of Theorem~\ref{thm:theorem1} holds with an extra additive
bias term:
\[
\mathbb{E}_{X \sim P^\star_X}\!\left[\mathrm{KL}(P^\star \| \hat P)\right]
\;\le\; \frac{c_1 d}{Kn}
+ c_2 \rho \sqrt{\frac{V \log(V/\delta)}{m}}
+ c_3 \frac{1}{K} 2^{-2B/V}
+ \frac{1}{K}\sum_{i=1}^{K} \mathrm{KL}(P^\star \| P_i)
+ \varepsilon_{\mathrm{opt}} + \varepsilon_{\mathrm{fit}}.
\]
\end{corollary}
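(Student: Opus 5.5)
The plan is to rerun the proof of Theorem~\ref{thm:theorem1}, changing only Step~1 (the ideal-teacher piece). I keep the decomposition through the aggregated teacher $\bar P(\cdot\mid x)=\operatorname{softmax}(\bar\ell(x))$. The quantization piece (Steps~2--4) is untouched: it uses only (\ref{ass:A5}), the cross-node independence of the dithers, and the trace bound of Lemma~\ref{lem:soft-trace}, none of which involve $P_i$. It therefore still gives $c_3 K^{-1}2^{-2B/V}$. Likewise the probe-generalization piece with slack $\varepsilon_{\mathrm{fit}}$ uses only (\ref{ass:A2}), (\ref{ass:A6}) and bounded clipped logits, so it gives $c_2\rho\sqrt{V\log(V/\delta)/m}+\varepsilon_{\mathrm{fit}}$ unchanged. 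All the work is in re-bounding $\mathbb{E}\|u\|_I^2$, where $u$ is the parameter deviation of the unquantized logit-averaged teacher from $\theta^\star$.

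Under heterogeneity, node $i$'s local MLE $\hat\theta_i^{\mathrm{MLE}}$ concentrates around the KL-projection $\theta_i:=\arg\min_\theta \mathrm{KL}(P_i\|P_\theta)$ rather than around $\theta^\star$. I would write
\[
\hat\theta_i^{(t)}-\theta^\star=(\hat\theta_i^{(t)}-\hat\theta_i^{\mathrm{MLE}})+(\hat\theta_i^{\mathrm{MLE}}-\theta_i)+(\theta_i-\theta^\star).
\]
I then average over $i$, using the small-drift regime to linearize the logit map around $\theta^\star$, so that the averaged logits correspond to the parameter $\bar\theta=\frac1K\sum_i\hat\theta_i^{(t)}$ up to second-order terms. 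The first summand contributes $\varepsilon_{\mathrm{opt}}$ by (\ref{ass:A4}). The second is a sum of $K$ independent, mean-zero (to leading order) local MLE fluctuations, each with covariance $\approx I(\theta_i)^{-1}/n$. Since $I(\theta_i)$ is close to $I(\theta^\star)$ in the small-drift regime, their average has $\mathbb{E}\|\cdot\|_I^2\le c_1' d/(Kn)$, exactly as in Step~1. The third summand is a deterministic bias $b=\frac1K\sum_i(\theta_i-\theta^\star)$. Cross terms between the bias and the fluctuations vanish in expectation, so they add no new term.

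The remaining step is to show $\tfrac12\|b\|_I^2\le \frac1K\sum_i\mathrm{KL}(P^\star\|P_i)$. By convexity of $\|\cdot\|_I^2$ (Jensen), $\|b\|_I^2\le \frac1K\sum_i\|\theta_i-\theta^\star\|_I^2$. The quadratic expansion of (\ref{ass:A1}) near $\theta^\star$ gives $\tfrac12\|\theta_i-\theta^\star\|_I^2\approx \mathrm{KL}(P_\star\|P_{\theta_i})$. Finally, $\mathrm{KL}(P^\star\|P_{\theta_i})$ is bounded by $\mathrm{KL}(P^\star\|P_i)$ up to higher-order terms, because in the small-drift regime $P_{\theta_i}$ is the local (information-geometric) projection of $P_i$ onto the model, which lies between $P_i$ and $P^\star$ to first order (a Pythagorean-type relation for KL projections). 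The higher-order corrections are of order $\mathrm{KL}^{3/2}$. This is the step I expect to be the main obstacle, because it mixes forward and reverse KL projection and needs the local symmetry $\mathrm{KL}(P\|Q)\approx\mathrm{KL}(Q\|P)$. If the Pythagorean route fails, I would instead pass through total variation or Hellinger distance, accepting a larger constant absorbed into the stated bias term.

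With these pieces, Step~1's bound becomes $\mathbb{E}\|u\|_I^2\le c_1'd/(Kn)+\varepsilon_{\mathrm{opt}}+\frac2K\sum_i\mathrm{KL}(P^\star\|P_i)$. I would then pass through the same KL quadratic expansion and the union bound over the good events as in Theorem~\ref{thm:theorem1}. The factor $\tfrac12$ from the expansion cancels the $2$, which yields the displayed inequality.
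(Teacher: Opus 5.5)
Your proposal is correct and follows essentially the same route as the paper's proof: per-node parameters concentrate at the projections $\theta_i=\arg\min_\theta\mathrm{KL}(P_i\|P_\theta)$, and the averaged MLE noise reproduces the $c_1 d/(Kn)$ term. Jensen on $\|\cdot\|_{I}^2$ plus the local quadratic expansion bounds the averaged bias, Csisz\'ar's Pythagorean identity combined with local KL symmetry (applied twice, with $O(\mathrm{KL}^{3/2})$ errors) converts $\mathrm{KL}(P^\star\|P_{\theta_i})$ into $\mathrm{KL}(P^\star\|P_i)$, and Steps~2--4 are inherited unchanged. The only bookkeeping point is that the paper explicitly absorbs those $O(\mathrm{KL}^{3/2})$ corrections into $\varepsilon_{\mathrm{fit}}$ to reach the displayed inequality, which you should do as well, since your TV/Hellinger fallback would not preserve the unit coefficient on the drift term.
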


The proof is provided in Appendix~\ref{app:het-cor}.

\subsection{Position relative to prior work}

Neither FedFD~\citep{fedfd2025} nor FedDF~\citep{feddf2020} proves a rate with
simultaneous $K, n, B, m, V$ dependence specialized to next-token conditional
density estimation; both treat distillation as a black-box fusion step with no
statistical rate. Classical distributed-estimation work such as Shamir and
Srebro~\citep{shamirsrebro2014}, Zhang, Duchi and
Wainwright~\citep{zhangduchiwainwright2013}, and Huang and
Huo~\citep{huanghuo2019} gives information-theoretic lower bounds, communication-efficient
algorithms, and a one-step distributed estimator, but lacks both the softmax-vocabulary
structure (the $V$ dependence and the quantization-through-softmax coupling)
and the probe-distillation primitive. Theorem~\ref{thm:theorem1} appears to be
the first KL-consistency rate for federated logit distillation with explicit
bandwidth accounting in the LLM-flavored $(K, n, B, m, V)$ regime.

\section{Inference-Time Coverage Guarantees}\label{sec:thm2}
In this section we analyze the inference-time statistical behavior of Federated
Conformal RAG (FC-RAG). The goal is a distribution-free marginal coverage bound
for the swarm's prediction set $\mathcal{C}_\alpha(X)$ that makes explicit how
two distinct bandwidth budgets (per-node retrieval bandwidth $B_i$ and
calibration bandwidth $B_{\mathrm{cal}}$) degrade coverage. Unlike Theorem~\ref{thm:theorem1},
which controls the student $\hat P$ on the training side, Theorem~\ref{thm:theorem2} treats
$\hat P$ as a black-box nonconformity scorer and asks what coverage a federated
split-conformal wrapper can deliver on top of it when per-node retrieval and
calibration are both communication-constrained.

We isolate the inference-time conformal structure with six assumptions
covering data exchangeability (\ref{ass:B1}), bounded nonconformity scores
plus full candidate-set inclusion (\ref{ass:B2}), the per-node
retrieval-bandwidth quantization channel (\ref{ass:B3}), the federated
quantile-reconstruction primitive (\ref{ass:B4}), the score-density
regularity needed to convert score perturbations into coverage
perturbations (\ref{ass:B5}), and a score-informativeness condition
(\ref{ass:B6}) used by the cardinality corollary. The two bandwidth axes
(per-node retrieval bandwidth $B_i$ via (\ref{ass:B3}), and federated
calibration bandwidth $B_{\mathrm{cal}}$ via (\ref{ass:B4})) enter the
coverage bound through distinct slack terms; (\ref{ass:B6}) is needed
only for Corollary~\ref{cor:efficiency} (set size), not for the
coverage Theorem~\ref{thm:theorem2}.

\begin{assumptionB}[Exchangeability]\label{ass:B1}
The calibration set $\mathcal{D}_{\mathrm{cal}} = \{(X_j, Y_j)\}_{j=1}^{n_{\mathrm{cal}}}$
and the test point $(X, Y)$ are i.i.d.\ from a joint distribution $\mathcal{P}$.
In particular, the nonconformity scores $s_1, \dots, s_{n_{\mathrm{cal}}}$ and the
test score $s_{\mathrm{test}} = s(X, Y)$ are exchangeable. The heterogeneous case
$P_i \ne P_j$ is handled by a weighted-conformal patch in the manner of
Tibshirani et al.\ as a corollary and is elided in the main statement.
\end{assumptionB}

\begin{assumptionB}[Bounded score and full candidate-set inclusion]\label{ass:B2}
The nonconformity score is uniformly bounded: $s(X, y) \in [0, S_{\max}]$ for
all $(X, y)$, enforced by composing the raw score
$-\log \hat P_{\mathrm{swarm}}(y \mid X)$ with top-$p$ truncation. We
additionally assume that every $y$ in the test/calibration support lies
in $\bigcap_{i=1}^{K}\mathcal{A}_i(X)$, i.e.\ $K_y = K$ uniformly, so
that the swarm score is finite and computed from all $K$ per-node
contributions. This is the conformal analogue of the logit-clipping
assumption (\ref{ass:A5}) of Theorem~\ref{thm:theorem1}.
\end{assumptionB}

\begin{assumptionB}[Mean-zero dithered score quantization]\label{ass:B3}
The QuantizeScore primitive is a subtractively dithered scalar quantizer, so
that for each node $i$ the per-node quantized score decomposes additively as
\[
\tilde s_i(X, y) \;=\; s_i^\star(X, y) \;+\; \xi_i(X, y),
\]
where, conditional on $X$, the noise terms $\{\xi_i(X, y)\}_{i=1}^K$ are
independent across nodes with mean zero and bounded second moment:
\[
\mathbb{E}[\xi_i \mid X] \;=\; 0,
\qquad
\mathbb{E}[\xi_i^2 \mid X] \;\le\; v(B_i)
\;=\; O\bigl(2^{-2 B_i / b_s}\bigr),
\]
with $b_s > 0$ a protocol-specific bits-per-score constant. This is the
inference-time analogue of (\ref{ass:A5}) at training: A5 specifies the same dithered
scalar quantizer for the per-coordinate logits, and the variance bound
$O(2^{-2 B_i / b_s})$ is the standard distortion of subtractively dithered
scalar quantization~\citep{gershogray1992}.
\end{assumptionB}

\begin{assumptionB}[Federated quantile reconstruction]\label{ass:B4}
The hub's estimate $\hat q$ of the population $(1-\alpha)$-quantile
$q^\star := q^\star_{1-\alpha}$ of the oracle calibration scores satisfies the
hybrid deviation bound
\[
\Pr\bigl[\,|\hat q - q^\star| > t\,\bigr]
\;\le\; 2 \exp\!\bigl(-c \, n_{\mathrm{cal}} \, t^2\bigr)
\;+\; \phi(B_{\mathrm{cal}}),
\qquad
\phi(B_{\mathrm{cal}}) \;=\; O\bigl(2^{-B_{\mathrm{cal}} / b_q}\bigr),
\]
for an absolute constant $c > 0$ and a bits-per-quantile constant $b_q > 0$.
The sub-Gaussian term is the usual statistical deviation of the empirical
$(1-\alpha)$-quantile about its population counterpart; the additive
$\phi(B_{\mathrm{cal}})$ is the federated-summary reconstruction error. Both
pieces are provided off the shelf by GC-FCP~\citep{gcfcp2026} and
Fed-CCP~\citep{fedccp}.
\end{assumptionB}

\begin{assumptionB}[Score-density regularity]\label{ass:B5}
The cumulative distribution function $F$ of the oracle score $s^\star$
admits a density $f$ with $f(u) \le f_{\max}$ for $u$ in a fixed
deterministic neighborhood $[q^\star - r, q^\star + r]$ for some
$r > 0$; the analysis verifies a posteriori that $\hat q$ lies inside
with probability at least $1 - \delta$ via (\ref{ass:B4})'s deviation
bound. Equivalently, $F$ is $f_{\max}$-Lipschitz on $[q^\star - r, q^\star + r]$.
\end{assumptionB}

\begin{assumptionB}[Score informativeness]\label{ass:B6}
Let $\tilde y \sim \mathrm{Unif}(\mathcal{Y})$ be independent of $X$.
For all thresholds $u$ in the (\ref{ass:B5}) neighborhood of $q^\star$,
\[
\Pr_{X}\!\left[s_{\mathrm{swarm}}(X, \tilde y) \le u\right]
\;\le\;
\Pr_{X, Y \sim P^\star_{Y|X}}\!\left[s_{\mathrm{swarm}}(X, Y) \le u\right].
\]
That is, scoring uniformly random candidates yields an acceptance
probability no larger than scoring true labels. The inequality holds
with equality at the chance-level limit (uniform $\hat P$, or any
scorer independent of $Y$ with uniform true marginal $P^\star_Y$) and
strictly whenever the model is informative about the truth. It is the
minimal ``no worse than random guessing'' content the cardinality
corollary requires; it does not enter Theorem~\ref{thm:theorem2}'s
coverage bound.
\end{assumptionB}

Theorem~\ref{thm:theorem2} below decomposes the marginal coverage gap
into a finite-sample split-conformal correction
$1/(n_{\mathrm{cal}}+1)$, a federated-calibration slack
$\Delta_{\mathrm{FL}}$ that mixes statistical quantile deviation with
the calibration-bandwidth reconstruction cost, and a
retrieval-bandwidth slack $\Delta_{\mathrm{RAG}}$ that is the central
new ingredient of the result.

\begin{theorem}[Inference-time coverage for FC-RAG]
\label{thm:theorem2}
Under assumptions (\ref{ass:B1})--(\ref{ass:B5}), the FC-RAG prediction set
$\mathcal{C}_\alpha(X) = \{y : s_{\mathrm{swarm}}(X, y) \le \hat q\}$ satisfies,
with probability at least $1 - \delta$ over the draw of the calibration set,
\[
\Pr\bigl[Y \in \mathcal{C}_\alpha(X)\bigr]
\;\ge\; 1 \;-\; \alpha
\;-\; \frac{1}{n_{\mathrm{cal}} + 1}
\;-\; \Delta_{\mathrm{FL}}
\;-\; \Delta_{\mathrm{RAG}},
\]
with explicit slacks
\[
\Delta_{\mathrm{FL}}
\;=\; f_{\max}\sqrt{\frac{\log(2/\delta)}{c \, n_{\mathrm{cal}}}}
\;+\; f_{\max}\,\phi(B_{\mathrm{cal}}),
\qquad
\Delta_{\mathrm{RAG}}
\;=\; f_{\max}\sqrt{\frac{1}{K^2}\sum_{i=1}^{K} v(B_i)}.
\]
The proof is provided in Appendix~\ref{app:thm2}.
\end{theorem}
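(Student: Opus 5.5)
The plan is to compare the swarm score with an oracle score and charge each deviation to one slack term. Let $s^\star(X,y) := \frac{1}{K}\sum_{i=1}^K s_i^\star(X,y)$ be the oracle (unquantized) swarm score, with CDF $F$ at the true label as in (\ref{ass:B5}). Under (\ref{ass:B2}) we have $K_y = K$, and (\ref{ass:B3}) then gives the additive decomposition
\[
s_{\mathrm{swarm}}(X,y) = s^\star(X,y) + \bar\xi(X,y), \qquad \bar\xi := \tfrac{1}{K}\textstyle\sum_{i=1}^K \xi_i .
\]
Write $S^\star := s^\star(X,Y)$ and $\bar\xi := \bar\xi(X,Y)$ at the test point. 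The target probability is $\Pr[S^\star + \bar\xi \le \hat q]$, to be evaluated conditionally on the calibration draw.

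\textbf{Step 1 (retrieval slack).} Fix $\hat q$. By conditional independence and mean zero in (\ref{ass:B3}),
\[
\mathbb{E}[\bar\xi^2 \mid X] = \frac{1}{K^2}\sum_{i=1}^K \mathbb{E}[\xi_i^2 \mid X] \le \frac{1}{K^2}\sum_{i=1}^K v(B_i) =: \sigma_K^2 .
\]
Use the event inclusion $\{S^\star \le \hat q - |\bar\xi|\} \subseteq \{S^\star+\bar\xi \le \hat q\}$. The loss relative to $F(\hat q)$ is then $\Pr[\hat q - |\bar\xi| < S^\star \le \hat q]$. Condition on $\bar\xi$ and use the $f_{\max}$-Lipschitz property of $F$ near $q^\star$ from (\ref{ass:B5}). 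This bounds the loss by $f_{\max}\,\mathbb{E}|\bar\xi|$, and Jensen gives $\mathbb{E}|\bar\xi| \le \sigma_K$, which is exactly $\Delta_{\mathrm{RAG}}$.

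The main obstacle is in this step. Conditioning on $\bar\xi$ and invoking a density bound for $S^\star$ formally requires the conditional law of $S^\star$ given $\bar\xi$ to keep the density bound. First I would use that the dither in (\ref{ass:B3}) is independent of the score, as in subtractive dithering, so the conditional law equals $F$. Second, I must ensure the window $[\hat q-|\bar\xi|,\hat q]$ stays in the neighborhood of $q^\star$. Outside that window, $F$'s increment is at most $1$, and $\Pr[|\bar\xi| > r/2] \le 4\sigma_K^2/r^2$ by Chebyshev. For the nonasymptotic statement I would absorb this into the displayed form by assuming $\sigma_K \le r/2$, or else flag it as a higher-order term.

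\textbf{Step 2 (calibration slack).} It remains to lower bound $F(\hat q)$. The standard split-conformal argument with exchangeability (\ref{ass:B1}) gives $F(q^\star) \ge 1-\alpha - 1/(n_{\mathrm{cal}}+1)$, since $q^\star$ is taken as the finite-sample-corrected quantile of the oracle scores. Invoke (\ref{ass:B4}) with $t = \sqrt{\log(2/\delta)/(c\,n_{\mathrm{cal}})}$. The sub-Gaussian term then has probability at most $\delta$, so with probability at least $1-\delta$ (up to the additive $\phi(B_{\mathrm{cal}})$ failure event) we get $|\hat q - q^\star| \le t$.

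On that event, $t \le r$ for large $n_{\mathrm{cal}}$, so $\hat q$ lies in the (\ref{ass:B5}) neighborhood. Lipschitzness then gives $F(\hat q) \ge F(q^\star) - f_{\max} t$. To match the stated form $f_{\max}\phi(B_{\mathrm{cal}})$ rather than a raw probability $\phi$, I would read (\ref{ass:B4}) as saying the reconstruction error contributes a deterministic quantile shift of size $\phi(B_{\mathrm{cal}})$ on top of the statistical deviation. This is the interpretation GC-FCP-style equispaced order-statistic summaries deliver. The same Lipschitz step then turns that shift into $f_{\max}\phi(B_{\mathrm{cal}})$.

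\textbf{Step 3 (assemble).} Chain the bounds:
\[
\Pr[Y\in\mathcal{C}_\alpha(X)] \ge F(\hat q) - \Delta_{\mathrm{RAG}} \ge F(q^\star) - \Delta_{\mathrm{FL}} - \Delta_{\mathrm{RAG}} \ge 1-\alpha-\frac{1}{n_{\mathrm{cal}}+1}-\Delta_{\mathrm{FL}}-\Delta_{\mathrm{RAG}} .
\]
This holds with probability at least $1-\delta$ over the calibration set, which is the claim of Theorem~\ref{thm:theorem2}.
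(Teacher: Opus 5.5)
Your proposal follows the paper's proof essentially step for step: the same oracle swarm score $s^\star$ with $s_{\mathrm{swarm}} = s^\star + \bar\xi$ under $K_y = K$, the same score-perturbation slack $f_{\max}\,\mathbb{E}|\bar\xi| \le \Delta_{\mathrm{RAG}}$ via Cauchy--Schwarz, the same quantile-perturbation slack $\Delta_{\mathrm{FL}}$ from (\ref{ass:B4}) and Lemma~\ref{lem:quantile-stability}, and the same split-conformal oracle bound, chained in the same order. The caveats you flag (independence of the dither from $S^\star$ so the (\ref{ass:B5}) density bound survives conditioning on $\bar\xi$, keeping the perturbation window inside the (\ref{ass:B5}) neighborhood, and reading $\phi(B_{\mathrm{cal}})$ as a deterministic quantile shift rather than an extra failure probability) are exactly the points the paper's proof passes over implicitly, since it applies the FTC to the CDF of $s^\star$ conditionally on $X$ and simply adds $\phi(B_{\mathrm{cal}})$ to the deviation radius, so your version is, if anything, the more careful of the two.
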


\paragraph{Interpretation.}
The four subtractive terms map cleanly onto four distinct phenomena. The
$-\alpha$ term is the target miscoverage level, as in standard split conformal.
The $1/(n_{\mathrm{cal}}+1)$ term is the usual finite-sample split-conformal
overshoot, inherited from Vovk--Gammerman--Shafer~\citep{vovk2005} and Lei and
Wasserman~\citep{leiwasserman2014}. The $\Delta_{\mathrm{FL}}$ term is the
\emph{statistical-quantile plus federated-summary-reconstruction} slack: its
first piece is the usual sub-Gaussian deviation of the empirical quantile and
shrinks like $n_{\mathrm{cal}}^{-1/2}$, while its second piece is the
bandwidth-limited reconstruction cost $\phi(B_{\mathrm{cal}})$ that does not
shrink with $n_{\mathrm{cal}}$. The $\Delta_{\mathrm{RAG}}$ term is the new piece
produced by bandwidth-limited retrieval; because the hub aggregates per-node
contributions arithmetically and (\ref{ass:B3})'s mean-zero noise structure causes
independent per-node errors to partially cancel under averaging, the penalty
enters as $f_{\max}\sqrt{\tfrac{1}{K^2}\sum_i v(B_i)}$, scaling as
$\Theta(K^{-1/2})$ in the per-node-uniform regime rather than the $\Theta(1)$
of a worst-case $\max_i \psi$ bound. This $\sqrt{K}$ improvement is the
quantitative payoff of arithmetic aggregation, and it depends crucially on
(\ref{ass:B3})'s mean-zero / independent structure; without it, only the conservative
data-processing bound $\sum_i\psi(B_i)$ is available.

The proof chains four perturbations between the actually-observed coverage
$\Pr[Y \in \mathcal{C}_\alpha(X)]$ and the oracle baseline
$\Pr[s^\star_{\mathrm{test}} \le q^\star]$: oracle exchangeability,
quantile perturbation $q^\star \to \hat q$, score perturbation $s^\star
\to s_{\mathrm{swarm}}$, and a union bound.
Theorem~\ref{thm:theorem2} relies on the auxiliary lemma below.

\begin{lemma}[Quantile stability under small perturbations]
\label{lem:quantile-stability}
Let $F$ be a CDF with density $f$ satisfying $f(u) \le f_{\max}$ for all
$u$ in an open interval $I$ containing the true quantile $q^\star$. Then,
for any $\hat q \in I$ with $|\hat q - q^\star| \le t$,
$|F(\hat q) - F(q^\star)| \le f_{\max} \cdot t$.
(Proof: Appendix~\ref{app:lemmas}.)
\end{lemma}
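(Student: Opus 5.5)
The lemma is a direct consequence of the fundamental theorem of calculus applied to $F$ on the segment between $q^\star$ and $\hat q$. We carry out the following steps.

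\emph{Step 1.} Observe that $I$ is an interval containing both $q^\star$ and $\hat q$. By convexity of intervals, the closed segment $J$ with endpoints $\min(q^\star,\hat q)$ and $\max(q^\star,\hat q)$ lies entirely in $I$. This is the only place the hypothesis $\hat q \in I$ is used. It matters because the density bound $f \le f_{\max}$ is only assumed locally, as in (\ref{ass:B5}).

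\emph{Step 2.} Since $F$ admits a density $f$ on $I$, it is absolutely continuous there. Assume without loss of generality that $\hat q \ge q^\star$; the other case is symmetric. Then
\[
F(\hat q) - F(q^\star) \;=\; \int_{q^\star}^{\hat q} f(u)\,du .
\]
Because $f \ge 0$ and $f \le f_{\max}$ on $J \subseteq I$, the integral lies in $[0, f_{\max}(\hat q - q^\star)]$. Hence $|F(\hat q) - F(q^\star)| \le f_{\max}|\hat q - q^\star| \le f_{\max}\, t$.

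\emph{Step 3 (main subtlety).} The only delicate point is what ``admits a density'' means. If $F$ had an atom or a singular part inside $I$, the integral representation would fail. We therefore read the hypothesis, consistently with (\ref{ass:B5})'s phrasing that ``$F$ is $f_{\max}$-Lipschitz'' on the neighborhood, as saying that $F$ restricted to $I$ is absolutely continuous with Radon--Nikodym derivative $f$. Alternatively, if only an a.e.\ derivative bound is available, we would bypass integration. For any $a<b$ in $I$, the Lebesgue differentiation argument shows $F(b)-F(a) = \mu((a,b]) = \int_a^b f \le f_{\max}(b-a)$ for the law $\mu$ whose restriction to $I$ has density $f$. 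This yields the same Lipschitz bound directly. No earlier result of the paper is needed.
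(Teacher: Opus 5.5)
Your Steps 1--2 are the paper's argument: write $F(\hat q)-F(q^\star)=\int_{q^\star}^{\hat q} f(u)\,du$ by the fundamental theorem of calculus, note that the segment of integration lies in $I$, and bound the integrand by $f_{\max}$. Your explicit use of $f\ge 0$ and of the convexity of $I$ just makes the paper's one-line proof a little more careful. The only blemish is the ``alternative'' in Step 3. A mere a.e.\ derivative bound would not suffice: the Cantor function has derivative $0$ a.e.\ yet is not Lipschitz. Your fallback also re-assumes that $F$ has density $f$ on $I$, so it is not a genuine alternative. It is, however, unnecessary, since the lemma's hypothesis that $F$ has density $f$ already gives the absolute continuity your main argument uses.
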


\subsection{Set-size efficiency}

The coverage bound of Theorem~\ref{thm:theorem2} can be inverted to
read out the cost of bandwidth shortfall in set-size units. The next
corollary makes this operational reading explicit.

\begin{corollary}[Expected set size]
\label{cor:efficiency}
Under (\ref{ass:B1})--(\ref{ass:B6}), and assuming $\mathcal{Y}$
is a finite discrete answer space,
\[
\mathbb{E}\bigl|\mathcal{C}_\alpha(X)\bigr|
\;\le\; |\mathcal{Y}| \cdot
\Bigl(1 - \alpha + \tfrac{1}{n_{\mathrm{cal}}+1}
+ \Delta_{\mathrm{FL}} + \Delta_{\mathrm{RAG}}\Bigr).
\]
\end{corollary}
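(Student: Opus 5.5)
We would write the expected set size as a sum of per-candidate inclusion probabilities and bound each using (\ref{ass:B6}). By linearity of expectation and finiteness of $\mathcal{Y}$,
\[
\mathbb{E}\bigl|\mathcal{C}_\alpha(X)\bigr|
= \sum_{y \in \mathcal{Y}} \Pr\bigl[s_{\mathrm{swarm}}(X, y) \le \hat q\bigr]
= |\mathcal{Y}| \cdot \Pr_{X,\tilde y}\bigl[s_{\mathrm{swarm}}(X, \tilde y) \le \hat q\bigr],
\]
with $\tilde y \sim \mathrm{Unif}(\mathcal{Y})$ independent of $X$. Here $\hat q$ is the calibrated threshold, which is independent of the test point. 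It therefore suffices to show that the acceptance probability of a uniformly random candidate is at most $1 - \alpha + \tfrac{1}{n_{\mathrm{cal}}+1} + \Delta_{\mathrm{FL}} + \Delta_{\mathrm{RAG}}$.

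\textbf{Step 1.} We condition on the calibration set, so $\hat q$ is fixed, and restrict to the probability-$(1-\delta)$ event of Theorem~\ref{thm:theorem2}. On that event, (\ref{ass:B4}) places $\hat q$ inside the (\ref{ass:B5}) neighborhood of $q^\star$, which is where (\ref{ass:B6}) applies with $u = \hat q$. This gives
\[
\Pr\bigl[s_{\mathrm{swarm}}(X,\tilde y) \le \hat q\bigr] \le \Pr\bigl[s_{\mathrm{swarm}}(X,Y) \le \hat q\bigr] = \Pr\bigl[Y \in \mathcal{C}_\alpha(X)\bigr].
\]
This reduces the corollary to an \emph{upper} bound on true-label coverage.

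\textbf{Step 2 (main obstacle).} Theorem~\ref{thm:theorem2} is stated only as a lower bound, so we must rerun its four-perturbation chain in the opposite direction. For the oracle piece, split-conformal gives $\Pr[s^\star_{\mathrm{test}} \le q^\star] \le 1-\alpha + \tfrac{1}{n_{\mathrm{cal}}+1}$. This is the standard two-sided split-conformal sandwich under exchangeability (\ref{ass:B1}); the upper side needs ties to have probability zero, and the density in (\ref{ass:B5}) supplies this near $q^\star$. For the quantile move $q^\star \to \hat q$, Lemma~\ref{lem:quantile-stability} combined with (\ref{ass:B4}) gives $F(\hat q) - F(q^\star) \le \Delta_{\mathrm{FL}}$; that lemma is symmetric, so this step is unchanged.

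The delicate step is the score move $s^\star \to s_{\mathrm{swarm}}$ in the upward direction. We decompose $s_{\mathrm{swarm}} = s^\star + \bar\xi$ with $\bar\xi = K^{-1}\sum_i \xi_i$, using $K_y = K$ from (\ref{ass:B2}). Then
\[
\Pr\bigl[s^\star + \bar\xi \le \hat q\bigr] \le \mathbb{E}\bigl[F(\hat q - \bar\xi)\bigr] \le F(\hat q) + f_{\max}\,\mathbb{E}|\bar\xi|,
\]
and Cauchy--Schwarz with cross-node independence and mean zero from (\ref{ass:B3}) gives $\mathbb{E}|\bar\xi| \le \sqrt{K^{-2}\sum_i v(B_i)}$. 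This reproduces $\Delta_{\mathrm{RAG}}$. Here we must handle the case where $\hat q - \bar\xi$ leaves the Lipschitz window $[q^\star - r, q^\star + r]$. Our first attempt would be to truncate on $|\bar\xi| \le r/2$ and absorb the tail, either by Chebyshev into the same slack order or by appealing to the boundedness of scores in (\ref{ass:B2}), mirroring exactly what the proof of Theorem~\ref{thm:theorem2} does on the lower side.

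\textbf{Step 3.} Summing the three pieces gives
\[
\Pr\bigl[Y \in \mathcal{C}_\alpha(X)\bigr] \le 1-\alpha+\tfrac{1}{n_{\mathrm{cal}}+1}+\Delta_{\mathrm{FL}}+\Delta_{\mathrm{RAG}}.
\]
Combining this with Step 1 and multiplying by $|\mathcal{Y}|$ yields the claimed bound, which holds on the same probability-$(1-\delta)$ calibration event as Theorem~\ref{thm:theorem2}.
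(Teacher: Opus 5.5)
Your proposal follows the paper's proof essentially step for step. You write $\mathbb{E}|\mathcal{C}_\alpha(X)|$ as $|\mathcal{Y}|$ times the acceptance probability of a uniform candidate $\tilde y$, apply (\ref{ass:B6}) at $u=\hat q$ to pass to true-label coverage, and then run the Theorem~\ref{thm:theorem2} chain upward, using the two-sided split-conformal bound under no ties and the fact that the quantile- and score-perturbation bounds are two-sided.

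The only differences are that you make explicit two points the paper passes over silently: the restriction to the probability-$(1-\delta)$ calibration event, and the possibility that $\hat q-\bar\xi$ leaves the (\ref{ass:B5}) window. On the second point, note that a Chebyshev truncation would add an extra term of order $K^{-2}\sum_i v(B_i)/r^2$ rather than reproduce the stated bound exactly, so there you match the paper's argument but do not fully repair it.
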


The proof is provided in Appendix~\ref{app:eff-cor}.

Corollary~\ref{cor:efficiency} is the operational reading of the bandwidth
penalty: bandwidth-starved swarms produce looser sets, with slack proportional
to the coverage gap. In the $B_i \to \infty$ and $B_{\mathrm{cal}} \to \infty$
limit, $\Delta_{\mathrm{FL}}$ and $\Delta_{\mathrm{RAG}}$ both vanish (up to the
$n_{\mathrm{cal}}^{-1/2}$ statistical term) and the set size approaches the
standard split-conformal baseline $|\mathcal{Y}| \cdot (1 - \alpha)$.

\subsection{Position relative to prior work}

We briefly contrast Theorem~\ref{thm:theorem2} with the closest prior work.
TRAQ~\citep{traq2024}, Conformal-RAG~\citep{conformalrag2025}, and Principled
Context Engineering~\citep{principledcontext2025} are single-site RAG
protocols: they do not consider multiple nodes, cannot accommodate federated
calibration, and in particular have no $\Delta_{\mathrm{RAG}}$ term because
retrieval bandwidth does not appear in their model. At the other end of the
spectrum, GC-FCP~\citep{gcfcp2026} and Fed-CCP~\citep{fedccp} give federated
conformal guarantees but are not RAG-specific: they assume scores are computed
end-to-end on each node and are therefore silent on per-node retrieval
bandwidth $B_i$. Theorem~\ref{thm:theorem2} occupies the intersection. The
$\Delta_{\mathrm{RAG}}$ term is new, and the simultaneously
federated-calibration-aware and retrieval-bandwidth-aware coverage bound
appears to be the first of its kind.

\section{End-to-End Coverage via Pinsker Propagation}\label{sec:thm3}
Theorems~\ref{thm:theorem1} and~\ref{thm:theorem2} stand independently: the
former controls the swarm-student $\hat P$ at training time, the latter treats
$\hat P$ as a black-box nonconformity scorer at inference time. This section
shows that the two halves compose cleanly: the training-time expected KL error
of Theorem~\ref{thm:theorem1} translates into an additional coverage gap for
Theorem~\ref{thm:theorem2} via a Pinsker-type $\sqrt{\mathrm{KL}}$ factor.
The result is stated as a corollary because both inputs are already
proved theorems and Pinsker is the only new analytic ingredient; a
practitioner who cares only about end-to-end coverage can read a single
unified bound.

\begin{corollary}[Propagation bound]
\label{cor:theorem3}
Assume (\ref{ass:A1})--(\ref{ass:A6}) of Theorem~\ref{thm:theorem1} and
(\ref{ass:B1})--(\ref{ass:B5}) of Theorem~\ref{thm:theorem2}, together with the
following additional joint-density regularity condition:
\begin{itemize}
\item[\textbf{(B5$'$)}] The conditional density of $s^\star(Y)$ given
$\Delta(Y) := s(Y) - s^\star(Y)$ is bounded by $f_{\max}$ on the same
neighborhood as in (\ref{ass:B5}).
\end{itemize}
Assumption~(B5$'$) is strictly stronger than (\ref{ass:B5}) and is
satisfied by smooth parametric models where $\Delta = \log(P^\star/\hat P)$
is locally smooth in the parameter perturbation; both the synthetic
$n$-gram setup of Appendix~\ref{app:e2-details} and the softmax-linear
scoring of Section~\ref{ssec:e4} fall in this regime. Suppose the nonconformity score used in
Theorem~\ref{thm:theorem2} is the truncated log-loss $s(X, y) = -\log \hat
P(y \mid X)$ (bounded by (\ref{ass:B2})), with oracle counterpart
$s^\star(X, y) = -\log P^\star(y \mid X)$. Then the FC-RAG prediction set
$\mathcal{C}_\alpha(X)$ satisfies, with probability at least $1 - \delta$
over the calibration and training draws,
\[
\Pr\bigl[Y \in \mathcal{C}_\alpha(X)\bigr]
\;\ge\; 1 - \alpha
\;-\; \frac{1}{n_{\mathrm{cal}} + 1}
\;-\; \Delta_{\mathrm{FL}}
\;-\; \Delta_{\mathrm{RAG}}
\;-\; \Delta_{\mathrm{train}},
\]
where $\Delta_{\mathrm{FL}}$ and $\Delta_{\mathrm{RAG}}$ are as in
Theorem~\ref{thm:theorem2} and, writing
$\overline{\mathrm{KL}} :=
\mathbb{E}_{X \sim P^\star_X}\!\bigl[\mathrm{KL}(P^\star(\cdot \mid X)
\,\|\, \hat P(\cdot \mid X))\bigr]$,
\[
\Delta_{\mathrm{train}}
\;=\; f_{\max}\!\left(\overline{\mathrm{KL}}
\;+\; \sqrt{2\,\overline{\mathrm{KL}}}\right).
\]
The $f_{\max}$ multiplier arises because the score-perturbation argument
(Step~2 of the proof below) converts a score-distance into a coverage
perturbation via the bounded score-CDF density of (\ref{ass:B5}). The
$\overline{\mathrm{KL}} + \sqrt{2\,\overline{\mathrm{KL}}}$ shape comes
from the elementary bound
$\mathbb{E}_{P^\star}|\!\log(P^\star/\hat P)| \le \mathrm{KL}(P^\star\|\hat P)
+ 2\,d_{\mathrm{TV}}(P^\star, \hat P)$, combined with Pinsker
$d_{\mathrm{TV}} \le \sqrt{\mathrm{KL}/2}$ and Jensen on
$\sqrt{\,\cdot\,}$; in the small-$\overline{\mathrm{KL}}$ regime the
square-root term dominates and $\Delta_{\mathrm{train}} \asymp
f_{\max}\sqrt{2\,\overline{\mathrm{KL}}}$. The expected KL is bounded
by the right-hand side of Theorem~\ref{thm:theorem1}, so explicitly
\[
\Delta_{\mathrm{train}}
\;\le\; f_{\max}\!\left(R + \sqrt{2 R}\right),
\quad
R := \frac{c_1 d}{Kn}
+ c_2 \rho \sqrt{\frac{V \log(V/\delta)}{m}}
+ c_3 \, \frac{1}{K} \, 2^{-2B/V}
+ \varepsilon_{\mathrm{opt}} + \varepsilon_{\mathrm{fit}}.
\]
The proof is provided in Appendix~\ref{app:thm3}.
\end{corollary}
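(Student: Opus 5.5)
The plan is to reuse the four-step chain behind Theorem~\ref{thm:theorem2} and insert one extra perturbation: replacing the oracle scorer $s^\star = -\log P^\star$ by the trained scorer $s = -\log \hat P$. Concretely, I would write the coverage event as $\{s_{\mathrm{swarm}}(X,Y) \le \hat q\}$ and decompose
\[
s_{\mathrm{swarm}}(X,Y) = s^\star(X,Y) + \Delta(X,Y) + \bar\xi(X,Y),
\qquad \bar\xi := \tfrac{1}{K}\sum_i \xi_i .
\]
The term $\bar\xi$ is the averaged dithered retrieval noise from (\ref{ass:B3}), and $\Delta = s - s^\star = \log(P^\star/\hat P)$ is the training-induced score shift. (\ref{ass:B2}) guarantees that all $K$ contributions are present. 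Theorem~\ref{thm:theorem2}, read with $s^\star$ as the scorer, already gives the bound
\[
1-\alpha-\tfrac{1}{n_{\mathrm{cal}}+1}-\Delta_{\mathrm{FL}}-\Delta_{\mathrm{RAG}}
\]
on the event of probability at least $1-\delta$ where $\hat q$ lies in the (\ref{ass:B5}) neighborhood. It therefore suffices to show that adding $\Delta$ costs at most $f_{\max}\,\mathbb{E}|\Delta|$ in coverage, and then to bound $\mathbb{E}|\Delta|$ by $\overline{\mathrm{KL}} + \sqrt{2\,\overline{\mathrm{KL}}}$.

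For the score-perturbation step I would condition on $\Delta$ and use (B5$'$). Fix the threshold $u = \hat q$ (after also absorbing $\bar\xi$ as in Theorem~\ref{thm:theorem2}). Then
\[
\bigl|\Pr[s^\star+\Delta \le u] - \Pr[s^\star \le u]\bigr|
\le \mathbb{E}_\Delta\bigl|F_{s^\star\mid\Delta}(u-\Delta) - F_{s^\star\mid\Delta}(u)\bigr|
\le f_{\max}\,\mathbb{E}|\Delta| .
\]
The second inequality is Lemma~\ref{lem:quantile-stability} applied to the conditional CDF. This is precisely why (B5$'$), and not (B5) alone, is needed: $\Delta$ and $s^\star$ are dependent, so the unconditional density bound cannot be applied to the shifted variable. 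If $u-\Delta$ leaves the neighborhood $[q^\star-r,q^\star+r]$, I would argue that the probability difference is trivially bounded by $1 \le |\Delta|/r'$ for a suitable constant. Alternatively, I would restrict attention to the small-$\overline{\mathrm{KL}}$ regime, where the bound is meaningful anyway; this truncation is the step I expect to be the main technical obstacle. If it resists, I would accept a constant depending on $r$ and $S_{\max}$.

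Next I bound $\mathbb{E}|\Delta|$. For fixed $X$, write $g = \log(P^\star/\hat P)$ under $Y\sim P^\star(\cdot\mid X)$. Then $|g| = g + 2g_-$, so
\[
\mathbb{E}|g| = \mathrm{KL} + 2\,\mathbb{E}_{P^\star}[\log(\hat P/P^\star)]_+ .
\]
Using $\log z \le z-1$ on $\{\hat P > P^\star\}$ gives
\[
\mathbb{E}_{P^\star}[\log(\hat P/P^\star)]_+ \le \sum_y (\hat P - P^\star)_+ = d_{\mathrm{TV}}.
\]
This yields $\mathbb{E}|g| \le \mathrm{KL} + 2 d_{\mathrm{TV}}$. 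Pinsker gives $d_{\mathrm{TV}} \le \sqrt{\mathrm{KL}/2}$, so $\mathbb{E}|g| \le \mathrm{KL} + \sqrt{2\,\mathrm{KL}}$. Averaging over $X\sim P^\star_X$ and applying Jensen to the concave $\sqrt{\cdot}$ gives
\[
\mathbb{E}|\Delta| \le \overline{\mathrm{KL}} + \sqrt{2\,\overline{\mathrm{KL}}},
\]
which is $\Delta_{\mathrm{train}}/f_{\max}$. Top-$p$ truncation in (\ref{ass:B2}) only clips scores, and clipping is $1$-Lipschitz, so it can only shrink $|\Delta|$.

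Finally, I would assemble the pieces with a union bound. Combining the score-perturbation step with the Theorem~\ref{thm:theorem2} chain adds $-\Delta_{\mathrm{train}}$ to the coverage bound. Substituting Theorem~\ref{thm:theorem1}'s bound $\overline{\mathrm{KL}} \le R$, and using that $x\mapsto x+\sqrt{2x}$ is monotone, gives the displayed explicit form. The Theorem~\ref{thm:theorem1} event (training and probe draws) and the calibration event each hold with probability at least $1-\delta$; splitting $\delta$ between them, or rescaling constants, gives joint probability at least $1-\delta$. One bookkeeping point must be checked: $\hat q$ is calibrated on $s$-scores rather than $s^\star$-scores. I would handle this by conditioning on the trained $\hat P$, which is independent of the calibration data, so that (\ref{ass:B1}) and (\ref{ass:B4}) apply to the actual scores, and by charging the $s$-versus-$s^\star$ discrepancy only once, at the test point.
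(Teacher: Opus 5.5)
Your proposal follows the paper's proof essentially step for step. Both arguments use the conditional-CDF bound from (B5$'$) to get $f_{\max}\,\mathbb{E}|\Delta|$, the split $\mathbb{E}|g| = \mathrm{KL} + 2\,\mathbb{E}[g_-]$ with $\log z \le z-1$ to reach $\mathrm{KL} + 2\,d_{\mathrm{TV}}$, Pinsker plus Jensen, and then insertion into the Theorem~\ref{thm:theorem2} chain with a union bound over the training and calibration events.

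The paper does not address either caveat you raise. It applies (B5$'$) without checking that $u-\Delta$ stays in the neighborhood, and it implicitly takes (B4) relative to the quantile of $-\log P^\star$. Note also that your bookkeeping fix would remove $\Delta_{\mathrm{train}}$ rather than charge it once. Applying (B1), (B4) and (B5) to the actual $\hat P$-scores is exactly Theorem~\ref{thm:theorem2} as stated, and that already closes the chain.
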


The proof chain (Appendix~\ref{app:thm3}) starts from the elementary
identity $\mathbb{E}_{P^\star}|\!\log(P^\star/\hat P)| =
\mathrm{KL}(P^\star\|\hat P) + 2\,\mathbb{E}_{P^\star}[(\log(\hat P/P^\star))_+]$,
bounds the negative-part integral by $d_{\mathrm{TV}}(P^\star, \hat P)$
via $\log(1+t) \le t$, then applies Pinsker pointwise and Jensen
to the concave $\sqrt{\,\cdot\,}$ over $X$ to get
$\mathbb{E}_X\bigl[\mathbb{E}_{P^\star}|\!\log(P^\star/\hat P)|\bigr]
\le \overline{\mathrm{KL}} + \sqrt{2\,\overline{\mathrm{KL}}}$. The
score-perturbation step of Theorem~\ref{thm:theorem2} (the FTC + (\ref{ass:B5})
bound) then converts this into the additional $\Delta_{\mathrm{train}}$
slack on top of $\Delta_{\mathrm{RAG}}$.

\paragraph{Interpretation.}
If training bandwidth grows (larger $B$, larger probe size $m$, larger node
count $K$), the right-hand side of Theorem~\ref{thm:theorem1} shrinks, and so
$\Delta_{\mathrm{train}}$ shrinks as the \emph{square root} of the training KL.
In particular, the three training-side levers enter coverage as
$O(1/\sqrt{Kn})$, $O(m^{-1/4})$, and $O(2^{-B/V})$ respectively; the
square-root slows $m$ and $Kn$ but leaves the bandwidth term exponentially
small. The propagation is tight up to constants whenever the training KL is
small.

\paragraph{Tightness.}
When $\mathrm{KL}(P^\star \,\|\, \hat P)$ is small, Pinsker's inequality is
known to be tight up to constants (Gilardoni's refinement improves the
constant but preserves the $\sqrt{\cdot}$ scaling), and the
score-perturbation step inherits the same tightness profile on smooth
one-dimensional score families. An explicit matching-lower-bound
construction at the score-CDF level is outside the scope of this
submission and is deferred to an extended journal version. For
adversarial / spiky models that violate the conditional-density
clause (e.g., $\Delta$ concentrated on a measure-zero set in $Y$),
only the weaker rate $\Delta_{\mathrm{train}} =
O(\overline{\mathrm{KL}}^{1/4})$ is recoverable via Markov truncation
in place of the FTC step.

\section{Experiments}\label{sec:experiments}
\subsection{Overview}
Our six experiments map one-to-one onto the theoretical results, and
split into two qualitatively distinct roles. The synthetic and real-LM
experiments are not redundant trials of the same claim: the synthetic
experiments \emph{verify} the predicted scaling, and the real-LM
experiments \emph{illustrate} that the same qualitative tradeoffs
survive on a real language model.

\textbf{Synthetic experiments (verification of scaling).} Three
synthetic n-gram experiments verify
Theorem~\ref{thm:theorem1}'s training rate along five axes,
Corollary~\ref{cor:het}'s data-heterogeneity extension, and
Theorem~\ref{thm:theorem2}'s coverage bound across three calibration
sweeps. E1 (Section~\ref{ssec:e1}) is reported in full in the main
body; E1.5 and E2, summarized in Section~\ref{ssec:additional},
have full results in Appendix~\ref{app:e1_5-details} and
Appendix~\ref{app:e2-details}. These are the experiments where
verification is meaningful: the closed-form ground truth lets the
bounds' predictions be checked exactly along the parameters
$(K, n, m, B, V, n_{\mathrm{cal}}, B_i)$ the theory exposes.

\textbf{Real-LM experiments (illustration of feasibility).} Three
small-scale real-LM experiments lift the predictions onto
GPT-2-small: end-to-end FC-RAG coverage on DBpedia, AG News, and
MMLU (illustrating Corollary~\ref{cor:efficiency}'s set-size
efficiency along the $B_i$ axis), a bandwidth-tax measurement on
WikiText-2, and an end-to-end Pinsker-propagation chain that
connects the two theorems via Corollary~\ref{cor:theorem3}. E4
(Section~\ref{ssec:e4}) is reported in full in the main body; E3
bandwidth-decay and E5, summarized in
Section~\ref{ssec:additional}, have full results in
Appendix~\ref{app:e3-bandwidth-details} and
Appendix~\ref{app:e5-details}. We frame these as
\emph{feasibility illustrations} rather than statistical confirmation:
they use a single 124M-parameter model and a limited seed budget, and
are reported to show that the theory's qualitative predictions are
recognizable on a real LM, not to make deployment-scale claims.
Large-$K$ and non-i.i.d.\ extensions fall outside the homogeneous
local-data assumption (\ref{ass:A3}) and are reported in
Appendix~\ref{app:e3-extensions}.

\subsection{Verifying the KL training rate}\label{ssec:e1}

This experiment tests Theorem~\ref{thm:theorem1}'s
$(K, n, m, n_{\mathrm{bits}}, V)$-dependence directly on synthetic
n-gram ground truth, where the closed-form distillation lets the
predictions be checked exactly. The ground-truth model has vocabulary
$V = 256$ and context length $k = 1$; each node fits a local MLE
table with Laplace smoothing $\beta = 0.5$ and exchanges probe-logits
at $B$-bit quantization with clip $20$, and the aggregator distills
by direct logit averaging. We sweep $(K, n, m, n_{\mathrm{bits}}, V)$
one parameter at a time, holding the others fixed at
$(4, 3{,}000, 3{,}000, 8, 256)$, and report mean expected KL across
$40$ seeds. Per-point values are tabulated in
Appendix~\ref{app:e1-sweep}.

\begin{figure}[h]
\centering
\includegraphics[width=\textwidth]{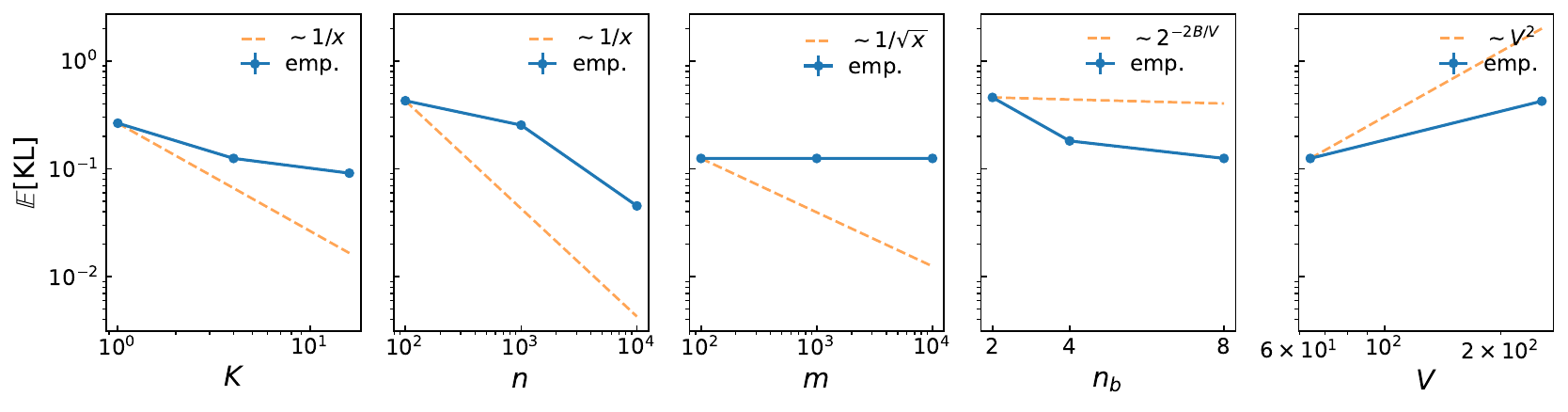}
\caption{Empirical KL stays below Theorem~\ref{thm:theorem1}'s
additive bound across all five sweep axes, with the predicted slopes
recovered in the rate regime of each panel. Panels from left to
right sweep $K$, $n$, $m$, $n_{\mathrm{bits}}$, and $V$; dashed
lines are the predicted slopes from Theorem~\ref{thm:theorem1}.}
\label{fig:e1}
\end{figure}

Empirical KL stays below the additive prediction at every point
(Figure~\ref{fig:e1}). The $n$-sweep is the cleanest rate panel:
between $n = 3\!\cdot\!10^4$ and $n = 10^5$ the empirical log--log
slope is $\approx -0.81$, approaching the theoretical $-1$ asymptote
of $d/(Kn)$. The $m$-sweep saturates at $0.4253$ across all six
values because the probe-generalization term $\sqrt{V \log V/m}$ is
dominated by the Laplace-smoothing-bias floor at $V = 256$;
additional probes cannot lower the empirical KL below this floor,
which is exactly the additive-saturation pattern
Theorem~\ref{thm:theorem1} predicts.

\subsection{End-to-end FC-RAG on multi-domain benchmarks}\label{ssec:e4}

This experiment lifts Theorem~\ref{thm:theorem2}'s coverage bound
and Corollary~\ref{cor:efficiency}'s set-size efficiency direction
onto an end-to-end real-LM FC-RAG pipeline across three benchmarks
spanning different difficulty levels: retrieval-friendly entity
classification (DBpedia 4-class), news-topic classification
(AG News 4-class), and an academic-subject benchmark on which
GPT-2-small operates near chance (MMLU 4-subject). Four
topic-specialized nodes each host a GPT-2-small scoring model and
a MiniLM retrieval index over a domain corpus and score four-class
multiple-choice question (MCQ) queries. Each benchmark has $\sim 912$ balanced questions, split $50/50$
into calibration and test pools. We score MCQ options using
per-token-averaged fullname-NLL (Appendix~\ref{app:scoring} compares
letter-token vs.\ fullname; the letter-token alternative reduces to
chance on all three benchmarks). We sweep $B_i$ at $\alpha = 0.1$
and $3$ seeds, and additionally run an $\alpha$-sweep
$\alpha \in \{0.05, 0.10, 0.20\}$ at $B_i = 32$. Per-topic lists,
the exact $B_i$ grid, retrieval-corpus construction, and the
$K_y = K = 4$ candidate-set-inclusion argument are in
Appendix~\ref{app:e4-setup}.

\begin{figure}[h]
\centering
\includegraphics[width=\textwidth]{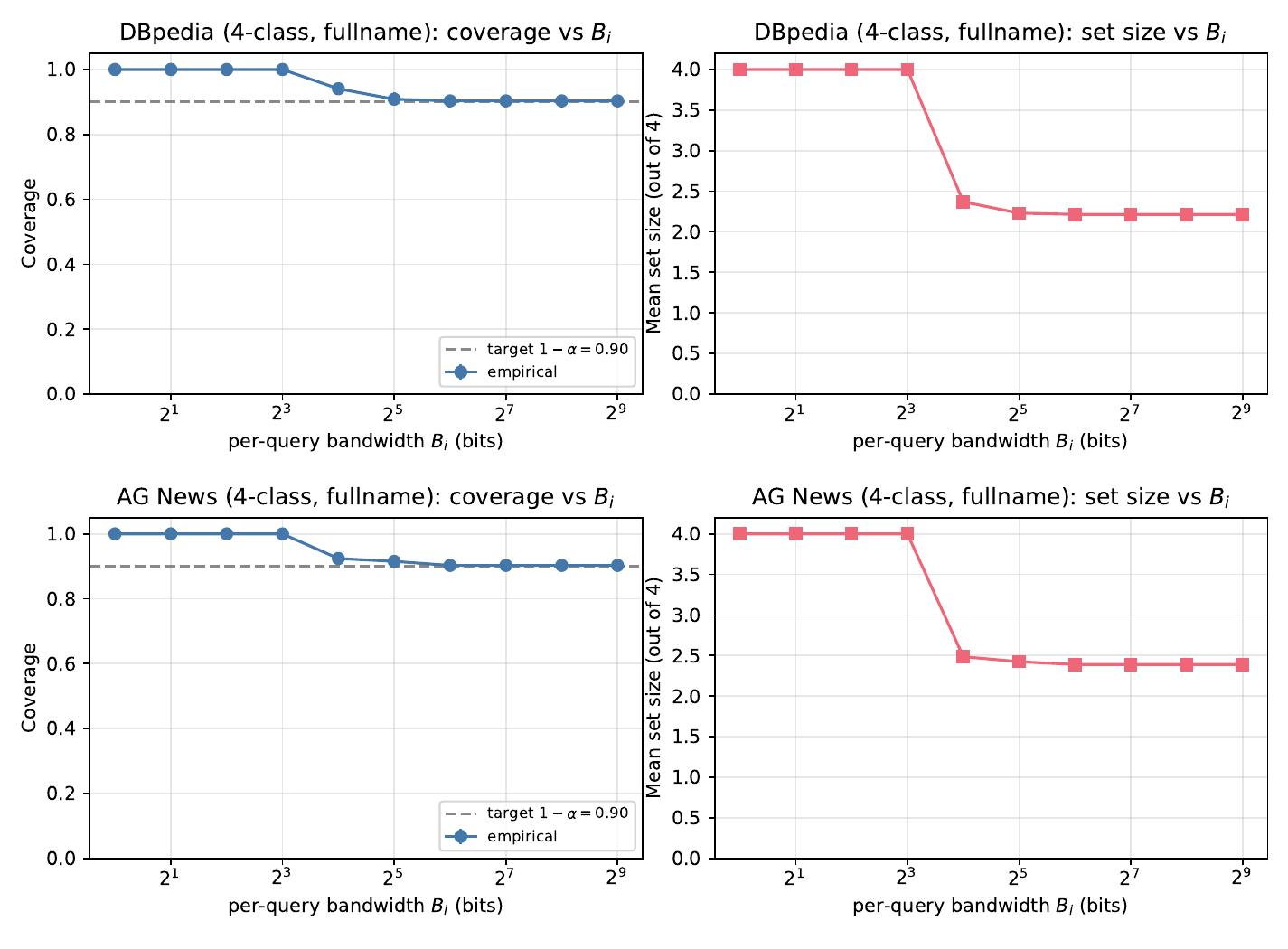}
\caption{End-to-end FC-RAG empirical coverage (left columns) and
mean conformal set size (right columns) versus per-query bandwidth
$B_i$ on DBpedia 4-class (top) and AG News 4-class (bottom), $3$
seeds, $\alpha = 0.1$. Target coverage is $1-\alpha = 0.9$. Both
benchmarks reach the target by $B_i = 32$ and remain there through
$B_i = 512$.}
\label{fig:e4}
\end{figure}

On DBpedia, coverage reaches the $0.9$ target at $B_i = 32$
($0.909 \pm 0.021$) with mean set size $2.23$ out of $4$ and
$\mathrm{acc}@1 = 0.604$ (substantively above chance level for
$4$-class classification), and stays flat through $B_i = 512$
(with $\mathrm{acc}@1$ stabilizing at $0.610$ by $B_i = 64$).
AG News reproduces the same pattern at the same $B_i = 32$
asymptote ($0.915 \pm 0.010$ coverage, set size $2.42$,
$\mathrm{acc}@1 = 0.452$). Both trace the elbow at $B_i = 16$
(cov $0.94$, set $2.37$ on DBpedia; cov $0.92$, set $2.49$ on AG
News), with set size monotonically decreasing as $B_i$ grows: the
efficiency direction of Theorem~\ref{thm:theorem2} quantified by
Corollary~\ref{cor:efficiency}. A finer-grained $B_i$ grid
(Appendix~\ref{app:e4-fine-bi}) puts the DBpedia elbow at
$B_i \in [14, 16]$ and the AG News elbow earlier at $[10, 12]$.

MMLU is the hardest benchmark in our suite: GPT-2-small's
$\mathrm{acc}@1$ stays at $0.27$, only marginally above the $0.25$
chance level for $4$-option MCQ. The conformal procedure
\emph{gracefully widens} rather than under-covers: mean set size
$3.46$ at $B_i = 32$, near-but-not-equal to the trivial set
$|\mathcal{Y}| = 4$ (full MMLU panel in
Appendix~\ref{app:e4-mmlu}). This is exactly the desired behavior
under a weak scorer: the conformal procedure absorbs scorer
uncertainty into a wider prediction set rather than into
under-coverage.

\subsection{Additional empirical verifications}\label{ssec:additional}

Five supporting verifications (E1.5, E2, E3 bandwidth-decay,
$\alpha$-sweep across miscoverage levels, and E5) supplement the
flagship experiments E1 and E4 above. Each entry below gives the
brief takeaway with a pointer to the corresponding appendix
subsection where the full methodology, figure or table, and
discussion appear.

\paragraph{E1.5 (heterogeneous-data extension,
Corollary~\ref{cor:het}).}
On synthetic n-gram ground truth with per-node distributions
$P_i = \mathrm{softmax}(\ell^\star + \mathrm{drift} \cdot
\varepsilon_i)$, sweeping $K \in \{2, 4, 8\}$ and
$\mathrm{drift} \in \{0, 0.1, 0.2, 0.3, 0.5, 0.75, 1.0\}$, the
additive prediction is an upper bound at every $(K, \mathrm{drift})$
point. The bound holds, the rate-regime $K$-axis collapses to
Theorem~\ref{thm:theorem1}'s homogeneous floor at drift $0$, and
adding nodes counteracts drift through statistical pooling. Full
results in Appendix~\ref{app:e1_5-details}.

\paragraph{E2 (coverage-bound verification,
Theorem~\ref{thm:theorem2}).}
Sweeping FC-RAG calibration parameters $(n_{\mathrm{cal}}, B_i,
B_{\mathrm{cal}})$ independently on synthetic n-gram ground truth
at target $\alpha = 0.1$, Theorem~\ref{thm:theorem2}'s coverage
bound holds across all three axes, the predicted-LB curves climb
monotonically with bandwidth as the slack schedule predicts, and
in the operating regime $B_{\mathrm{cal}} \geq 6$ coverage is
indistinguishable from unquantized split-conformal. Full results
in Appendix~\ref{app:e2-details}.

\paragraph{E3 (bandwidth tax on GPT-2-small,
Theorem~\ref{thm:theorem1}).}
Sweeping the per-node uplink $n_{\mathrm{bits}}$ on $K = 4$
federated GPT-2 students on WikiText-2 reproduces the predicted
exponentially-decaying quantization tax on perplexity: the
empirical curve decreases monotonically from $93.5$ ppl at
$n_{\mathrm{bits}} = 2$ through $54.7$ at $n_{\mathrm{bits}} = 8$
to the no-quant floor of $43.1$, replicating
Theorem~\ref{thm:theorem1}'s
$\tfrac{1}{K}\,2^{-2 n_{\mathrm{bits}}}$ envelope on a real LM;
the multi-seed FPLD-vs-FedDF gap at $n_{\mathrm{bits}} = 8$ is
$15.6$ ppl (FPLD $61.01 \pm 1.19$ vs.\ FedDF $45.41 \pm 0.76$, $3$
seeds) with $>10\sigma$ statistical separation. Bandwidth-decay
figure in Appendix~\ref{app:e3-bandwidth-details}; multi-seed
verification in Appendix~\ref{app:e3-multiseed}.

\paragraph{E4 ($\alpha$-sweep verification across miscoverage levels,
Theorem~\ref{thm:theorem2}).}
At $B_i = 32$ and $3$ seeds, sweeping
$\alpha \in \{0.05, 0.10, 0.20\}$ on DBpedia, AG News, and MMLU
directly verifies Theorem~\ref{thm:theorem2}'s coverage bound
across miscoverage levels: empirical coverage tracks $1-\alpha$
in every cell of the $3 \times 3$ grid, within $\pm 0.015$ on
DBpedia and AG News and within $\pm 0.022$ on MMLU. Full table in
Appendix~\ref{app:e4-alpha-details}.

\paragraph{E5 (end-to-end propagation chain,
Corollary~\ref{cor:theorem3}).}
Varying training bandwidth $n_{\mathrm{bits}} \in \{2, 4, 6, 8,
12\}$ plus an FPLD-no-quant reference and deploying the trained
student behind FC-RAG on DBpedia and AG News tests
Corollary~\ref{cor:theorem3}'s Pinsker propagation slack
end-to-end on a real LM. The full
Theorem~\ref{thm:theorem1}--Theorem~\ref{thm:theorem2}--Corollary~\ref{cor:theorem3}
chain holds: training-time bandwidth determines scorer quality,
which determines set size and $\mathrm{acc}@1$, but
distribution-free coverage survives the chain (coverage stays in
$[0.900, 0.925]$ on DBpedia and $[0.903, 0.927]$ on AG News across
all training bandwidths), with the bandwidth axis manifesting in
set size (AG News grows from $2.30$ at no-quant to $2.71$ at
$n_{\mathrm{bits}} = 2$) rather than coverage. Full results in
Appendix~\ref{app:e5-details}.

The load-bearing empirical takeaways from all six experiments are
consolidated in Appendix~\ref{app:e-summary}.

\section{Related Work}\label{sec:related}
\paragraph{Federated distillation and federated LLM training.}
Federated distillation since FedMD~\citep{fedmd2019} and
FedDF~\citep{feddf2020} averages soft predictions on a shared proxy
set; the FedKD line~\citep{fedkdsony2022} quantifies systems-level
savings, and FedFD~\citep{fedfd2025} argues for feature distillation
under model heterogeneity. None derives a rate
$\mathrm{error} = f(K, n, B, m, V)$ for an LLM-style conditional
density problem. We reuse the probe-set logit-distillation primitive of
FedMD/FedDF and the dithered scalar-quantization primitive
of~\citet{gershogray1992}, and borrow pooled-MLE intuition from
classical distributed estimation~\citep{shamirsrebro2014,
zhangduchiwainwright2013, huanghuo2019}; what we add is, to our
knowledge, the previously absent KL-consistency rate for federated LLM
training with simultaneous $(K, n, B, m, V)$ dependence
(Theorem~\ref{thm:theorem1}).

\paragraph{Conformal prediction for RAG.}
Conformal prediction~\citep{vovk2005, leiwasserman2014, romano2019cqr}
turns any black-box predictor into a distribution-free set predictor.
The RAG specialization is recent: TRAQ~\citep{traq2024},
Conformal-RAG~\citep{conformalrag2025}, and Principled Context
Engineering~\citep{principledcontext2025} all assume \emph{single-site}
deployment (one model, one corpus, one calibration set). We reuse the
score-stability machinery (our (\ref{ass:B3}) generalizes their
retriever-stability condition) and, to our knowledge, charge retrieval
itself for happening across bandwidth-limited nodes for the first time
in this RAG line; the explicit $\Delta_{\mathrm{RAG}}$ slack in
Theorem~\ref{thm:theorem2} is novel.

\paragraph{Federated conformal prediction.}
GC-FCP~\citep{gcfcp2026} and Fed-CCP~\citep{fedccp} calibrate a
conformal quantile from federated score summaries; their reconstruction-error
slack matches our (\ref{ass:B4}). Neither specializes to RAG or
charges retrieval bandwidth. Theorem~\ref{thm:theorem2} couples
federated calibration with retrieval-bandwidth-aware score stability,
producing, to our knowledge, the previously absent coverage bound for
federated RAG with bandwidth-limited retrieval; Corollary~\ref{cor:theorem3}
ties our training and inference contributions together via
Pinsker, which to our knowledge has no precedent in either line.

\begin{table}[h]
\centering\footnotesize
\caption{Position vs.\ closest prior lines.
``Train rate'' = explicit $f(K, n, B, m, V)$ KL-consistency rate;
``$B_{\mathrm{tr}}$'' = training uplink charged;
``Coverage'' = distribution-free inference coverage guarantee;
``$B_i$'' = per-node retrieval bandwidth as first-class statistical
parameter; ``Compose'' = explicit Train$\to$Infer composition.}
\label{tab:position}
\resizebox{\textwidth}{!}{%
\begin{tabular}{lccccc}
\toprule
& Train rate & $B_{\mathrm{tr}}$ & Coverage & $B_i$ & Compose \\
\midrule
FedMD / FedDF / FedFD~\citeyearpar{fedmd2019, feddf2020, fedfd2025} & --- & partial & --- & --- & --- \\
Distrib.\ MLE~\citeyearpar{shamirsrebro2014, zhangduchiwainwright2013, huanghuo2019} & $\checkmark$ ($K, n$) & --- & --- & --- & --- \\
TRAQ / Conf-RAG / PCE~\citeyearpar{traq2024, conformalrag2025, principledcontext2025} & --- & --- & $\checkmark$ & --- & --- \\
GC-FCP / Fed-CCP~\citeyearpar{gcfcp2026, fedccp} & --- & --- & $\checkmark$ & --- & --- \\
\textbf{This work (Theorems~\ref{thm:theorem1},~\ref{thm:theorem2}, Corollary~\ref{cor:theorem3})} & $\checkmark$ ($K, n, B, m, V$) & $\checkmark$ & $\checkmark$ & $\checkmark$ & $\checkmark$ \\
\bottomrule
\end{tabular}%
}
\end{table}

\section{Conclusion}
We studied a federated swarm of weak language models under explicit
bandwidth budgets, asking what statistical guarantees are in principle
achievable. Theorem~\ref{thm:theorem1} gives a high-probability
KL-consistency rate for FPLD with simultaneous $(K, n, B, m, V)$
dependence, in which bandwidth enters only through an exponentially
vanishing quantization term. Theorem~\ref{thm:theorem2} gives a
distribution-free marginal-coverage bound for FC-RAG with a novel
retrieval-bandwidth slack $\Delta_{\mathrm{RAG}}$ scaling as
$\Theta(K^{-1/2})$. Corollary~\ref{cor:theorem3} composes the two via
Pinsker, closing the loop. Synthetic experiments verify the predicted
scaling along the bound's parameters; small-scale GPT-2 experiments
illustrate the qualitative tradeoffs on a real LM. The work's
broader societal context, reproducibility commitments, and compute
footprint are reported in Appendix~\ref{app:impact-repro}.

\paragraph{Limitations.}
The setup is deliberately narrow: finite discrete answer spaces only
(conformal prediction for open-ended generation not covered); homogeneous data in the
main statements with heterogeneity as a corollary;
architecture-heterogeneity not empirically validated, though
output-space aggregation is structurally compatible with mixed local
architectures sharing a vocabulary; no adversarial-node or
differential-privacy modeling; the density-ratio $\rho$ is assumed
bounded, relaxable via truncation or importance-weighted estimators.
The numerical experiments are illustrative, not deployment-scale: a
single 124M-parameter model, limited seed budget, three benchmarks.
A clinical-domain end-to-end demonstration is left to follow-up.

\paragraph{Future work.}
Four directions are natural. (i) Streaming federated conformal calibration,
for online deployment where the calibration set evolves. (ii) Open-ended
generation via sub-claim conformal prediction, combining our federated
calibration with TRAQ-style decomposition. (iii) Adversarial robustness:
replace the homogeneity assumption with a contamination model and quantify
coverage under Byzantine nodes. (iv) Fully decentralized aggregation without
a hub, using gossip or decentralized-SGD substitutes for the probe-logit
averaging step of FPLD.

\clearpage
\bibliographystyle{plainnat}
\bibliography{references}

\clearpage
\appendix
%

\renewcommand{\thesection}{S\arabic{section}}

\begin{center}
{\Large\bfseries Supplementary Material}
\end{center}
\addcontentsline{toc}{section}{Supplementary Material}
\medskip

This supplement contains proofs of all theoretical results stated in
the main paper -- the training-time KL-consistency rate
(Theorem~\ref{thm:theorem1}) and its heterogeneous-data extension
(Corollary~\ref{cor:het}), the inference-time marginal-coverage bound
for Federated Conformal RAG (Theorem~\ref{thm:theorem2}), the
expected-set-size bound (Corollary~\ref{cor:efficiency}), and the
Pinsker-type propagation corollary chaining training-time KL into
inference-time coverage (Corollary~\ref{cor:theorem3}) -- together
with three auxiliary lemmas, alternative bandwidth bounds when the
canonical dither scheme's cross-coordinate independence or
per-coordinate symmetry is weakened, per-experiment empirical details
for the six experiments E1--E5 (synthetic n-gram rate verification,
heterogeneous-data drift, FC-RAG coverage on a synthetic chain,
GPT-2-small bandwidth tax, multi-domain FC-RAG benchmarks, and the
end-to-end propagation chain), and a final section on broader impact
and reproducibility. We use the notation of the main paper throughout.

\section{Proofs and auxiliary lemmas}\label{sec:proofs}

\subsection{Auxiliary lemmas}\label{app:lemmas}

\paragraph{Proof of Lemma~\ref{lem:softmax-lip} (softmax Lipschitzness in $L^1$).}
Let $u = b - a$. By the fundamental theorem of calculus,
$\operatorname{softmax}(b) - \operatorname{softmax}(a) = \int_0^1 J(a + tu)\, u\, dt$,
where $J(z) = \operatorname{diag}(p) - p p^\top$ is the softmax Jacobian
at logit $z$ (with $p = \operatorname{softmax}(z)$). The induced
$L^1 \to L^1$ operator norm is the maximum column-sum of $|J|$:
\[
\|J\|_{1 \to 1}
\;=\; \max_j \sum_i |J_{ij}|
\;=\; \max_j \Bigl( p_j(1-p_j) + \sum_{i \ne j} p_i p_j \Bigr)
\;=\; \max_j 2 p_j(1 - p_j)
\;\le\; 1/2,
\]
with the maximum at $p_j = 1/2$. Hence $\|J(z) u\|_1 \le \tfrac{1}{2}\|u\|_1$
uniformly in $z$, and integrating gives
$\|\operatorname{softmax}(b) - \operatorname{softmax}(a)\|_1 \le \tfrac{1}{2}\|b - a\|_1$.
$\square$

\paragraph{Proof of Lemma~\ref{lem:quantile-stability} (quantile stability).}
By the fundamental theorem of calculus,
$F(\hat q) - F(q^\star) = \int_{q^\star}^{\hat q} f(u)\, du$.
Taking absolute values and using $f \le f_{\max}$ on the interval of integration
(which lies in $I$ by hypothesis) gives
$|F(\hat q) - F(q^\star)| \le f_{\max} \cdot |\hat q - q^\star| \le f_{\max} \cdot t$.
$\square$

\paragraph{Proof of Lemma~\ref{lem:soft-trace} (Softmax-Hessian trace bound).}
By linearity of trace,
\[
\mathrm{tr}\bigl(H_{\mathrm{soft}}(p)\bigr)
\;=\; \mathrm{tr}\bigl(\mathrm{diag}(p)\bigr) \;-\; \mathrm{tr}\bigl(p p^\top\bigr).
\]
For the first term, $\mathrm{tr}(\mathrm{diag}(p)) = \sum_{v=1}^{V} p_v = 1$,
where the last equality uses $p \in \Delta(\mathcal{V})$. For the second
term, $\mathrm{tr}(p p^\top) = \sum_{v=1}^{V} p_v^2 = \|p\|_2^2$ (the
trace of a rank-one outer product equals the squared $\ell^2$-norm of the
vector). Combining gives $\mathrm{tr}(H_{\mathrm{soft}}(p)) = 1 - \|p\|_2^2$.
Since $p_v \in [0,1]$ for all $v$, we have $p_v^2 \le p_v$, hence
$\|p\|_2^2 = \sum_v p_v^2 \le \sum_v p_v = 1$. Equality in $\|p\|_2^2 \le 1$
holds iff $p_v \in \{0,1\}$ for every $v$, i.e.\ $p$ is one-hot. The trace
identity therefore satisfies $0 \le \mathrm{tr}(H_{\mathrm{soft}}(p)) \le 1$,
with the upper bound attained at one-hot $p$.
$\square$

\subsection{Proof of Theorem~\ref{thm:theorem1}}\label{app:thm1}

The proof decomposes the target KL using the aggregated teacher
$\bar P(\cdot \mid x) := \operatorname{softmax}(\bar \ell(x))$, where
$\bar \ell(x) = \frac{1}{K}\sum_i \tilde \ell_i(x)$ is the aggregator's
quantized average:
\[
\mathrm{KL}\bigl(P^\star \,\|\, \hat P\bigr)
\;=\; \mathrm{KL}\bigl(P^\star \,\|\, \bar P\bigr)
\;+\; \mathbb{E}_{P^\star}\!\left[\log \frac{\bar P}{\hat P}\right].
\]
This chain identity is a high-level decomposition; the actual proof does
\emph{not} bound the two summands separately through it (KL has no triangle
inequality, and Step~2 below produces an expectation under
$\bar P^\star$, not $P^\star$). Instead, the four steps below control
parameter-space deviations $\mathbb{E}\|u\|_I^2$ (Step~1) and
$\mathbb{E}\|v\|_I^2$ (Steps~2--4), and the union-bound paragraph reconciles
them in parameter space via the local quadratic expansion of KL at
$\theta^\star$; the chain identity is used only to motivate the
``ideal-teacher piece + quantization piece + distillation piece''
breakdown.

\paragraph{Step 1: aggregate MLE error.}
Let $\theta^\star$ be the population parameter and let
$\hat\theta_{\mathrm{MLE}}$ denote the MLE on the pooled dataset of size
$N := Kn$ (this is the estimator a hypothetical centralized oracle would
compute from the union of the local datasets). Under (\ref{ass:A1}) and (\ref{ass:A3}), classical
parametric MLE theory (van der Vaart~\citep{vandervaart2000}, Chapter 5) gives
the Fisher expansion
\[
\hat\theta_{\mathrm{MLE}} - \theta^\star
\;=\; I(\theta^\star)^{-1} \cdot \frac{1}{N} \sum_{i=1}^{N}
\nabla_\theta \log p(x_i, y_i; \theta^\star) \;+\; o_P\!\left(\tfrac{1}{\sqrt{N}}\right),
\]
and the local quadratic expansion of KL around $\theta^\star$,
\[
\mathrm{KL}\bigl(P_{\theta^\star} \,\|\, P_{\hat\theta_{\mathrm{MLE}}}\bigr)
\;=\; \tfrac{1}{2}
(\hat\theta_{\mathrm{MLE}} - \theta^\star)^\top I(\theta^\star)
(\hat\theta_{\mathrm{MLE}} - \theta^\star)
\;+\; o_P\!\left(\tfrac{1}{N}\right).
\]
Taking expectations and using $\mathbb{E}\|\hat\theta_{\mathrm{MLE}}
- \theta^\star\|_{I}^2 = d/N + o(1/N)$,
\[
\mathbb{E}\,\mathrm{KL}\bigl(P^\star \,\|\, P_{\hat\theta_{\mathrm{MLE}}}\bigr)
\;\le\; \frac{d}{2 N} + o(1/N)
\;=\; \frac{d}{2 K n} + o(1/(Kn)).
\]
Writing $\bar P^\star(\cdot \mid x) := \operatorname{softmax}(\bar \ell^\star(x))$
for the aggregator output in the absence of quantization (so that each node
contributes its un-quantized logits from a local MLE fit), we extend the
pooled-MLE rate from $\hat\theta_{\mathrm{MLE}}$ to $\bar P^\star$ via a
Taylor expansion of the logit map. By smoothness of $\theta \mapsto
\ell_\theta$ under (\ref{ass:A1}), the per-node MLEs satisfy
$\hat\theta_i^{\mathrm{MLE}} - \theta^\star = O_P(1/\sqrt{n})$
independently across $i$ (van der Vaart~\citep{vandervaart2000}, Theorem 5.39),
so the unweighted parameter average satisfies
$\bar\theta_\delta := \tfrac{1}{K}\sum_i (\hat\theta_i^{\mathrm{MLE}} -
\theta^\star) = O_P(1/\sqrt{Kn})$ with
$\mathbb{E}\|\bar\theta_\delta\|_{I(\theta^\star)}^2 = d/(Kn) + o(1/(Kn))$.
A second-order Taylor expansion of $\theta \mapsto \ell_\theta(x)$ at
$\theta^\star$ gives
\[
\bar \ell^\star(x)
\;=\; \frac{1}{K} \sum_{i=1}^{K} \ell_{\hat\theta_i^{\mathrm{MLE}}}(x)
\;=\; \ell_{\theta^\star}(x) + \nabla_\theta \ell_{\theta^\star}(x) \cdot
\bar\theta_\delta + O_P(1/Kn),
\]
so $\bar P^\star = \operatorname{softmax}(\bar\ell^\star) =
P_{\theta^\star + \bar\theta_\delta} + O_P(1/Kn)$ in $L^1$ by softmax
$\tfrac{1}{2}$-Lipschitzness (Lemma~\ref{lem:softmax-lip}). Plugging into the local
KL quadratic expansion and absorbing the higher-order terms into a
constant, we obtain
\[
\mathbb{E}\,\mathrm{KL}\bigl(P^\star \,\|\, \bar P^\star\bigr)
\;\le\; \frac{c_1 \, d}{K n} \;+\; \varepsilon_{\mathrm{opt}},
\]
with $c_1 \asymp 1 / \lambda_{\min}(I(\theta^\star))$ and where the
$\varepsilon_{\mathrm{opt}}$ absorbs (\ref{ass:A4}) and the $T$-round/$E$-epoch
optimization drift. For softmax-linear families, the Taylor expansion is
exact (the linear part captures $\bar\ell^\star$ exactly), and the
intermediate model $P_{\theta^\star + \bar\theta_\delta}$ coincides with
the parameter-averaged model.

\paragraph{Step 2: quantization bias (trace-sharpened).}
Fix a probe point $x^{(l)}$ and write $\ell_i := \ell_i^{(t,l)}$,
$\tilde\ell_i := \tilde\ell_i^{(t,l)} \in \mathbb{R}^V$ for brevity. Let
\[
\bar\ell \;:=\; \frac{1}{K} \sum_{i=1}^{K} \tilde\ell_i \in \mathbb{R}^V,
\qquad
\bar\ell^\star \;:=\; \frac{1}{K} \sum_{i=1}^{K} \ell_i \in \mathbb{R}^V,
\qquad
\bar\xi \;:=\; \bar\ell - \bar\ell^\star \in \mathbb{R}^V
\]
be the quantized aggregator average, the un-quantized aggregator
average, and their difference (the $K$-fold averaged dither). We
define the per-node, per-coordinate dither
$\xi_{i,v} := \tilde\ell_{i,v} - \ell_{i,v}$, so that
$\bar\xi_v = \tfrac{1}{K}\sum_{i=1}^{K}\xi_{i,v}$.
Write $\bar P^\star := \operatorname{softmax}(\bar\ell^\star)$ and
$\bar P := \operatorname{softmax}(\bar\ell) = \operatorname{softmax}(\bar\ell^\star + \bar\xi)$.

\textbf{Step 2(a): covariance of $\bar\xi$.}
Assumption~\ref{ass:A5} states three properties of the dither
$\{\xi_{i,v}\}_{i,v}$: independence across both indices $i$ and $v$,
mean-zero distribution, and per-coordinate per-node variance bounded by
$C_q\,2^{-2B/V}$. From these:
\begin{itemize}
\item \emph{Per-coordinate variance of $\bar\xi$.} By independence across
nodes,
\[
\mathrm{Var}(\bar\xi_v)
\;=\; \mathrm{Var}\!\Bigl(\tfrac{1}{K}\sum_{i=1}^{K}\xi_{i,v}\Bigr)
\;=\; \tfrac{1}{K^2}\sum_{i=1}^{K}\mathrm{Var}(\xi_{i,v})
\;\le\; \tfrac{1}{K^2}\cdot K\cdot C_q\,2^{-2B/V}
\;=\; \frac{C_q}{K}\,2^{-2B/V} \;=:\; \sigma^2.
\]
\item \emph{Cross-coordinate covariance of $\bar\xi$.} For $v \ne v'$,
joint independence of $\{\xi_{i,v}\}$ across $(i,v)$ pairs gives
$\mathrm{Cov}(\xi_{i,v},\xi_{i',v'}) = 0$ for all $i,i'$, hence
\[
\mathrm{Cov}(\bar\xi_v,\bar\xi_{v'})
\;=\; \tfrac{1}{K^2}\sum_{i,i'=1}^{K}\mathrm{Cov}(\xi_{i,v},\xi_{i',v'})
\;=\; 0.
\]
\end{itemize}
Consequently $\mathrm{Cov}(\bar\xi) \in \mathbb{R}^{V \times V}$ is
diagonal with each diagonal entry at most $\sigma^2$, i.e.\ in the
positive semidefinite order
\begin{equation}
\mathrm{Cov}(\bar\xi) \;\preceq\; \sigma^2\, I_V,
\qquad
\sigma^2 = \frac{C_q}{K}\,2^{-2B/V}.
\label{eq:cov-bound}
\end{equation}

\textbf{Step 2(b): symmetry of $\bar\xi$.}
By Assumption~\ref{ass:A5}, each $\xi_{i,v}$ is symmetrically distributed
about zero, i.e.\ $\xi_{i,v} \stackrel{d}{=} -\xi_{i,v}$. By joint
independence of $\{\xi_{i,v}\}_{i,v}$, the random vector
$\xi \in \mathbb{R}^{KV}$ stacking all per-node per-coordinate dithers
has the product distribution, which is symmetric about the origin:
$\xi \stackrel{d}{=} -\xi$. Since $\bar\xi$ is a linear function of
$\xi$ (specifically $\bar\xi = M\xi$ for the deterministic linear map
$M \in \mathbb{R}^{V \times KV}$ that averages over nodes per
coordinate), symmetry is preserved:
\begin{equation}
\bar\xi \;\stackrel{d}{=}\; -\bar\xi.
\label{eq:symm}
\end{equation}

\textbf{Step 2(c): cumulant-generating-function representation of
softmax-KL.}
We first derive a closed form for the KL divergence between two
softmaxes that will let us apply a cumulant expansion. Let $Z(\ell) :=
\sum_v e^{\ell_v}$. For any $\ell^\star, \eta \in \mathbb{R}^V$ with
$p^\star := \operatorname{softmax}(\ell^\star)$,
\begin{align}
\mathrm{KL}\bigl(p^\star \,\|\, \operatorname{softmax}(\ell^\star + \eta)\bigr)
\;&=\; \sum_v p^\star_v \log\!\frac{p^\star_v}{p^\star_v\cdot e^{\eta_v}\cdot Z(\ell^\star)/Z(\ell^\star+\eta)} \notag\\
\;&=\; -\sum_v p^\star_v \eta_v \;+\; \log\!\frac{Z(\ell^\star+\eta)}{Z(\ell^\star)} \notag\\
\;&=\; \log\!\frac{Z(\ell^\star+\eta)}{Z(\ell^\star)} \;-\; p^{\star\top}\eta.
\notag
\end{align}
Now expand the log-normalizer ratio:
\[
\frac{Z(\ell^\star+\eta)}{Z(\ell^\star)}
\;=\; \frac{\sum_v e^{\ell^\star_v + \eta_v}}{Z(\ell^\star)}
\;=\; \sum_v \frac{e^{\ell^\star_v}}{Z(\ell^\star)}\, e^{\eta_v}
\;=\; \sum_v p^\star_v\, e^{\eta_v}
\;=\; \mathbb{E}_{Y \sim p^\star}\bigl[e^{\eta_Y}\bigr].
\]
Substituting and rewriting using $\mathbb{E}_{Y \sim p^\star}[e^{-p^{\star\top}\eta}] = e^{-p^{\star\top}\eta}$ (a constant under $Y$):
\begin{equation}
\mathrm{KL}\bigl(p^\star \,\|\, \operatorname{softmax}(\ell^\star+\eta)\bigr)
\;=\; \log\mathbb{E}_{Y \sim p^\star}\!\bigl[e^{\eta_Y - p^{\star\top}\eta}\bigr].
\label{eq:kl-as-cgf}
\end{equation}

The right-hand side of~\eqref{eq:kl-as-cgf} is the cumulant generating
function (CGF) of the centered random variable
$Z := \eta_Y - p^{\star\top}\eta$ (where $Y \sim p^\star$ and $\eta$ is
treated as fixed) evaluated at parameter $t = 1$. By construction,
$\mathbb{E}_{Y \sim p^\star}[Z] = 0$. The cumulant expansion of the CGF
of a centered random variable gives, in general,
\begin{equation}
\log \mathbb{E}\bigl[e^{Z}\bigr]
\;=\; \tfrac{1}{2}\,\kappa_2(Z) \;+\; \tfrac{1}{6}\,\kappa_3(Z)
\;+\; \tfrac{1}{24}\,\kappa_4(Z) \;+\; \cdots,
\label{eq:cgf-cumulant}
\end{equation}
where $\kappa_2(Z) = \mathrm{Var}(Z) = \mathbb{E}[Z^2]$ is the second
cumulant, $\kappa_3(Z) = \mathbb{E}[Z^3]$ is the third cumulant
(coinciding with the third central moment for centered $Z$), and
higher-order cumulants $\kappa_k$ are polynomials in central moments.
We use this expansion as a Taylor expansion of
$f(\eta) := \log \mathbb{E}_{Y \sim p^\star}\!\bigl[e^{\eta_Y - p^{\star\top}\eta}\bigr]$
in $\eta$ about $\eta = 0$, evaluated at $\eta = \bar\xi$. The
expansion is rigorously justified by the analyticity of $f$ at
$\eta = 0$: since $\mathbb{E}_{Y \sim p^\star}[e^{\eta_Y - p^{\star\top}\eta}]$
is everywhere positive (a positive average of positive
exponentials), $f$ is real-analytic in a neighborhood of $\eta = 0$
in $\mathbb{R}^V$, with $f(0) = 0$ and $\nabla f(0) = 0$. Therefore
the Taylor expansion holds with explicit Lagrange / integral
remainder
$R_4(\eta) = O(\|\eta\|_\infty^4)$ uniformly for $\eta$ in any
compact set containing the origin. In our setting,
Assumption~\ref{ass:A5}'s clipping ensures
$\|\bar\xi\|_\infty \le L_\ell$ (each $\bar\xi_v$ is an average of
bounded variables), so $\bar\xi$ lies in the compact set
$[-L_\ell, L_\ell]^V$ on which the remainder bound applies. The
Lagrange remainder's constant depends on $\bar P^\star$ only through
the higher categorical cumulants $\kappa_k$, which Step~2(e) below
bounds explicitly in terms of the $\bar\xi_v$ support and variance
($|\bar\xi_v|\le L_\ell$, $\mathrm{Var}(\bar\xi_v)\le\sigma^2$). These
$\bar\xi$-side bounds are independent of $\bar P^\star$, so the resulting
$R_4 = O(\sigma^4)$ bound holds uniformly in $\bar P^\star$ on the simplex.

\textbf{Step 2(d): identification with the softmax Hessian.}
We identify the leading cumulants explicitly in terms of $\eta$ and
$p^\star$.

\emph{Second cumulant.} The variance of $Z = \eta_Y - p^{\star\top}\eta$
under $Y \sim p^\star$ is
\begin{align*}
\kappa_2(Z)
\;&=\; \mathbb{E}_{Y \sim p^\star}[\eta_Y^2] - \bigl(\mathbb{E}_{Y \sim p^\star}[\eta_Y]\bigr)^2 \\
\;&=\; \sum_v p^\star_v\,\eta_v^2 \;-\; \Bigl(\sum_v p^\star_v\,\eta_v\Bigr)^2 \\
\;&=\; \eta^\top\!\bigl[\mathrm{diag}(p^\star) - p^\star p^{\star\top}\bigr]\eta \\
\;&=\; \eta^\top H_{\mathrm{soft}}(p^\star)\,\eta,
\end{align*}
where $H_{\mathrm{soft}}(p^\star) = \mathrm{diag}(p^\star) - p^\star p^{\star\top}$
is the categorical softmax Hessian introduced in Lemma~\ref{lem:soft-trace}.

\emph{Third cumulant.} The third central moment of $Z$ is
\[
\kappa_3(Z)
\;=\; \mathbb{E}_{Y \sim p^\star}\!\Bigl[\bigl(\eta_Y - p^{\star\top}\eta\bigr)^3\Bigr]
\;=\; \sum_{v=1}^{V} p^\star_v\,\bigl(\eta_v - p^{\star\top}\eta\bigr)^3
\;=:\; \mu_3(\eta;\,p^\star),
\]
which is a homogeneous polynomial of degree three in $\eta$.

Substituting these cumulant expressions into~\eqref{eq:cgf-cumulant} and
using~\eqref{eq:kl-as-cgf} at $\ell^\star = \bar\ell^\star$, $\eta = \bar\xi$, we get
\begin{equation}
\mathrm{KL}\bigl(\bar P^\star \,\|\, \bar P\bigr)
\;=\; \tfrac{1}{2}\,\bar\xi^\top H_{\mathrm{soft}}(\bar P^\star)\,\bar\xi
\;+\; \tfrac{1}{6}\,\mu_3(\bar\xi;\,\bar P^\star)
\;+\; R_4(\bar\xi),
\label{eq:cumulant-identity}
\end{equation}
where the remainder
$R_4(\bar\xi) := \sum_{k \ge 4}\,\tfrac{1}{k!}\,\kappa_k^{Y\sim\bar P^\star}\bigl(\bar\xi_Y - \bar P^{\star\top}\bar\xi\bigr)$
collects the fourth-order and higher contributions to the cumulant
expansion, with the superscript $Y\sim\bar P^\star$ flagging that each
$\kappa_k$ is the categorical cumulant in $Y$ at fixed $\bar\xi$ (the
outer expectation over $\bar\xi$ is taken separately in Step~2(e)).

\textbf{Step 2(e): expectation over $\bar\xi$, term by term.}
Take the expectation of~\eqref{eq:cumulant-identity} over $\bar\xi$,
holding $\bar\ell^\star$ (equivalently $\bar P^\star$) fixed; the
$\bar\xi$-randomness is independent of $\bar\ell^\star$ by
Assumption~\ref{ass:A5}'s independence of dither from training data.

\emph{Quadratic term.}
For any mean-zero random vector $X$ with covariance $\Sigma$ and any
symmetric $M$, $\mathbb{E}[X^\top M X] = \mathrm{tr}(M\,\Sigma)$. Apply
with $X = \bar\xi$, $\Sigma = \mathrm{Cov}(\bar\xi)$, and
$M = H_{\mathrm{soft}}(\bar P^\star)$ (which is symmetric and PSD as it
is the categorical Fisher information):
\[
\mathbb{E}\bigl[\bar\xi^\top H_{\mathrm{soft}}(\bar P^\star)\,\bar\xi\bigr]
\;=\; \mathrm{tr}\!\bigl(H_{\mathrm{soft}}(\bar P^\star)\cdot\mathrm{Cov}(\bar\xi)\bigr).
\]
For two symmetric PSD matrices $A,B$ with $B$ diagonal,
$\mathrm{tr}(A B) = \sum_v A_{vv} B_{vv} \le \|B\|_\mathrm{op}\,\mathrm{tr}(A)$.
With $A = H_{\mathrm{soft}}(\bar P^\star) \succeq 0$ and
$B = \mathrm{Cov}(\bar\xi)$ diagonal with entries $\le \sigma^2$ by
\eqref{eq:cov-bound},
\[
\mathrm{tr}\!\bigl(H_{\mathrm{soft}}(\bar P^\star)\cdot\mathrm{Cov}(\bar\xi)\bigr)
\;\le\; \sigma^2\,\mathrm{tr}\!\bigl(H_{\mathrm{soft}}(\bar P^\star)\bigr).
\]
Applying Lemma~\ref{lem:soft-trace}, $\mathrm{tr}(H_{\mathrm{soft}}(\bar P^\star)) \le 1$,
hence
\begin{equation}
\mathbb{E}\bigl[\bar\xi^\top H_{\mathrm{soft}}(\bar P^\star)\,\bar\xi\bigr]
\;\le\; \sigma^2.
\label{eq:quadratic-bound}
\end{equation}

\emph{Cubic term.}
The third cumulant $\mu_3(\bar\xi;\,\bar P^\star)$ is a homogeneous
polynomial of degree three in $\bar\xi$, hence an odd function of
$\bar\xi$: $\mu_3(-\bar\xi;\bar P^\star) = -\mu_3(\bar\xi;\bar P^\star)$.
By the symmetry~\eqref{eq:symm} of $\bar\xi$, for any odd integrable
function $g$,
\[
\mathbb{E}[g(\bar\xi)] \;=\; \mathbb{E}[g(-\bar\xi)] \;=\; -\,\mathbb{E}[g(\bar\xi)]
\quad\Longrightarrow\quad \mathbb{E}[g(\bar\xi)] = 0.
\]
Applying with $g(\eta) = \mu_3(\eta;\,\bar P^\star)$,
\begin{equation}
\mathbb{E}\bigl[\mu_3(\bar\xi;\bar P^\star)\bigr] \;=\; 0.
\label{eq:cubic-vanishes}
\end{equation}

\emph{Higher-order remainder.}
By the cumulant-moment relations
$\kappa_4(Z) = \mathbb{E}[Z^4] - 3(\mathbb{E}[Z^2])^2$ and similar for
higher cumulants, each $\kappa_k$ is a polynomial in the moments of $Z$
of total degree $k$. After expectation over $\bar\xi$, the fourth-order
moment terms $\mathbb{E}[\bar\xi_a\bar\xi_b\bar\xi_c\bar\xi_d]$ are
controlled by the joint independence of \eqref{eq:cov-bound}, the
mean-zero condition, and the bounded support $|\bar\xi_v| \le L_\ell$
of Assumption~\ref{ass:A5}'s clipping. Specifically, since the
coordinates are mean-zero and jointly independent, the multi-index
$(a,b,c,d) \in \{1,\ldots,V\}^4$ splits into four cases by its multiset
structure:
\begin{enumerate}
\item All four indices are distinct: $\mathbb{E}[\bar\xi_a\bar\xi_b\bar\xi_c\bar\xi_d]
      = \prod_{j \in \{a,b,c,d\}} \mathbb{E}[\bar\xi_j] = 0$ by mean-zero.
\item Exactly one repeated pair, e.g.\ $a=b$ with $c$ and $d$ both
      distinct from $a$: $\mathbb{E}[\bar\xi_a^2 \bar\xi_c \bar\xi_d]
      = \mathbb{E}[\bar\xi_a^2]\,\mathbb{E}[\bar\xi_c]\,\mathbb{E}[\bar\xi_d] = 0$
      by mean-zero on $\bar\xi_c$ and $\bar\xi_d$.
\item Two disjoint pairs, e.g.\ $a=b, c=d, a\ne c$:
      $\mathbb{E}[\bar\xi_a^2 \bar\xi_c^2]
      = \mathbb{E}[\bar\xi_a^2]\,\mathbb{E}[\bar\xi_c^2]
      \le \sigma^2 \cdot \sigma^2 = \sigma^4$.
\item All four indices coincide, $a=b=c=d$: $\mathbb{E}[\bar\xi_a^4]$
      is the fourth raw moment of the $K$-fold average. For
      independent zero-mean summands with per-summand variance
      $\sigma_{i,1}^2 \le C_q\,2^{-2B/V}$, a direct expansion using
      $\sum_{i \ne j}$ as ordered pairs ($K(K-1)$ terms) and the
      $\binom{4}{2}/2 = 3$ orderless groupings of four factors into two
      pairs yields
      $\mathbb{E}\bigl[\bigl(\sum_{i=1}^K \xi_{i,v}\bigr)^4\bigr]
      = \sum_i \mathbb{E}[\xi_{i,v}^4]
      + 3 \sum_{i\ne j} \mathbb{E}[\xi_{i,v}^2]\mathbb{E}[\xi_{j,v}^2]$,
      whence $\mathbb{E}[\bar\xi_v^4]
      = K^{-3}\mathbb{E}[\xi_{i,v}^4] + 3(1-K^{-1})\sigma^4$.
      Assumption~\ref{ass:A5}'s subtractively dithered uniform scalar
      quantizer with $B/V$ bits per coordinate over $[-L_\ell, L_\ell]$
      has quantizer half-step $\Delta/2 = L_\ell\,2^{-B/V}$, so the
      per-node dither satisfies $|\xi_{i,v}| \le L_\ell\,2^{-B/V}$ and
      $\mathbb{E}[\xi_{i,v}^4] \le L_\ell^4\,2^{-4B/V}$. Hence
      $K^{-3}\,\mathbb{E}[\xi_{i,v}^4]
      \le L_\ell^4\,2^{-4B/V}/K^3
      = 9\sigma^4/K
      \le 9\sigma^4$ for $K \ge 1$, and
      $\mathbb{E}[\bar\xi_v^4] \le (3 + 9/K)\sigma^4 = O(\sigma^4)$
      uniformly.
\end{enumerate}
Combining the four cases, $|\mathbb{E}[\bar\xi_a\bar\xi_b\bar\xi_c\bar\xi_d]|
\le C\,\sigma^4$ for a constant $C$ depending only on $L_\ell$. The
fourth cumulant satisfies $|\mathbb{E}\,\kappa_4| \le C^{(4)}\,\sigma^4$
by the cumulant--moment relation
$\kappa_4 = \mathbb{E}[Z^4] - 3(\mathbb{E}[Z^2])^2$ together with the
fourth-moment bound just established and the variance bound
$\mathbb{E}[Z^2] \le \sigma^2$ from~\eqref{eq:quadratic-bound} (here
$\sigma^2$ continues to denote the per-coordinate variance bound on
$\bar\xi_v$ from~\eqref{eq:cov-bound}; the categorical-induced variance
$\mathrm{Var}_{Y\sim\bar P^\star}(Z)$ is bounded by $\bar\xi^\top H_{\mathrm{soft}}\bar\xi$,
which is in turn bounded by $\sigma^2$ in expectation by~\eqref{eq:quadratic-bound}). All odd cumulants $\kappa_5, \kappa_7, \ldots$
vanish in expectation by the symmetry argument used for the cubic.
All even cumulants $\kappa_{2k}$ for $k \ge 3$ are $O(\sigma^{2k})$ (the
smallest of which is $O(\sigma^6)$) by the same multi-index analysis: each $2k$-th moment
$\mathbb{E}[\bar\xi_{a_1}\cdots\bar\xi_{a_{2k}}]$ splits into multiset
groupings by independence + mean-zero of the coordinates, and each
non-vanishing grouping (in which every coordinate index appears an
even number of times) is bounded by a product of $k$ per-coordinate
second moments $\le \sigma^{2k}$, with the all-equal grouping
contributing $K^{-2k+1}\,\mathbb{E}[\xi_{i,v}^{2k}]
\le K^{-2k+1}\,L_\ell^{2k}\,2^{-2kB/V}
= O(K^{-k+1}\,\sigma^{2k})$ via the half-step support bound
$|\xi_{i,v}|\le L_\ell\,2^{-B/V}$ of Assumption~\ref{ass:A5}; both are
$O(\sigma^{2k})$ for $K\ge 1$. Hence the cumulant series converges
in expectation with each term beyond $\kappa_4$ contributing at most
$O(\sigma^{2k})$, strictly subdominant to the fourth-order term in the
regime $\sigma \to 0$ ($K \to \infty$ or $B \to \infty$). Combining,
\begin{equation}
\bigl|\mathbb{E}[R_4(\bar\xi)]\bigr|
\;\le\; C_{\mathrm{rem}}\,\sigma^4
\;=\; O\!\bigl(K^{-2}\,2^{-4B/V}\bigr),
\label{eq:quartic-remainder}
\end{equation}
with $C_{\mathrm{rem}}$ depending only on $L_\ell$.

\textbf{Step 2(f): combining.}
Substituting~\eqref{eq:quadratic-bound}, \eqref{eq:cubic-vanishes},
and~\eqref{eq:quartic-remainder} into the expectation
of~\eqref{eq:cumulant-identity},
\[
\mathbb{E}_{\bar\xi}\,\mathrm{KL}\bigl(\bar P^\star \,\|\, \bar P\bigr)
\;\le\; \tfrac{1}{2}\,\sigma^2 \;+\; \tfrac{1}{6}\cdot 0 \;+\; O\!\bigl(\sigma^4\bigr)
\;=\; \frac{C_q}{2K}\,2^{-2B/V} \;+\; O\!\bigl(K^{-2}\,2^{-4B/V}\bigr).
\]
Defining $c_3 := C_q/2 = L_\ell^2/6$, this can be written as
\[
\mathbb{E}_{\bar\xi}\,\mathrm{KL}\bigl(\bar P^\star \,\|\, \bar P\bigr)
\;\le\; c_3\,\frac{1}{K}\,2^{-2B/V}
\;+\; O\!\bigl(K^{-2}\,2^{-4B/V}\bigr).
\]

\textbf{Interpretation.}
The $1/K$ prefactor tracks the $K$-fold variance reduction from
averaging $K$ independent dithered errors at the aggregator
(Step~2(a)). The factor of $V$ that one might na\"\i vely expect from
summing per-coordinate variances is absorbed by the softmax-Hessian
trace bound $\mathrm{tr}(H_{\mathrm{soft}}(p)) = 1 - \|p\|_2^2 \le 1$
(Lemma~\ref{lem:soft-trace}) applied at the trace-inequality step
in~\eqref{eq:quadratic-bound}. Intuitively: although the dither
distributes across $V$ logit coordinates, the softmax's total
curvature $\mathrm{tr}(H_{\mathrm{soft}}(p))$ is uniformly bounded by
$1$ regardless of $V$, so the bandwidth perturbation cannot exploit
$V$ independent constant-curvature directions. The quartic remainder
$O(K^{-2}\,2^{-4B/V})$ is strictly subdominant to the leading
$K^{-1}\,2^{-2B/V}$ term in the operating regime $B/V \ge 1$ (where
$2^{-2B/V} \le 1/4$), and is absorbed into the optimization-slack
budget for the union bound below. Section~\ref{app:thm1-alt} records
what happens to the bandwidth bound when either cross-coordinate
independence or per-coordinate symmetry of the dither in (\ref{ass:A5}) is weakened.

\paragraph{Step 3: probe-to-target generalization.}
The aggregator distills on the probe set, hence its distillation loss estimates
$\mathbb{E}_{X \sim Q}\,\mathrm{KL}(\bar P(\cdot \mid X) \| \hat P(\cdot \mid X))$,
not the target-marginal quantity we want. Let $\mathcal{G} = \{x \mapsto
\mathrm{KL}(\bar P(\cdot \mid x) \| P_\theta(\cdot \mid x)) : \theta \in \Theta\}$.
We tighten the Rademacher complexity of $\mathcal{G}$ via the Ledoux--Talagrand
contraction principle.

Throughout Step~3 (and the parts of Steps~2 and~4 that reference
parameter dimension), we work with the canonical direct-logit
softmax-linear parameterization, in which $\theta \in \mathbb{R}^V$
\emph{is} the logit vector and the parametric dimension $d$ of (\ref{ass:A1})
equals $V$. Under this identification, the $d$ of Step~1's $c_1 d/(Kn)$
and the $V$ of Step~3's $c_2\sqrt{V\log(V/\delta)/m}$ refer to the same
quantity. For a general softmax-linear family
$P_\theta = \mathrm{softmax}(\Phi(x)\theta)$ with feature map
$\Phi: \mathcal{X} \to \mathbb{R}^{V \times d}$ and $d \ne V$, Step~1's
$c_1 d/(Kn)$ retains the parametric dimension $d$; Step~3's Rademacher
bound has $\sqrt{V/m}$ replaced by $\sqrt{d/m}$ together with a
$\|\Phi\|_{\mathrm{op}}$ factor; and Step~3's gradient
$\nabla_\theta\,\mathrm{KL}(\bar P\|P_\theta) = \Phi(x)^\top(P_\theta - \bar P)$
picks up the $\Phi^\top$ pull-back.
The KL functional $g(\theta) := \mathrm{KL}(\bar P \| P_\theta)$ has gradient
$\nabla_\theta g = P_\theta - \bar P$ (a difference of two probability vectors),
so $\|\nabla_\theta g\|_2 \le \sqrt{2}$ uniformly in $\theta$, i.e.\ $g$ is
$\sqrt{2}$-Lipschitz in $\theta$ with respect to the $\ell^2$ norm.
Under (\ref{ass:A1}) and (\ref{ass:A5}), the parameter set $\Theta$ embeds in
$\{\theta \in \mathbb{R}^V : \|\theta\|_\infty \le L_\ell\}$, hence
$\|\theta\|_2 \le L_\ell\sqrt{V}$. The Rademacher complexity of the
$\ell^2$-bounded linear class is the classical $\le L_\ell X_2 \sqrt{V/m}$
bound~\citep{bartlettmendelson2002}, where $X_2$ is the $\ell^2$ envelope of the
input features. Composing with the $\sqrt{2}$-Lipschitz KL functional via
Ledoux--Talagrand contraction~\citep{bartlettmendelson2002, ledouxtalagrand1991},
\[
\mathfrak{R}_m(\mathcal{G}) \;\le\; \sqrt{2}\,L_\ell\,X_2\,\sqrt{V/m}.
\]
Each $g \in \mathcal{G}$ is bounded pointwise by
$G := 2 L_\ell + \log V$ (the standard bound on KL between two clipped-softmax
distributions). Standard Rademacher symmetrization (e.g.\
\citep{bartlettmendelson2002}, Theorem~8) then gives, with probability at
least $1 - \delta/3$, for every $\theta$,
\[
\left| \mathbb{E}_{X \sim Q}\, g_\theta(X)
- \frac{1}{m}\sum_{l} g_\theta(x^{(l)}) \right|
\;\le\; 2\,\mathfrak{R}_m(\mathcal{G})
\;+\; G\sqrt{\frac{\log(3/\delta)}{m}}
\;\le\; c_2 \sqrt{\frac{V \log(V/\delta)}{m}}.
\]
The change-of-measure from $Q$ to $P^\star_X$ costs the density-ratio factor
from (\ref{ass:A2}): $\mathbb{E}_{X \sim P^\star_X}\, g_\theta(X) \le \rho \cdot
\mathbb{E}_{X \sim Q}\, g_\theta(X)$. Combining,
\[
\mathbb{E}_{X \sim P^\star_X}\,\mathrm{KL}(\bar P \| \hat P)
\;\le\; \rho \cdot \widehat{\mathrm{KL}}_Q(\bar P \| \hat P)
\;+\; c_2 \, \rho \, \sqrt{\frac{V \log(V/\delta)}{m}},
\]
where $\widehat{\mathrm{KL}}_Q$ denotes the empirical probe KL.

\paragraph{Step 4: distillation fit.}
By (\ref{ass:A6}), the aggregator produces $\hat P$ satisfying
\[
\widehat{\mathrm{KL}}_Q(\bar P \| \hat P)
\;\le\; \inf_{\theta \in \Theta}
\widehat{\mathrm{KL}}_Q(\bar P \| P_\theta) \;+\; \varepsilon_{\mathrm{fit}}.
\]
Since $\bar P$ is itself an element of the softmax-linear family (up to the
aggregation noise controlled in Step 2) and $\bar P^\star \in \mathcal{F}_\Theta$
by (\ref{ass:A1}), the infimum is at most the bias from Step 2, which is already counted.
So Step 4 contributes only the slack $\varepsilon_{\mathrm{fit}}$.

\paragraph{Union bound.}
We combine the pieces via the local parametric expansion of (\ref{ass:A1}).
For softmax-linear $\mathcal{F}_\Theta$ the distilled student is
$\hat P = P_{\hat\theta}$ with $\hat\theta \in \Theta$, and the
second-order Taylor expansion of the KL functional at $\theta^\star$ gives
\[
\mathrm{KL}(P^\star \,\|\, P_{\hat\theta})
\;=\; \tfrac{1}{2}\|\hat\theta - \theta^\star\|_{I(\theta^\star)}^2
\;+\; o\bigl(\|\hat\theta - \theta^\star\|^2\bigr).
\]
Decompose $\hat\theta - \theta^\star = u + v$, with
$u := \bar\theta - \theta^\star$ the per-node-averaged MLE deviation
of Step~1 and $v := \hat\theta - \bar\theta$ the
distillation-stage deviation. Let
$L_m(\theta) := \widehat{\mathrm{KL}}_Q(\bar P \,\|\, P_\theta)$ denote
the empirical probe-KL functional. By the first-order optimality of (\ref{ass:A6}),
$\nabla_\theta L_m(\hat\theta) = O(\varepsilon_{\mathrm{fit}})$;
for softmax-linear $\mathcal{F}_\Theta$ this gradient is the empirical
moment-matching residual
$\frac{1}{m}\sum_l\bigl(P_\theta(\cdot \mid x^{(l)}) - \bar P(\cdot \mid x^{(l)})\bigr)$
(pulled back through the linear logit map), and its Hessian at $\bar\theta$
converges in expectation to the categorical Fisher information $I(\bar\theta)$
under (\ref{ass:A2}). A first-order Taylor expansion of $\nabla_\theta L_m$ around
$\bar\theta$, combined with the first-order condition, gives
$0 \approx \nabla_\theta L_m(\bar\theta) + I(\bar\theta)\,v + o(\|v\|)$, and the
implicit function theorem yields
\[
v \;\approx\; -I(\bar\theta)^{-1}\,\nabla_\theta L_m(\bar\theta)
\;=\; I(\bar\theta)^{-1}\cdot
\tfrac{1}{m}\textstyle\sum_l \bigl[\bar P(\cdot \mid x^{(l)}) - P_{\bar\theta}(\cdot \mid x^{(l)})\bigr],
\]
so $v$ is, to leading order, a Fisher-weighted empirical mean of two mean-zero
contributions: the bandwidth-induced shift $\bar P - \bar P^\star$ driven by
the logit dither $\bar\xi$ of (\ref{ass:A5}), and the empirical-vs-population
probe-sampling noise of (\ref{ass:A2}). The first-order linearization
$\bar P - \bar P^\star = H_{\mathrm{soft}}(\bar P^\star)\bar\xi + O(\|\bar\xi\|^2)$
is mean-zero in $\bar\xi$, hence so is its Fisher pullback; the
quadratic remainder is bounded separately by Step~2 and included in
$\mathbb{E}\|v\|_I^2$ rather than cancelling against the cross-term. Both
first-order sources are therefore mean-zero conditional on $u$ and
independent of the data partition that produces $u$. Consequently
\[
\mathbb{E}\langle u, v\rangle_{I(\theta^\star)} \;=\; 0,
\qquad
\mathbb{E}\|\hat\theta - \theta^\star\|_{I(\theta^\star)}^2
\;=\; \mathbb{E}\|u\|_I^2 + \mathbb{E}\|v\|_I^2 + o(\cdot),
\]
i.e.\ the parameter-space cross term generated by the naive identity
$\mathrm{KL}(P^\star\|\hat P) = \mathrm{KL}(P^\star\|\bar P^\star) +
\mathbb{E}_{P^\star}[\log(\bar P^\star/\bar P)] + \mathbb{E}_{P^\star}[\log(\bar P/\hat P)]$
vanishes in expectation. Step~1 controls
$\mathbb{E}\|u\|_I^2 \le c_1' d/(Kn) + \varepsilon_{\mathrm{opt}}$.
Steps~2--4 control $\mathbb{E}\|v\|_I^2$: the Csisz\'ar-type
Fisher-quadratic bound of Step~2, lifted to parameter space through
the empirical-distillation first-order conditions, contributes
$c_3' \cdot 2^{-2B/V}/K$; the Rademacher complexity of Step~3
contributes $c_2' \rho \sqrt{V\log(V/\delta)/m}$; and (\ref{ass:A6})
contributes $\varepsilon_{\mathrm{fit}}$. Substituting back into
the quadratic expansion and absorbing the factor of $1/2$ into the
constants $c_1, c_2, c_3$,
\[
\mathbb{E}_{X \sim P^\star_X}\!\left[\mathrm{KL}\bigl(P^\star \,\|\, \hat P\bigr)\right]
\;\le\; \tfrac{c_1 d}{Kn}
\;+\; c_3 \tfrac{1}{K} 2^{-2B/V}
\;+\; c_2 \rho \sqrt{\tfrac{V \log(V/\delta)}{m}}
\;+\; \varepsilon_{\mathrm{opt}} + \varepsilon_{\mathrm{fit}}.
\]
Splitting $\delta$ across the two probabilistic events (Step~1's MLE
tail and Step~3's Rademacher tail) and taking a union bound completes
the proof. $\square$

\subsection{Alternative bounds when cross-coordinate independence or
symmetry is weakened}\label{app:thm1-alt}

The trace-sharpened proof of Step~2 (Section~\ref{app:thm1}) relies on
two properties of the dither in Assumption~\ref{ass:A5}:
cross-coordinate independence (Step~2(a)) and per-coordinate symmetry
(Step~2(b)). Both are properties of subtractively dithered uniform
scalar quantization in the canonical scheme of~\citep{gershogray1992}.
For readers who wish to weaken these properties, this subsection
records the resulting bandwidth bounds. Throughout we retain
notation $\bar\ell$, $\bar\ell^\star$, $\bar\xi := \bar\ell - \bar\ell^\star$,
$\bar P^\star := \operatorname{softmax}(\bar\ell^\star)$,
$\bar P := \operatorname{softmax}(\bar\ell)$, and
$\sigma^2 := C_q\,2^{-2B/V}/K$ from Section~\ref{app:thm1}.

\paragraph{(A) Without cross-coordinate independence.}
Drop the cross-coordinate independence claim of (\ref{ass:A5}). Retain
mean-zero, across-nodes independence, and the per-coordinate variance
bound $\mathrm{Var}(\xi_{i,v}) \le C_q\,2^{-2B/V}$. (Whether
per-coordinate symmetry is retained or dropped does not affect the
argument that follows.) Under these (strictly weaker) conditions,
Step~2(a)'s diagonality conclusion fails: by Cauchy--Schwarz on the
per-coordinate variances,
$|\mathrm{Cov}(\bar\xi_v, \bar\xi_{v'})| \le \sqrt{\mathrm{Var}(\bar\xi_v)\,\mathrm{Var}(\bar\xi_{v'})} \le \sigma^2$,
so $\mathrm{Cov}(\bar\xi)$ can carry off-diagonal entries up to $\sigma^2$ in
magnitude (e.g.\ if the per-coordinate dither sources are shared rather than
independent across $v$, the worst case nearly saturates this bound). The relaxed coordinate-summed inequality
$\mathrm{tr}(\mathrm{Cov}(\bar\xi)) \le V \sigma^2$ still holds (it
follows from the per-coordinate variance bound via linearity of trace
alone, without independence across coordinates).

Joint symmetry of $\bar\xi$ may also fail when cross-coordinate
independence is dropped (per-coordinate symmetry of each marginal does
not imply joint symmetry of the vector). Consequently the
cubic-vanishing argument of Step~2(e) is no longer available, and we
therefore use the classical $L^2$/softmax-Jacobian-operator-norm
route, which does not rely on cubic vanishing.

\emph{Step (A.i): $L^2$ error.}
By linearity of expectation (no joint independence required),
\[
\mathbb{E}\|\bar\xi\|_2^2 \;=\; \sum_{v=1}^{V} \mathrm{Var}(\bar\xi_v)
\;\le\; V\sigma^2 \;=\; \frac{C_q\,V}{K}\,2^{-2B/V}.
\]

\emph{Step (A.ii): softmax-KL bound via Jacobian operator norm.}
For any $\ell^\star, \eta \in \mathbb{R}^V$, define
$g_{\ell^\star}(\eta) := \mathrm{KL}(\operatorname{softmax}(\ell^\star)\,\|\,\operatorname{softmax}(\ell^\star+\eta))$.
By Step~2(c)--(d) of the trace-sharpened proof, $g_{\ell^\star}(\eta)$
admits the integral Taylor representation
\[
g_{\ell^\star}(\eta) \;=\; \int_0^1 (1-t)\,\eta^\top H_{\mathrm{soft}}\bigl(\operatorname{softmax}(\ell^\star+t\eta)\bigr)\,\eta\,dt,
\]
since $g_{\ell^\star}(0) = 0$, $\nabla g_{\ell^\star}(0) = 0$, and
$\nabla^2 g_{\ell^\star}(\eta) = H_{\mathrm{soft}}(\operatorname{softmax}(\ell^\star+\eta))$.
For any $p \in \Delta(\mathcal{V})$ and any unit vector
$u \in \mathbb{R}^V$,
\[
u^\top H_{\mathrm{soft}}(p)\,u
\;=\; \mathrm{Var}_{Y \sim p}(u_Y)
\;\le\; \tfrac{1}{4}\bigl(\max_v u_v - \min_v u_v\bigr)^2
\;\le\; \tfrac{1}{2}\,\|u\|_2^2,
\]
where the first inequality is Popoviciu's variance bound and the second
uses $\max_v u_v - \min_v u_v \le \sqrt{2}\,\|u\|_2$ for any vector
(saturated at the two-atom equal-mass direction
$u = (1/\sqrt{2}, -1/\sqrt{2}, 0, \ldots, 0)$). Therefore
$\|H_{\mathrm{soft}}(p)\|_{\mathrm{op}} \le 1/2$ uniformly, and the
integrand in the Taylor representation is bounded by
$\tfrac{1}{2}\|\eta\|_2^2$, giving
\[
g_{\ell^\star}(\eta) \;\le\; \tfrac{1}{2}\,\|\eta\|_2^2 \int_0^1 (1-t)\,dt
\;=\; \tfrac{1}{4}\,\|\eta\|_2^2.
\]
Applying with $\ell^\star = \bar\ell^\star$ and $\eta = \bar\xi$ and
taking expectation,
\begin{equation}
\mathbb{E}\,\mathrm{KL}\bigl(\bar P^\star \,\|\, \bar P\bigr)
\;\le\; \tfrac{1}{4}\,\mathbb{E}\|\bar\xi\|_2^2
\;\le\; \tfrac{V}{4}\,\sigma^2
\;=\; \frac{C_q\,V}{4K}\,2^{-2B/V}
\;=\; c_3'\,\frac{V}{K}\,2^{-2B/V},
\label{eq:alt-A-bound}
\end{equation}
with $c_3' := C_q/4 = L_\ell^2/12$. This recovers the $V/K$ scaling of
the classical $L^2$/softmax-Jacobian-operator-norm route and is a
factor of $V/2$ looser than Theorem~\ref{thm:theorem1}'s bandwidth
term.

Note that the $L^2$/Jacobian route works equally well when both
cross-coordinate independence \emph{and} symmetry are dropped (it relies
only on the per-coordinate variance bound and the deterministic
operator-norm bound on $H_{\mathrm{soft}}$), so~\eqref{eq:alt-A-bound}
is a robust fallback applicable whenever the trace-sharpened argument
of Theorem~\ref{thm:theorem1} cannot be invoked.

\paragraph{(B) Without symmetry (cross-coordinate independence retained).}
Drop the per-coordinate symmetry of dither in (\ref{ass:A5}). Retain
mean-zero, full \emph{joint} independence across the index pairs $(i, v)$
(not merely pairwise zero covariance), and the per-coordinate variance
bound. Under these conditions,
$\mathrm{Cov}(\bar\xi) \preceq \sigma^2 I_V$ still holds as in
Step~2(a) (the covariance argument uses only across-nodes and
across-coordinates independence, plus mean-zero, plus the per-coordinate
variance bound; symmetry is not invoked). Consequently the quadratic
term still satisfies~\eqref{eq:quadratic-bound}:
$\mathbb{E}[\bar\xi^\top H_{\mathrm{soft}}(\bar P^\star)\,\bar\xi] \le \sigma^2$.
The leading $\sigma^2/2 = \tfrac{C_q}{2K}\,2^{-2B/V}$ contribution to the
KL bound is unchanged.

However, the cubic central moment
$\mathbb{E}[\mu_3(\bar\xi;\bar P^\star)]$ no longer vanishes
identically, because the symmetry argument of Step~2(e) (which used
$\bar\xi \stackrel{d}{=} -\bar\xi$ to kill the cubic) is no longer
available. We bound its magnitude via the support--variance pairing
that Assumption~\ref{ass:A5}'s subtractively dithered uniform scalar
quantizer automatically provides: the per-node dither has bounded
support $|\xi_{i,v}| \le L_\ell\,2^{-B/V}$ (the quantizer step) and
per-node variance $\mathrm{Var}(\xi_{i,v}) \le C_q\,2^{-2B/V}$. By
the elementary inequality $|Z|^3 \le M\cdot Z^2$ for $|Z| \le M$,
\[
|\mathbb{E}[\xi_{i,v}^3]| \;\le\; \mathbb{E}|\xi_{i,v}|^3
\;\le\; L_\ell\,2^{-B/V}\,\cdot\,C_q\,2^{-2B/V}
\;=\; L_\ell\,C_q\,2^{-3B/V}.
\]
By additivity of cumulants for independent sums,
$\kappa_3(\bar\xi_v) = K^{-3}\sum_{i=1}^{K}\kappa_3(\xi_{i,v})$, and
$|\kappa_3(\xi_{i,v})| = |\mathbb{E}[\xi_{i,v}^3]|$ since the $\xi_{i,v}$
are mean-zero, so
\[
|\kappa_3(\bar\xi_v)|
\;\le\; \frac{L_\ell\,C_q}{K^2}\,2^{-3B/V}.
\]
This is $O(K^{-2}\,2^{-3B/V})$, which equals
$O(K^{-1/2}\,\sigma^3)$ in terms of
$\sigma^3 = C_q^{3/2} K^{-3/2}\,2^{-3B/V}$ -- strictly smaller than
$\sigma^3$ by a factor $K^{-1/2}$. By cross-coordinate independence and
mean-zero of $\{\bar\xi_v\}_v$ (both retained in regime (B)), direct
expansion of $(\bar\xi_v - \bar P^{\star\top}\bar\xi)^3$ and term-by-term
expectation yield the closed form
\begin{equation}
\mathbb{E}\bigl[\mu_3(\bar\xi;\bar P^\star)\bigr]
\;=\; \sum_v \bar P^\star_v\,(1 - \bar P^\star_v)(1 - 2\bar P^\star_v)\,\kappa_3(\bar\xi_v),
\label{eq:mu3-closed-form}
\end{equation}
where the polynomial prefactor $p(1-p)(1-2p)$ is the third central moment
of a Bernoulli$(p)$. Since $|(1-p)(1-2p)| \le 1$ for $p \in [0,1]$, the
prefactor satisfies $|p(1-p)(1-2p)| \le p$, and hence
\[
\bigl|\mathbb{E}[\mu_3(\bar\xi;\bar P^\star)]\bigr|
\;\le\; \sum_v \bar P^\star_v\,|\kappa_3(\bar\xi_v)|
\;\le\; \max_v |\kappa_3(\bar\xi_v)|
\;\le\; \frac{L_\ell\,C_q}{K^2}\,2^{-3B/V}
\;=\; O\!\bigl(K^{-2}\,2^{-3B/V}\bigr),
\]
which conservative-bounds to
$|\mathbb{E}[\mu_3(\bar\xi;\bar P^\star)]| \le C_{\mu_3}\,\sigma^3
= O(K^{-3/2}\,2^{-3B/V})$ for $K \ge 1$, with $C_{\mu_3}$ depending only
on $L_\ell$ via $C_q = L_\ell^2/3$. No additional skewness-ratio
assumption is needed; the bound follows from Assumption~\ref{ass:A5}'s
support and variance bounds directly. The
higher-order remainder $R_4$ is still $O(\sigma^4)$ by the same
bounded-cumulant argument as in Step~2(e) (which used only the
covariance structure and bounded support, not symmetry).

Combining, with the sharper cubic-moment bound
$|\mathbb{E}[\mu_3(\bar\xi;\bar P^\star)]| = O(K^{-2}\,2^{-3B/V})$
from the closed form~\eqref{eq:mu3-closed-form}:
\[
\mathbb{E}_{\bar\xi}\,\mathrm{KL}\bigl(\bar P^\star \,\|\, \bar P\bigr)
\;\le\; \frac{C_q}{2K}\,2^{-2B/V} \;+\; O\!\bigl(K^{-2}\,2^{-3B/V}\bigr),
\]
which conservative-bounds further to
$\tfrac{C_q}{2K}\,2^{-2B/V} + O(K^{-3/2}\,2^{-3B/V})$ in $\sigma$-units.
The leading $K^{-1}\,2^{-2B/V}$ scaling is preserved (the trace
inequality of Step~2(e) is unaffected by the loss of symmetry); only
the higher-order remainder degrades from $O(K^{-2}\,2^{-4B/V}) = O(\sigma^4)$
in the symmetric case to $O(K^{-2}\,2^{-3B/V}) = O(K^{-1/2}\sigma^3) \subseteq O(\sigma^3)$
in the asymmetric case.

\paragraph{Summary.}
Theorem~\ref{thm:theorem1}'s bandwidth scaling is robust to the loss
of symmetry alone (the leading $K^{-1}$ rate survives, with only a
worsened higher-order remainder), but is not robust to the loss of
cross-coordinate independence (the leading rate degrades to $V/K$, a
factor of $V/2$ looser). Subtractively dithered uniform scalar
quantization, as in Assumption~\ref{ass:A5}, satisfies both properties
and so realizes the sharper bound of
Theorem~\ref{thm:theorem1}.

\subsection{Proof of Corollary~\ref{cor:het} (heterogeneous data)}\label{app:het-cor}

\begin{proof}[Proof sketch]
Per-node MLEs $\hat\theta_i$ converge to
$\theta_i^{\mathrm{proj}} := \arg\min_{\theta \in \Theta}
\mathrm{KL}(P_i \| P_\theta)$, the natural parameter for the
I-projection of $P_i$ onto $\mathcal{F}_\Theta$. For the canonical
direct-logit softmax-linear $\mathcal{F}_\Theta$ in the un-quantized
limit, the FPLD aggregator outputs $\hat P = P_{\bar\theta}$ with
parameter average $\bar\theta = \frac{1}{K}\sum_i
\theta_i^{\mathrm{proj}}$ (logit averaging coincides with parameter
averaging for direct-logit softmax-linear families). The actual
algorithm interposes per-node quantization, distillation, and probe-set
sampling on top of this idealization; those finite-bandwidth and
finite-sample gaps are exactly the four terms of
Theorem~\ref{thm:theorem1} (Step~2's $c_3/K \cdot 2^{-2B/V}$, Step~3's
$c_2\rho\sqrt{V\log(V/\delta)/m}$, Step~4's $\varepsilon_{\mathrm{fit}}$,
and Step~1's $c_1 d/(Kn)$). In what follows we use this idealized
aggregation to isolate the heterogeneity drift; the other four
contributions are inherited verbatim from
Theorem~\ref{thm:theorem1}'s proof. In finite samples,
$\hat\theta_i = \theta_i^{\mathrm{proj}} + \zeta_i$ with per-node MLE
noise $\zeta_i$ of covariance $I^{-1}/n$ independent across nodes
(distinct from the per-coordinate dither $\xi_{i,v}$ of Step~2 above
and from the score-space noise $\xi_i$ of the Theorem~\ref{thm:theorem2}
proof below);
averaging contributes the $c_1 d/(Kn)$ statistical term as in
Theorem~\ref{thm:theorem1}'s Step~1, while the heterogeneity drift
is bounded by the chain below. By the parametric quadratic
approximation of KL near $\theta^\star$ and Jensen's inequality on
$\|\cdot\|_{I(\theta^\star)}^2$,
\[
\mathrm{KL}(P^\star \| P_{\bar\theta})
\;\approx\; \tfrac{1}{2}\|\bar\theta - \theta^\star\|_I^2
\;\le\; \tfrac{1}{K}\sum_i \tfrac{1}{2}\|\theta_i^{\mathrm{proj}} - \theta^\star\|_I^2
\;\approx\; \tfrac{1}{K}\sum_i \mathrm{KL}(P^\star \| P_{\theta_i^{\mathrm{proj}}}).
\]
Jensen on $\|\cdot\|_I^2$ delivers the \emph{projected} drift bound in
the I-norm at $\theta_i^{\mathrm{proj}}$; the corollary statement carries
the raw drift $\mathrm{KL}(P^\star\|P_i)$, and the two are bridged in the
next paragraph by Csisz\'ar's Pythagorean identity (which controls
$\mathrm{KL}(P_{\theta_i^{\mathrm{proj}}} \| P^\star)$ in terms of
$\mathrm{KL}(P_i \| P^\star)$) plus a local KL-asymmetry correction of
order $O(\mathrm{KL}^{3/2})$. The latter is subdominant to the
principal $\mathrm{KL}$ term in the small-drift regime and is
absorbed into $\varepsilon_{\mathrm{fit}}$ in the conclusion.
For each $i$, Csisz\'ar's Pythagorean identity for I-projection onto
exponential families (using $P^\star \in \mathcal{F}_\Theta$ from
(\ref{ass:A1})) gives
$\mathrm{KL}(P_{\theta_i^{\mathrm{proj}}} \| P^\star) \le
\mathrm{KL}(P_i \| P^\star)$. Local symmetry of KL near $\theta^\star$
is the parametric statement that the cubic-order asymmetry of the local
expansion is subdominant to the quadratic principal part: for
$\theta_P, \theta_Q$ near $\theta^\star$,
\[
\mathrm{KL}(P_{\theta_P} \| P_{\theta_Q})
\;-\; \mathrm{KL}(P_{\theta_Q} \| P_{\theta_P})
\;=\; O\bigl(\|\theta_P - \theta_Q\|^3\bigr)
\;=\; O\bigl(\mathrm{KL}^{3/2}\bigr),
\]
where the second equality uses
$\mathrm{KL} \asymp \tfrac{1}{2}\|\theta_P-\theta_Q\|_{I}^2$. Applying
this twice (once at $(P^\star, P_{\theta_i^{\mathrm{proj}}})$ and once at
$(P^\star, P_i)$) and chaining with the Pythagorean inequality yields
$\mathrm{KL}(P^\star \| P_{\theta_i^{\mathrm{proj}}}) \le
\mathrm{KL}(P^\star \| P_i) + O(\mathrm{KL}^{3/2})$. Combining,
\[
\mathrm{KL}(P^\star \| P_{\bar\theta})
\;\le\; \tfrac{1}{K}\sum_i \mathrm{KL}(P^\star \| P_i)
\;+\; O(\mathrm{KL}^{3/2}),
\]
which becomes the additive drift term once the $O(\mathrm{KL}^{3/2})$
correction (subdominant to the principal $\mathrm{KL}$ term in the
small-drift regime) is absorbed into $\varepsilon_{\mathrm{fit}}$.
The analytical technique of Steps~2--4 of Theorem~\ref{thm:theorem1}'s
proof is unchanged; the trace-sharpened Step~2 bandwidth bound
$c_3/K \cdot 2^{-2B/V}$ (with $c_3 = C_q/2$) applies. $\square$
\end{proof}

\subsection{Proof of Theorem~\ref{thm:theorem2}}\label{app:thm2}

\paragraph{Setup.}
Write $s^\star(X, y) := \frac{1}{K}\sum_{i=1}^{K} s_i^\star(X, y)$ for the
un-quantized swarm-mean nonconformity score, i.e., the score the algorithm
would compute at $B_i = B_{\mathrm{cal}} = \infty$ (no quantization noise
but still federated retrieval through the same per-node corpora $\{C_i\}$).
Let $s^\star_j := s^\star(X_j, Y_j)$ on calibration point $j$ and
$s^\star_{\mathrm{test}} := s^\star(X, Y)$, and let
$q^\star := q^\star_{1-\alpha}$ be the empirical $(1-\alpha)$-quantile of
$\{s^\star_1, \dots, s^\star_{n_{\mathrm{cal}}}, +\infty\}$. By contrast,
the actually-computed swarm score $s_{\mathrm{swarm}}(X, y) =
\frac{1}{K}\sum_i \tilde s_i(X, y)$ uses the bandwidth-limited
quantized per-node summaries, and $\hat q$ is the actually-reconstructed
hub quantile. Coverage occurs iff $s_{\mathrm{swarm}}(X, Y) \le \hat q$.

\paragraph{Step 1 (oracle coverage).}
Suppose, hypothetically, that (i) scores were un-quantized ($s = s^\star$) and
(ii) the centralized empirical quantile $q^\star$ were used in place of the
federated estimate $\hat q$. Under (\ref{ass:B1}), the calibration scores
$\{s^\star_j\}$ and the test score $s^\star_{\mathrm{test}}$ are exchangeable, so
the standard split-conformal argument (cf.\ Vovk, Gammerman, Shafer
\citep{vovk2005}, and Lei and Wasserman~\citep{leiwasserman2014}) gives
\begin{equation}
\Pr\bigl[s^\star_{\mathrm{test}} \le q^\star\bigr]
\;\ge\; 1 - \alpha - \frac{1}{n_{\mathrm{cal}} + 1}.
\label{eq:thm2-step1}
\end{equation}
The $1/(n_{\mathrm{cal}}+1)$ term is the classical discrete-order-statistic
correction: the calibration quantile is the $\lceil (1-\alpha)
(n_{\mathrm{cal}}+1) \rceil$-th order statistic, which slightly under-covers at
the target level in finite samples.

\paragraph{Step 2 (quantile perturbation $q^\star \to \hat q$).}
Let $F$ denote the CDF of $s^\star$ under the joint law of (\ref{ass:B1}). Under (\ref{ass:B5}),
$F$ is $f_{\max}$-Lipschitz in a neighborhood of $q^\star$. Apply
Lemma~\ref{lem:quantile-stability} to the pair $(q^\star, \hat q)$:
$|F(\hat q) - F(q^\star)| \le f_{\max} |\hat q - q^\star|$.
Now (\ref{ass:B4}) provides the deviation bound for $|\hat q - q^\star|$: with
probability at least $1 - \delta$ over the calibration draw,
\[
|\hat q - q^\star|
\;\le\; \sqrt{\frac{\log(2/\delta)}{c \, n_{\mathrm{cal}}}}
\;+\; \phi(B_{\mathrm{cal}}).
\]
On the event of probability $1-\delta$ on which this deviation bound
holds, Lemma~\ref{lem:quantile-stability} gives the two-sided bound
$|F(\hat q) - F(q^\star)| \le f_{\max}\,(\sqrt{\log(2/\delta)/(c\,n_{\mathrm{cal}})}
+ \phi(B_{\mathrm{cal}}))$; taking the lower side
($F(\hat q) \ge F(q^\star) - f_{\max}(\cdots)$),
\begin{equation}
\Pr\bigl[s^\star_{\mathrm{test}} \le \hat q\bigr]
\;\ge\; \Pr\bigl[s^\star_{\mathrm{test}} \le q^\star\bigr]
\;-\; f_{\max}\sqrt{\frac{\log(2/\delta)}{c \, n_{\mathrm{cal}}}}
\;-\; f_{\max}\,\phi(B_{\mathrm{cal}})
\;=\; \Pr\bigl[s^\star_{\mathrm{test}} \le q^\star\bigr]
\;-\; \Delta_{\mathrm{FL}},
\label{eq:thm2-step2}
\end{equation}
with probability at least $1 - \delta$ over the calibration draw.

\paragraph{Step 3 (score perturbation $s^\star \to s_{\mathrm{swarm}}$).}
By the candidate-set inclusion hypothesis $K_y = K$ of (\ref{ass:B2})
(cf.\ Section~\ref{subsec:fcrag}), both $s^\star$ (un-quantized) and
$s_{\mathrm{swarm}}$ (quantized) sum over all $K$ nodes. By (\ref{ass:B3})'s
additive decomposition $\tilde s_i = s_i^\star + \xi_i$, we get
$s_{\mathrm{swarm}}(X, y) = s^\star(X, y) + \bar\xi(X, y)$, where the
aggregated noise $\bar\xi(X, y) := \frac{1}{K}\sum_i \xi_i(X, y)$
(scalar, distinct from the $V$-vector logit dither $\bar\xi$ of
Theorem~\ref{thm:theorem1}'s Step~2) has, by
independence + mean-zero of the per-node $\xi_i$,
\[
\mathbb{E}[\bar\xi \mid X] \;=\; 0,
\qquad
\mathbb{E}[\bar\xi^2 \mid X]
\;=\; \frac{1}{K^2}\sum_{i=1}^{K} \mathbb{E}[\xi_i^2 \mid X]
\;\le\; \frac{1}{K^2}\sum_{i=1}^{K} v(B_i).
\]
For any fixed threshold $u$ (in particular $u = \hat q$, which is
calibration-set-measurable and independent of the test point under
(\ref{ass:B1})), the probability gap is controlled by the noise via the
``Lipschitz CDF'' bound
$|\Pr[s_{\mathrm{swarm}} \le u | X] - \Pr[s^\star \le u | X]|
\le f_{\max}\,\mathbb{E}[|\bar\xi||X]$,
which follows from the FTC applied to the CDF of $s^\star$ on the
$f_{\max}$-bounded-density interval guaranteed by (\ref{ass:B5}).
Cauchy--Schwarz then converts the first absolute moment of $\bar\xi$
into its second moment:
$\mathbb{E}[|\bar\xi||X] \le \sqrt{\mathbb{E}[\bar\xi^2 | X]}
\le \sqrt{(1/K^2)\sum_{i=1}^{K} v(B_i)}$.
Combining and taking expectation over $X$,
\begin{equation}
| \Pr[s_{\mathrm{swarm,test}} \le \hat q]
- \Pr[s^\star_{\mathrm{test}} \le \hat q] |
\;\le\; f_{\max}\sqrt{\frac{1}{K^2}\sum_{i=1}^{K} v(B_i)}
\;=\; \Delta_{\mathrm{RAG}}.
\label{eq:thm2-score-perturb}
\end{equation}
Hence
\begin{equation}
\Pr\bigl[s_{\mathrm{swarm,test}} \le \hat q\bigr]
\;\ge\; \Pr\bigl[s^\star_{\mathrm{test}} \le \hat q\bigr]
\;-\; \Delta_{\mathrm{RAG}}.
\label{eq:thm2-step3}
\end{equation}

\paragraph{Step 4 (union).}
Chain (\ref{eq:thm2-step1}), (\ref{eq:thm2-step2}), (\ref{eq:thm2-step3}):
\begin{align*}
\Pr\bigl[Y \in \mathcal{C}_\alpha(X)\bigr]
&\;=\; \Pr\bigl[s_{\mathrm{swarm,test}} \le \hat q\bigr] \\
&\;\ge\; \Pr\bigl[s^\star_{\mathrm{test}} \le \hat q\bigr]
\;-\; \Delta_{\mathrm{RAG}} \\
&\;\ge\; \Pr\bigl[s^\star_{\mathrm{test}} \le q^\star\bigr]
\;-\; \Delta_{\mathrm{FL}}
\;-\; \Delta_{\mathrm{RAG}} \\
&\;\ge\; 1 - \alpha - \frac{1}{n_{\mathrm{cal}} + 1}
\;-\; \Delta_{\mathrm{FL}} - \Delta_{\mathrm{RAG}}.
\end{align*}
The first inequality is (\ref{eq:thm2-step3}), holding deterministically
given the realization of $\hat q$ and the score distributions. The second is
(\ref{eq:thm2-step2}), holding with probability at least $1 - \delta$ over the
calibration draw. The third is (\ref{eq:thm2-step1}). $\square$

\subsection{Proof of Corollary~\ref{cor:efficiency} (expected set size)}\label{app:eff-cor}

\begin{proof}
Expand the cardinality as a sum of acceptance indicators and take
expectation, then convert the average to a probability under a uniformly
random candidate:
\[
\mathbb{E}|\mathcal{C}_\alpha(X)|
\;=\; \sum_{y \in \mathcal{Y}} \Pr[s_{\mathrm{swarm}}(X, y) \le \hat q]
\;=\; |\mathcal{Y}| \cdot
\Pr_{X, \tilde y}\!\left[s_{\mathrm{swarm}}(X, \tilde y) \le \hat q\right],
\]
where $\tilde y \sim \mathrm{Unif}(\mathcal{Y})$ is independent of $X$
and the calibration draw. By (\ref{ass:B6}) applied at $u = \hat q$,
which lies in the (\ref{ass:B5}) neighborhood by construction,
\[
\Pr_{X, \tilde y}\!\left[s_{\mathrm{swarm}}(X, \tilde y) \le \hat q\right]
\;\le\;
\Pr_{X, Y}\!\left[s_{\mathrm{swarm}}(X, Y) \le \hat q\right]
\;=\;
\Pr[Y \in \mathcal{C}_\alpha(X)].
\]
The right-hand side is the standard split-conformal acceptance
probability. The Theorem~\ref{thm:theorem2} chain admits a symmetric
upper-tail under (\ref{ass:B5})'s no-ties / continuous-density
assumption (which excludes mass-points at $q^\star$): Step~1's
split-conformal lower bound is then matched by
$\Pr[s^\star_{\mathrm{test}} \le q^\star] \le 1 - \alpha +
1/(n_{\mathrm{cal}}+1)$, and the quantile-perturbation
(Lemma~\ref{lem:quantile-stability}) and score-perturbation
(\ref{eq:thm2-score-perturb}) bounds in Steps~2--3 are stated as
absolute-value inequalities, so the upper sides match the lower
sides used in Theorem~\ref{thm:theorem2}'s proof; chaining gives
\[
\Pr[Y \in \mathcal{C}_\alpha(X)]
\;\le\; 1 - \alpha + \frac{1}{n_{\mathrm{cal}}+1}
+ \Delta_{\mathrm{FL}} + \Delta_{\mathrm{RAG}}.
\]
Multiplying through by $|\mathcal{Y}|$ gives the corollary.
With ties at $q^\star$, the matching upper bound $1 - \alpha + 1/(n_{\mathrm{cal}}+1)$
in Step~1 can fail; in our setting the score is the continuous truncated
log-likelihood under the smooth $\bar P^\star$ of (\ref{ass:A1}), so ties
have probability zero, but the corollary statement should be read with
this caveat when the score is discrete (e.g.\ under very low
calibration-bandwidth budgets that quantize the calibration scores onto
a finite grid).
\end{proof}

\subsection{Proof of Corollary~\ref{cor:theorem3} (training-time propagation)}\label{app:thm3}

\begin{proof}
Assumption~(B5$'$) (the joint-density regularity stated alongside
Corollary~\ref{cor:theorem3}) is satisfied by smooth parametric models
where $\Delta = \log(P^\star/\hat P)$ is locally smooth in the parameter
perturbation; both the synthetic $n$-gram setup of
Appendix~\ref{app:e2-details} and the softmax-linear scoring of
Section~\ref{ssec:e4} fall in this regime, but the conclusion of
Corollary~\ref{cor:theorem3} \emph{requires} (B5$'$) and does not follow
from (\ref{ass:B5}) alone.
The Tightness paragraph in \S\ref{sec:thm3} discusses the failure
mode for adversarial $\Delta$.

\textbf{Step 1 (training-time TV via Pinsker + Jensen).}
By Pinsker's inequality applied pointwise,
$d_{\mathrm{TV}}(P^\star(\cdot | X), \hat P(\cdot | X))
\le \sqrt{(1/2)\,\mathrm{KL}(P^\star(\cdot | X) \| \hat P(\cdot | X))}$.
Taking expectation over $X \sim P^\star_X$ and applying Jensen's inequality
to the concave function $\sqrt{\cdot}$,
\begin{equation}
\mathbb{E}_X d_{\mathrm{TV}}\bigl(P^\star(\cdot | X), \hat P(\cdot | X)\bigr)
\;\le\; \sqrt{\tfrac{1}{2}\,\mathbb{E}_X\!\left[
\mathrm{KL}\bigl(P^\star(\cdot | X) \| \hat P(\cdot | X)\bigr)
\right]}.
\label{eq:thm3-pinsker-jensen}
\end{equation}

\textbf{Step 2 (score-perturbation via indicator-difference + density bound).}
Fix a calibration-set-measurable threshold $u$ (e.g., $u = \hat q$).
Comparing the $P^\star$-pushforward of the plug-in score
$s = -\log\hat P$ to that of the oracle score $s^\star = -\log P^\star$:
since both indicators $\mathbf{1}[s \le u]$ and $\mathbf{1}[s^\star \le u]$
are functions of the same $(X, Y) \sim P^\star_{X,Y}$, the difference
satisfies
\[
\bigl|\mathbf{1}[s \le u] - \mathbf{1}[s^\star \le u]\bigr|
\;\le\; \mathbf{1}\!\left[\,\min(s, s^\star) \le u \le \max(s, s^\star)\right].
\]
By Assumption~(B5$'$) of Corollary~\ref{cor:theorem3}, the
conditional density of $s^\star(Y)$ given $\Delta(Y) = s(Y) - s^\star(Y)
= \delta$ is bounded by $f_{\max}$. Conditioning on $\Delta = \delta$,
the indicator-difference event sits in an interval of length $|\delta|$
adjacent to $s^\star(Y)$, so
$\Pr[\,\min(s, s^\star) \le u \le \max(s, s^\star) \mid \Delta = \delta]
\le f_{\max}\,|\delta|$ by the FTC. Taking expectation over $\Delta$,
\[
|\Pr_{P^\star}[s \le u] - \Pr_{P^\star}[s^\star \le u]|
\;\le\; f_{\max}\,\mathbb{E}_{P^\star}|s(X,Y) - s^\star(X,Y)|.
\]
To bound $\mathbb{E}_{P^\star}|s - s^\star|$ in terms of $\mathrm{KL}$,
write $f := \log(P^\star/\hat P)$ so that $|s - s^\star| = |f|$ on
the support of $P^\star$. Splitting $f$ into positive and negative parts,
\[
\mathbb{E}_{P^\star}|f|
\;=\; \mathbb{E}_{P^\star}[f] + 2\,\mathbb{E}_{P^\star}[(-f)_+]
\;=\; \mathrm{KL}\bigl(P^\star \,\|\, \hat P\bigr)
\;+\; 2\!\int_{\hat P > P^\star} P^\star \log(\hat P/P^\star)\,dy.
\]
Apply $\log(1 + t) \le t$ to the integrand:
$P^\star \log(\hat P/P^\star) \le P^\star (\hat P/P^\star - 1) = \hat P - P^\star$
on $\{\hat P > P^\star\}$, so
\[
\int_{\hat P > P^\star} P^\star \log(\hat P/P^\star)\,dy
\;\le\; \int_{\hat P > P^\star} (\hat P - P^\star)\,dy
\;=\; d_{\mathrm{TV}}(P^\star, \hat P).
\]
Hence pointwise in $X$,
$\mathbb{E}_{P^\star_{Y|X}}|f| \le \mathrm{KL}(P^\star\|\hat P) +
2\,d_{\mathrm{TV}}(P^\star, \hat P)$. Taking $\mathbb{E}_X$, applying
Pinsker $d_{\mathrm{TV}} \le \sqrt{\mathrm{KL}/2}$ pointwise, then
Jensen's inequality on the concave $\sqrt{\cdot}$,
\[
\mathbb{E}_{P^\star_{XY}}|s - s^\star|
\;\le\; \mathbb{E}_X\,\mathrm{KL}\bigl(P^\star\|\hat P\bigr)
\;+\; \sqrt{2\,\mathbb{E}_X\,\mathrm{KL}\bigl(P^\star\|\hat P\bigr)}.
\]
Combining with the indicator-difference bound,
\[
|\Pr_{P^\star}[s \le u] - \Pr_{P^\star}[s^\star \le u]|
\;\le\; f_{\max}\!\left(
\mathbb{E}_X\,\mathrm{KL}\bigl(P^\star\|\hat P\bigr)
+ \sqrt{2\,\mathbb{E}_X\,\mathrm{KL}\bigl(P^\star\|\hat P\bigr)}
\right) =: \Delta_{\mathrm{train}}.
\]
In the small-$\mathrm{KL}$ regime the second summand dominates and
$\Delta_{\mathrm{train}} \asymp f_{\max}\sqrt{2\,\mathbb{E}_X\,\mathrm{KL}}$,
preserving the Pinsker $\sqrt{\,\cdot\,}$ shape of (\ref{eq:thm3-pinsker-jensen}).

\textbf{Step 3 (chaining).}
Inserting this score perturbation into the four-step argument of
Theorem~\ref{thm:theorem2} as an additional subtractive slack on top of
$\Delta_{\mathrm{RAG}}$ gives the displayed inequality. The explicit
bound on $\Delta_{\mathrm{train}}$ in the corollary statement follows
by substituting Theorem~\ref{thm:theorem1}'s right-hand side for the
expected KL, and taking a union bound over the training event
(probability $1 - \delta$ in Theorem~\ref{thm:theorem1}) and the
calibration event (probability $1 - \delta$ in
Theorem~\ref{thm:theorem2}); splitting $\delta$ across the two is
absorbed into the constants. $\square$
\end{proof}

\section{Per-experiment empirical details}\label{sec:supp-empirical}

\subsection{Summary of empirical takeaways}\label{app:e-summary}

The six experiments verify each load-bearing theoretical claim of
the paper.

\begin{itemize}
\item \textbf{Synthetic KL training rate matches the additive
prediction.} The empirical KL tracks Theorem~\ref{thm:theorem1}'s
five-axis envelope; the $n$-sweep recovers an empirical log--log
slope of $-0.81$ between $n = 3\!\cdot\!10^4$ and $n = 10^5$,
approaching the $d/(Kn)$ asymptote of $-1$, and the
$n_{\mathrm{bits}}$ sweep traces the exponential-decay envelope of
the quantization term (Section~\ref{ssec:e1}).
\item \textbf{Data heterogeneity is upper-bounded by the analytical
drift slack.} Empirical KL stays below
$\mathrm{KL}_{\mathrm{homo}} + \tfrac{1}{K}\sum_i \mathrm{KL}(P^\star
\| P_i)$ at every $(K, \mathrm{drift})$ point; the
$\mathrm{drift} = 0$ row reduces to Theorem~\ref{thm:theorem1}'s
homogeneous setting, and the $K$-axis shows additional nodes
counteracting drift via statistical pooling
(Appendix~\ref{app:e1_5-details}).
\item \textbf{Synthetic coverage hugs the target across calibration
sweeps.} Empirical coverage stays at the $1-\alpha = 0.9$ target
across the $n_{\mathrm{cal}}$, $B_i$, and $B_{\mathrm{cal}}$ axes;
the predicted lower bound climbs monotonically with bandwidth, as
the $\Delta_{\mathrm{FL}}$ and $\Delta_{\mathrm{RAG}}$ slack schedule
predicts (Appendix~\ref{app:e2-details}).
\item \textbf{Bandwidth tax decays exponentially on GPT-2-small.}
FPLD perplexity on WikiText-2 falls along the $n_{\mathrm{bits}}$
axis from $93.5$ at $n_{\mathrm{bits}} = 2$ to $54.7$ at
$n_{\mathrm{bits}} = 8$ to the no-quant floor at $43.1$, replicating
Theorem~\ref{thm:theorem1}'s $\tfrac{1}{K}\,2^{-2 n_{\mathrm{bits}}}$
envelope on a real LM; the multi-seed FPLD-vs-FedDF gap of $15.6$
ppl at $n_{\mathrm{bits}} = 8$ is the operational bandwidth tax at
the operating point (Appendix~\ref{app:e3-bandwidth-details}).
\item \textbf{End-to-end coverage holds across three domains and
three miscoverage levels.} FC-RAG empirical coverage tracks the
$1-\alpha$ target across DBpedia, AG News, and MMLU at
$\alpha \in \{0.05, 0.10, 0.20\}$ with gap $\le 0.022$ in every
cell of the $3 \times 3$ verification table; set size decreases
monotonically in $B_i$ (Corollary~\ref{cor:efficiency}'s efficiency
direction). DBpedia reaches $\mathrm{acc}@1 = 0.604$ at $B_i = 32$;
MMLU stays at chance under graceful widening
(Section~\ref{ssec:e4}).
\item \textbf{Pinsker chain holds end-to-end on real LMs.} Training
FPLD at $n_{\mathrm{bits}} \in \{2, 4, 6, 8, 12, \text{no-quant}\}$
and deploying through FC-RAG maintains coverage in $[0.900, 0.927]$
on DBpedia and AG News across the entire training-bandwidth range;
the bandwidth axis manifests in efficiency (set size, accuracy)
rather than in coverage, validating
Corollary~\ref{cor:theorem3}'s Pinsker propagation chain
(Appendix~\ref{app:e5-details}).
\end{itemize}

\subsection{E1 per-point n-gram sweep tables}\label{app:e1-sweep}

We tabulate the per-axis grid and full empirical mean expected KL
(40 seeds, $\pm 1\sigma$) for the five one-at-a-time sweeps of
Section~\ref{ssec:e1}, with defaults $(K, n, m, n_{\mathrm{bits}})
= (4, 3{,}000, 3{,}000, 8)$ and $V = 256$ when held fixed.
Tables~\ref{tab:e1-K}--\ref{tab:e1-V} contain every empirical value
plotted in Figure~\ref{fig:e1}; each table's caption relates the
observed pattern to the corresponding term in
Theorem~\ref{thm:theorem1}'s additive bound. The
$n_{\mathrm{bits}} = 4$ point dips $\approx 15\sigma$ below its
neighbors due to a quantizer-step-vs-smoothing-bias accidental
cancellation at clip $= 20$.

\begin{table}[h]
\centering
\caption{Synthetic n-gram, $K$-sweep ($n = 3{,}000$, $m = 3{,}000$,
$n_{\mathrm{bits}} = 8$, $V = 256$). Empirical KL plateaus at the
$\varepsilon_{\mathrm{fit}}$ / smoothing-bias floor: at small $K$ the
$d/(Kn)$ statistical term is dominated by this floor (it is also
small in absolute terms relative to $d = V^2 = 65{,}536$ for $K \le 16$,
i.e.\ $Kn \le 48{,}000$); at larger $K$ the statistical term shrinks
further but the empirical curve stays at the $\varepsilon_{\mathrm{fit}}$
floor itself, which is the binding constraint in this regime.}
\label{tab:e1-K}
\begin{tabular}{rcc}
\toprule
$K$ & Mean KL & $1\sigma$ \\
\midrule
$1$   & $0.4557$ & $0.0038$ \\
$2$   & $0.4341$ & $0.0033$ \\
$4$   & $0.4253$ & $0.0030$ \\
$8$   & $0.4214$ & $0.0029$ \\
$16$  & $0.4197$ & $0.0029$ \\
$32$  & $0.4189$ & $0.0029$ \\
$64$  & $0.4184$ & $0.0030$ \\
$128$ & $0.4183$ & $0.0030$ \\
$256$ & $0.4182$ & $0.0030$ \\
\bottomrule
\end{tabular}
\end{table}

\begin{table}[h]
\centering
\caption{Synthetic n-gram, $n$-sweep ($K = 4$, $m = 3{,}000$,
$n_{\mathrm{bits}} = 8$, $V = 256$). The cleanest rate panel: $40\times$
KL reduction across the sweep, log--log slope $\approx -0.81$ between
$n = 3\!\cdot\!10^4$ and $n = 10^5$, approaching the $-1$ asymptote
of $d/(Kn)$.}
\label{tab:e1-n}
\begin{tabular}{rcc}
\toprule
$n$ & Mean KL & $1\sigma$ \\
\midrule
$10^2$        & $0.4897$ & $0.0045$ \\
$3\!\cdot\!10^2$ & $0.4848$ & $0.0043$ \\
$10^3$        & $0.4684$ & $0.0039$ \\
$3\!\cdot\!10^3$ & $0.4253$ & $0.0030$ \\
$10^4$        & $0.3212$ & $0.0018$ \\
$3\!\cdot\!10^4$ & $0.1823$ & $0.0009$ \\
$10^5$        & $0.0684$ & $0.0004$ \\
\bottomrule
\end{tabular}
\end{table}

\begin{table}[h]
\centering
\caption{Synthetic n-gram, $m$-sweep ($K = 4$, $n = 3{,}000$,
$n_{\mathrm{bits}} = 8$, $V = 256$). KL is essentially constant at
$0.4253$: the probe-generalization term $\sqrt{V \log V/m}$ is
already dominated by the smoothing-bias floor at every $m$ tested.}
\label{tab:e1-m}
\begin{tabular}{rcc}
\toprule
$m$ & Mean KL & $1\sigma$ \\
\midrule
$10^2$        & $0.4255$ & $0.0057$ \\
$3\!\cdot\!10^2$ & $0.4253$ & $0.0037$ \\
$10^3$        & $0.4253$ & $0.0029$ \\
$3\!\cdot\!10^3$ & $0.4253$ & $0.0030$ \\
$10^4$        & $0.4253$ & $0.0030$ \\
$3\!\cdot\!10^4$ & $0.4253$ & $0.0030$ \\
\bottomrule
\end{tabular}
\end{table}

\begin{table}[h]
\centering
\caption{Synthetic n-gram, $n_{\mathrm{bits}}$-sweep ($K = 4$, $n = 3{,}000$,
$m = 3{,}000$, $V = 256$). Exponential decay in the small-$B$ regime
($n_{\mathrm{bits}} \in \{2, 3, 4\}$); the $n_{\mathrm{bits}} = 4$
point sits $\approx 15\sigma$ below its neighbors, attributed to a
quantizer-step $\leftrightarrow$ Laplace-smoothing-bias accidental
cancellation (clip $= 20$, step $\Delta = 2.5$ aligns with the
$\beta = 0.5$ smoothing-bias scale).}
\label{tab:e1-nbits}
\begin{tabular}{rcc}
\toprule
$n_{\mathrm{bits}}$ & Mean KL & $1\sigma$ \\
\midrule
$2$  & $0.4921$ & $0.0045$ \\
$3$  & $0.4573$ & $0.0057$ \\
$4$  & $0.3855$ & $0.0027$ \\
$5$  & $0.4226$ & $0.0030$ \\
$6$  & $0.4267$ & $0.0030$ \\
$7$  & $0.4229$ & $0.0029$ \\
$8$  & $0.4253$ & $0.0030$ \\
$10$ & $0.4262$ & $0.0030$ \\
$12$ & $0.4262$ & $0.0030$ \\
\bottomrule
\end{tabular}
\end{table}

\begin{table}[h]
\centering
\caption{Synthetic n-gram, $V$-sweep ($K = 4$, $n = 3{,}000$, $m = 3{,}000$,
$n_{\mathrm{bits}} = 8$). Monotone increase, well below the
$V^2/(Kn)$ statistical envelope ($d = V(V-1) \approx V^2$ for the
bigram model); the empirical curve traces a sub-$V^2$ shape and
saturates as the smoothing-bias floor takes over at large $V$.}
\label{tab:e1-V}
\begin{tabular}{rcc}
\toprule
$V$ & Mean KL & $1\sigma$ \\
\midrule
$64$   & $0.1256$ & $0.0027$ \\
$128$  & $0.2969$ & $0.0033$ \\
$256$  & $0.4253$ & $0.0030$ \\
$512$  & $0.4772$ & $0.0025$ \\
$1024$ & $0.4932$ & $0.0012$ \\
\bottomrule
\end{tabular}
\end{table}

\subsection{Letter-token vs.\ fullname scoring}\label{app:scoring}

The MCQ scoring choice affects empirical $\mathrm{acc}@1$ on every
benchmark while leaving conformal coverage at the $1-\alpha$ target.
We compare two scoring methods on the four-class MCQ setup. Letter
scoring uses
$s_i(X, y) = -\log p_{\theta_i}(\text{``}\,Y\text{''} \mid \mathrm{prompt})$
where $Y \in \{\text{A}, \text{B}, \text{C}, \text{D}\}$ is the
single-letter token; the inter-letter score gap is then dominated by
the GPT-2-small per-letter token bias rather than by the answer
content. Fullname scoring uses the per-token-averaged NLL of the
full class-name string (e.g.\ ``\textit{company}'',
``\textit{athlete}'', ``\textit{world}'', ``\textit{biology}'') as
the score, which removes the per-letter bias and exposes the
answer-content score gap. Both methods are valid nonconformity
scores under Theorem~\ref{thm:theorem2}'s assumptions.

\begin{table}[h]
\centering
\caption{Letter vs.\ fullname scoring at $B_i = 32$, $\alpha = 0.10$,
$3$ seeds. Coverage tracks the $0.9$ target under both scoring
methods on all three benchmarks; fullname scoring substantively
lifts $\mathrm{acc}@1$ ($2.3\times$ on DBpedia, $1.9\times$ on AG
News, $1.3\times$ on MMLU) and shrinks the mean prediction-set size
on the easier benchmarks (DBpedia, AG News).}
\label{tab:scoring-compare}
\begin{tabular}{lcccc}
\toprule
Dataset & Scoring & Coverage & Set size (out of 4) & $\mathrm{acc}@1$ \\
\midrule
DBpedia & letter   & $0.904 \pm 0.024$ & $3.56 \pm 0.06$ & $0.261 \pm 0.013$ \\
DBpedia & fullname & $0.909 \pm 0.021$ & $2.23 \pm 0.04$ & $0.604 \pm 0.003$ \\
AG News & letter   & $0.917 \pm 0.025$ & $3.81 \pm 0.04$ & $0.238 \pm 0.007$ \\
AG News & fullname & $0.915 \pm 0.010$ & $2.42 \pm 0.01$ & $0.452 \pm 0.006$ \\
MMLU    & letter   & $0.916 \pm 0.028$ & $3.74 \pm 0.04$ & $0.203 \pm 0.007$ \\
MMLU    & fullname & $0.888 \pm 0.012$ & $3.46 \pm 0.06$ & $0.268 \pm 0.010$ \\
\bottomrule
\end{tabular}
\end{table}

The takeaway is that the conformal coverage guarantee is robust to
the choice of scorer ((\ref{ass:B6})'s monotonicity holds for both),
but the prediction-set efficiency is not: a scorer better aligned to
the answer-content distribution gives smaller prediction sets at the
same coverage. Section~\ref{ssec:e4}'s fullname-scoring main-text
headline reflects this efficiency advantage on benchmarks where
GPT-2-small is competent (DBpedia, AG News); MMLU sits near chance
under both scorers because the underlying scorer lacks
academic-subject competence at this model scale.

\subsection{E4 benchmark construction details}\label{app:e4-setup}

The three benchmarks of Section~\ref{ssec:e4} are constructed as
follows. The DBpedia node-topics are
$\{\textit{company},\textit{athlete},\textit{animal},\textit{plant}\}$;
AG News uses
$\{\textit{world},\textit{sports},\textit{business},\textit{scitech}\}$;
MMLU uses high-school
\{\texttt{statistics},\texttt{physics},\texttt{biology},\texttt{world\_history}\}.
Per-topic retrieval corpora are built from non-test splits of the
same domain so a query never retrieves its own row as context.
Each query is split $50/50$ into calibration and test pools
($n_{\mathrm{cal}} = n_{\mathrm{test}} \approx 456$). We sweep
$B_i \in \{1, 2, 4, 8, 16, 32, 64, 128, 256, 512\}$ at $\alpha = 0.1$
and $3$ seeds, and additionally run an $\alpha$-sweep
$\alpha \in \{0.05, 0.10, 0.20\}$ at $B_i = 32$ to verify coverage
tracks the target across miscoverage levels. Each MCQ option is
scored by all four topic-nodes, so $K_y = K = 4$ uniformly and
(\ref{ass:B2})'s candidate-set inclusion clause holds by construction.

\subsection{E4 fine-grained $B_i$ sweep}\label{app:e4-fine-bi}

To locate the bandwidth elbow more precisely than the power-of-$2$
grid in Figure~\ref{fig:e4}, we additionally sweep $B_i \in \{4, 6,
8, 10, 12, 14, 16, 20, 24, 28, 32, 40, 48\}$ at $5$ seeds on DBpedia
and AG News under fullname scoring. The two benchmarks elbow at
different bandwidth thresholds. DBpedia saturates at set size $4.0$
through $B_i = 14$ and drops to $2.37$ at $B_i = 16$. AG News elbows
earlier: it saturates through $B_i = 10$ and drops to $2.79$ at
$B_i = 12$. AG News's earlier elbow reflects its slightly easier
inter-class score gap (the four AG News topics are more lexically
separable than the four DBpedia entity types in the GPT-2-small
score space), so a coarser quantization preserves the inter-class
ordering at lower bandwidth.

\begin{figure}[h]
\centering
\includegraphics[width=\textwidth]{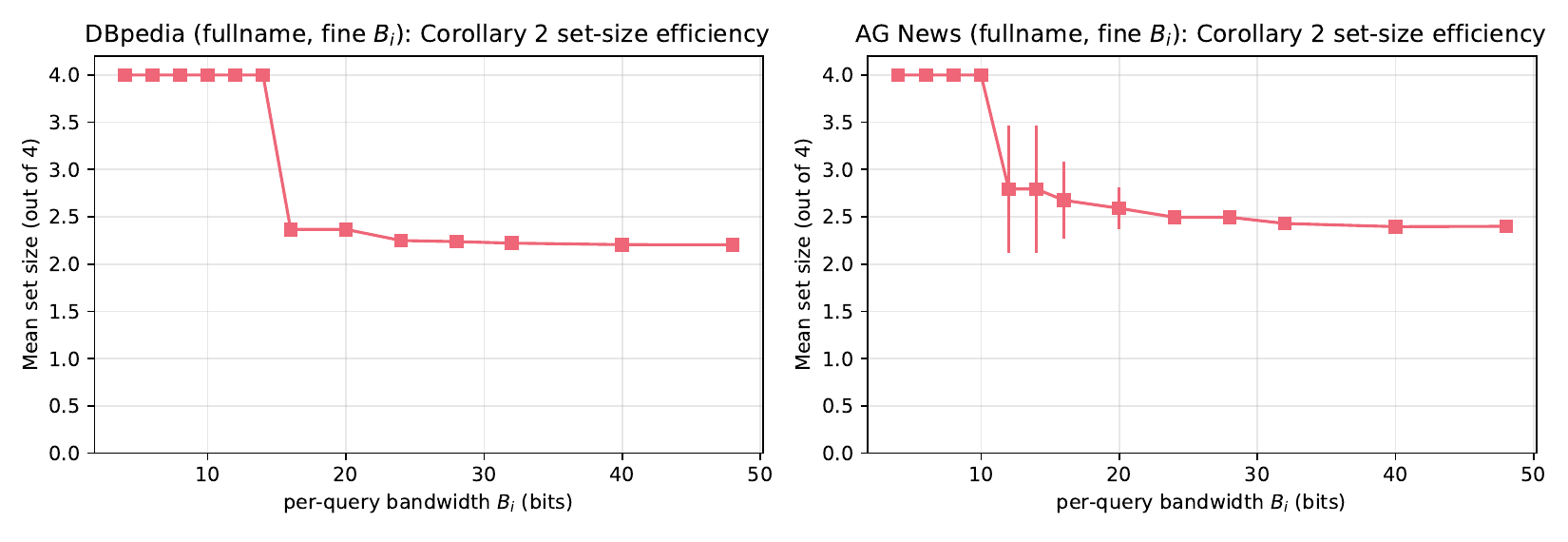}
\caption{Fine-$B_i$ grid: mean set size on DBpedia (left) and AG
News (right) under fullname scoring, $5$ seeds. DBpedia elbows
between $B_i = 14$ and $B_i = 16$; AG News elbows earlier between
$B_i = 10$ and $B_i = 12$. Corollary~\ref{cor:efficiency}'s
set-size-vs-$B_i$ curve resolved near the transition.}
\label{fig:e4-fine-bi}
\end{figure}

\subsection{E4 MMLU coverage check}\label{app:e4-mmlu}

Figure~\ref{fig:e4-mmlu} reports the MMLU 4-subject coverage and
set-size panels under fullname scoring across the canonical
power-of-$2$ $B_i$ grid. Coverage reaches the $0.9$ target at
$B_i \geq 16$ ($0.92$ at $B_i = 16$, $0.89$ at $B_i \geq 32$); set
size sits at $3.46$ at $B_i = 32$, larger than DBpedia ($2.23$) or
AG News ($2.42$) at the same bandwidth, because GPT-2-small's MMLU
score gap between the correct and incorrect choices is small.

\begin{figure}[h]
\centering
\includegraphics[width=\textwidth]{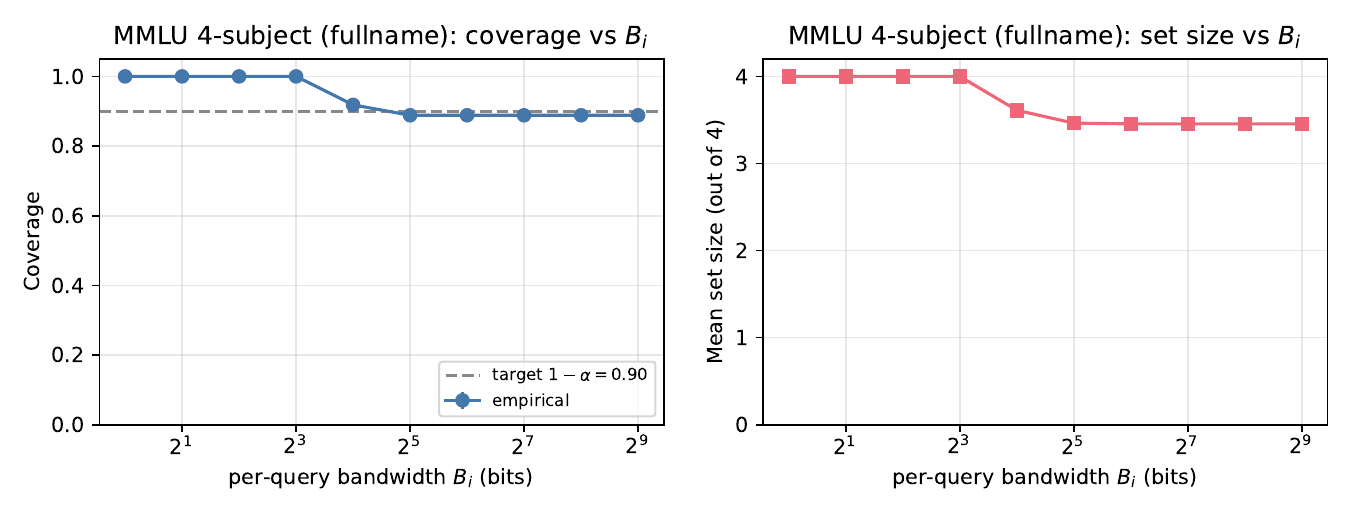}
\caption{MMLU 4-subject coverage and set-size panels under
fullname scoring, $3$ seeds, $\alpha = 0.10$. Conformal coverage
remains at the $0.9$ target despite the underlying scorer operating
near chance level on academic-subject reasoning, with the prediction
set widening to $3.46$ to absorb scorer uncertainty.}
\label{fig:e4-mmlu}
\end{figure}

\subsection{E3 multi-seed verification at $n_{\mathrm{bits}}=8$}\label{app:e3-multiseed}

We rerun FPLD, FedDF~\citep{feddf2020} (the no-quantization
probability-space distillation analogue), FedAvg, and the pretrained
checkpoint at $3$ training seeds at the operating point
$n_{\mathrm{bits}} = 8$ (Table~\ref{tab:e3-multiseed}). The
bandwidth-tax measurement (the FPLD-vs-FedDF perplexity gap) is
$15.60$ ppl ($\sigma_{\mathrm{FPLD}} = 1.19$,
$\sigma_{\mathrm{FedDF}} = 0.76$), exceeding both methods' per-seed
standard deviations by an order of magnitude. This is the real-LM
analogue of FedDF being FPLD's no-bandwidth-budget upper bound: the
gap reflects Theorem~\ref{thm:theorem1}'s quantization term plus the
asymmetry between logit-space (FPLD) and probability-space (FedDF)
aggregation rules.

\begin{table}[h]
\centering
\caption{Multi-seed at $K = 4$, $T_{\mathrm{rounds}} = 5$,
$E_{\mathrm{local}} = 1$, $m = 2{,}048$, $n_{\mathrm{bits}} = 8$,
clip $\pm 40$, $3$ training seeds. Lower is better. The FPLD--FedDF
gap is the empirical $8$-bit bandwidth tax on a real LM.}
\label{tab:e3-multiseed}
\begin{tabular}{lcc}
\toprule
Method & Perplexity (mean $\pm 1\sigma$) & $\Delta$ vs.\ FedDF \\
\midrule
Pretrained (no fine-tune)               & $59.15$           & $+13.74$ \\
FedAvg                                  & $37.05 \pm 0.10$  & $-8.36$  \\
FedDF (no quantization)                 & $45.41 \pm 0.76$  & $0.00$   \\
\textbf{FPLD ($n_{\mathrm{bits}} = 8$)} & $\mathbf{61.01 \pm 1.19}$ & $\mathbf{+15.60}$ \\
\bottomrule
\end{tabular}
\end{table}

Centralized and single-shard fine-tuning rows are omitted because
at this lr ($5\!\cdot\!10^{-5}$) and epoch budget ($5$) vanilla
AdamW is past the overfitting elbow and shifts by tens of ppl across
hardware (the federated methods reproduce within $\sim 6$ ppl); a
properly-regularized centralized baseline is left to future work.

\subsection{E3 large-$K$ and non-i.i.d.\ extensions}\label{app:e3-extensions}

For completeness we report two extensions to the GPT-2-small
bandwidth-tax experiment that fall outside
Theorem~\ref{thm:theorem1}'s homogeneous-data regime
((\ref{ass:A3})). A large-$K$ sweep at $K \in \{4, 8, 16, 32\}$ with
the total training-data budget held constant at $16{,}000$
WikiText-2 sequences shrinks each shard to $N/K$ blocks; FPLD
perplexity grows with $K$ in this regime as the per-shard
statistical estimator becomes data-starved (the per-shard local-MLE
bias grows even as the total-pooled $d/(Kn)$ stays fixed by total
data). A non-i.i.d.\ sweep using a topic-clustered partition
(capacitated K-means on TF-IDF-weighted token-presence features)
tests robustness under data heterogeneity; FPLD perplexity
increases over the i.i.d.\ baseline by an amount comparable to the
synthetic data-heterogeneity experiment's
drift-induced KL increase, consistent with
Corollary~\ref{cor:het}. Both regimes are outside the assumptions
of Theorem~\ref{thm:theorem1} and we accordingly do not promote
their results to the main text; the JSON results and configs are
released with the paper.

\subsection{E1.5 heterogeneous-data extension}\label{app:e1_5-details}

This experiment tests Corollary~\ref{cor:het}'s data-heterogeneity
slack on synthetic n-gram ground truth: does empirical KL track the
analytical drift term $\tfrac{1}{K}\sum_i \mathrm{KL}(P^\star \,\|\,
P_i)$? We construct $K$ per-node distributions
$P_i = \mathrm{softmax}(\ell^\star + \mathrm{drift} \cdot \varepsilon_i)$
with $\varepsilon_i \overset{\mathrm{iid}}{\sim} \mathcal{N}(0, I)$,
perturbing a common base $\ell^\star \in \mathbb{R}^{V \times V}$
($V = 256$, $k = 1$); each node samples its training data from its
own $P_i$ rather than from $P^\star$. We sweep $K \in \{2, 4, 8\}$
and $\mathrm{drift} \in \{0, 0.1, 0.2, 0.3, 0.5, 0.75, 1.0\}$ holding
the homogeneous-experiment hyperparameters fixed ($n = 30{,}000$,
$m = 3{,}000$, $n_{\mathrm{bits}} = 8$, clip $20$, $\beta = 0.5$),
and report empirical $\mathrm{KL}(P^\star \,\|\, \hat P)$ across $20$
seeds per point.

\begin{figure}[h]
\centering
\includegraphics[width=.55\textwidth]{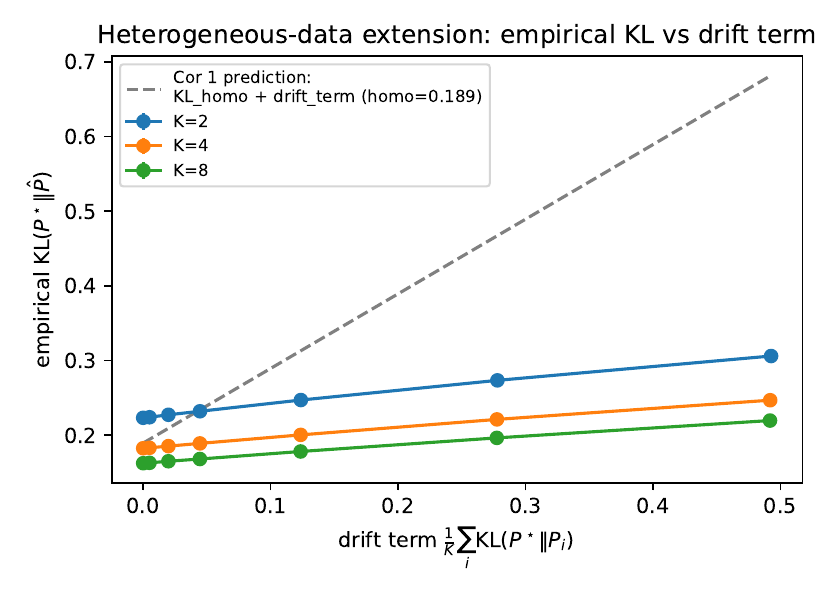}
\caption{Empirical $\mathrm{KL}(P^\star \,\|\, \hat P)$ versus the
analytical drift term $\tfrac{1}{K}\sum_i \mathrm{KL}(P^\star \,\|\,
P_i)$, $20$ seeds per point. Solid curves: empirical means
$\pm 1\sigma$ across $K \in \{2,4,8\}$. Dashed line:
Corollary~\ref{cor:het}'s additive prediction
$\mathrm{KL}_{\mathrm{homo}} + \mathrm{drift\ term}$ with
$\mathrm{KL}_{\mathrm{homo}}$ averaged across $K$.}
\label{fig:e1_5}
\end{figure}

The additive prediction is an upper bound at every $(K,
\mathrm{drift})$ point (Figure~\ref{fig:e1_5}). The $\mathrm{drift}
= 0$ row reduces by construction to Theorem~\ref{thm:theorem1}'s
homogeneous setting, with empirical KL dropping from $0.223$ at
$K = 2$ to $0.182$ at $K = 4$ and $0.162$ at $K = 8$, consistent with
the $d/(Kn)$ statistical term decreasing as $K$ grows. The empirical
KL grows sub-linearly in the drift term: at $K = 4$, KL grows from
$0.182$ at drift $0$ to $0.247$ at drift $1.0$ (drift term $0.492$),
an empirical slope below the corollary's slope-$1$ upper bound. The
sub-unit slope is consistent with the small-drift quadratic
approximation in Corollary~\ref{cor:het}'s proof, where the
$o(\mathrm{KL})$ correction absorbs the KL-functional curvature in
the high-drift tail.

The bound holds, the rate-regime $K$-axis collapses to
Theorem~\ref{thm:theorem1}'s homogeneous floor at drift $0$, and
adding nodes counteracts drift through statistical pooling, so
the corollary's structure is recovered empirically with comfortable
margin.

\subsection{E2 coverage-bound verification}\label{app:e2-details}

This experiment tests whether Theorem~\ref{thm:theorem2}'s
distribution-free coverage bound holds across all three calibration
axes ($n_{\mathrm{cal}}$, $B_i$, $B_{\mathrm{cal}}$) on synthetic
n-gram ground truth where the predicted lower bound has a closed
form. We reuse the n-gram ground truth from
Section~\ref{ssec:e1} as the per-node scoring model and run FC-RAG
calibration at target miscoverage $\alpha = 0.1$, with each axis
swept independently while the others are fixed at
$(n_{\mathrm{cal}}, B_i, B_{\mathrm{cal}}) = (3000, 8, 8)$. We
expand the $B_{\mathrm{cal}}$ grid to
$\{1, 2, 4, 6, 8, 10, 12, 14, 16\}$ to expose both the
noise-floor and saturation regimes. Every node scores every
candidate token so $K_y = K$ uniformly, satisfying
(\ref{ass:B2})'s candidate-set clause by construction; the
Laplace-smoothed n-gram MLE satisfies (\ref{ass:B6})'s
informativeness clause strictly. Mean empirical coverage and the
theoretical lower bound are reported across $20{,}000$ test
queries and $20$ seeds.

\begin{figure}[h]
\centering
\includegraphics[width=.32\textwidth]{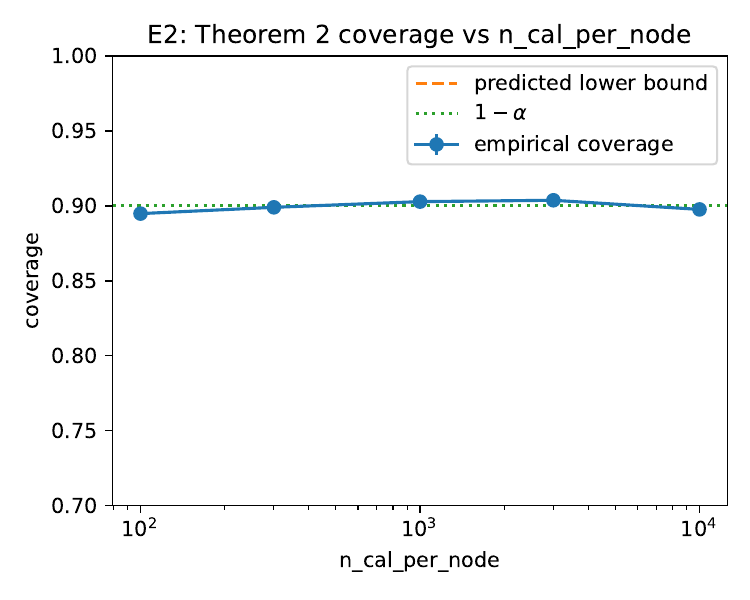}\hfill
\includegraphics[width=.32\textwidth]{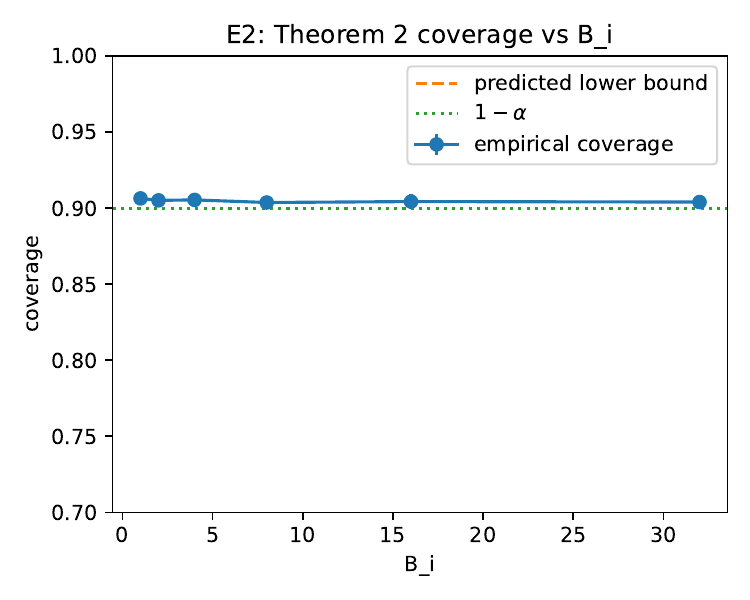}\hfill
\includegraphics[width=.32\textwidth]{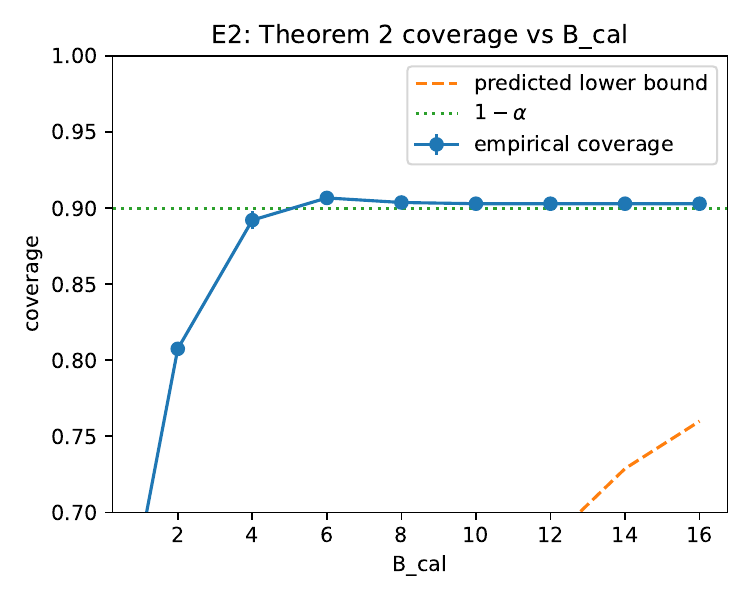}
\caption{Empirical conformal coverage (solid) versus
Theorem~\ref{thm:theorem2}'s predicted lower bound (dashed) as a
function of the per-node calibration size $n_{\mathrm{cal}}$, the
per-score bandwidth $B_i$, and the per-node calibration bandwidth
$B_{\mathrm{cal}}$. Empirical coverage hugs the $1-\alpha = 0.9$
target across all three sweeps.}
\label{fig:e2}
\end{figure}

Empirical coverage tracks the $0.9$ target across all three sweeps.
The $n_{\mathrm{cal}}$ sweep gives $0.891 \pm 0.012,\ 0.898 \pm
0.005,\ 0.900 \pm 0.005,\ 0.900 \pm 0.003,\ 0.900 \pm 0.002$ at
$n_{\mathrm{cal}} \in \{10^2, 3\!\cdot\!10^2, 10^3, 3\!\cdot\!10^3,
10^4\}$, approaching the target from below as $n_{\mathrm{cal}}$
grows, with the per-seed $1\sigma$ tightening from $0.012$ to
$0.002$ as expected from the $1/\sqrt{n_{\mathrm{cal}}}$
statistical-quantile term. The predicted lower bound is loose at
low bandwidth (e.g.\ $B_i = 1$: empirical $0.900$ vs predicted
$\mathrm{LB} = 0.322$) and tightens monotonically as bandwidth
grows: at $B_i = 8$ the LB is $0.493$ and at $B_i = 32$ it tightens
to $0.564$. The $B_{\mathrm{cal}}$ sweep best illustrates the regime
transition: at $B_{\mathrm{cal}} = 1$ the discretized quantile causes
severe undercoverage ($0.674 \pm 0.006$); recovery to the target is
monotone, reaching $0.887$ at $B_{\mathrm{cal}} = 4$, $0.902$ at
$B_{\mathrm{cal}} = 6$, and saturating at $0.899$ for
$B_{\mathrm{cal}} \geq 8$, while the predicted LB climbs from
$-0.973$ at $B_{\mathrm{cal}} = 1$ to $0.760$ at $B_{\mathrm{cal}}
= 16$.

Theorem~\ref{thm:theorem2}'s coverage bound holds across all three
axes, the predicted-LB curves climb monotonically with bandwidth as
the slack schedule predicts, and in the operating regime
$B_{\mathrm{cal}} \geq 6$ coverage is indistinguishable from
unquantized split-conformal, consistent with
Lemma~\ref{lem:quantile-stability}.

\subsection{E3 bandwidth-decay on GPT-2-small}\label{app:e3-bandwidth-details}

This experiment lifts Theorem~\ref{thm:theorem1}'s
$\tfrac{1}{K}\,2^{-2 n_{\mathrm{bits}}}$ quantization-decay envelope
onto a real LM: does FPLD's perplexity (ppl) on GPT-2-small + WikiText-2
follow the predicted exponential decay as the per-coordinate
quantization budget grows? GPT-2-small ($124$M parameters,
$V \approx 50{,}257$) is fine-tuned on a $K = 4$ random partition of
WikiText-2 ($4{,}000$ length-$128$ blocks per shard, $16{,}000$
total). Each round, every node fine-tunes locally for one epoch
(Adam, lr $5\!\cdot\!10^{-5}$, batch $8$), then evaluates next-token
logits on a fixed $m = 2{,}048$ probe set drawn from the WikiText-2
test split (disjoint from train and validation). Each node's probe
logits are scalar-quantized to $n_{\mathrm{bits}}$ bits per coordinate
with clip $\pm 40$ and averaged in logit space; the student is
distilled against the softmax of the average for $3$ epochs of KL
minimization, repeated for $T_{\mathrm{rounds}} = 5$ rounds. Held-out
perplexity is reported on the WikiText-2 validation split. We sweep
$n_{\mathrm{bits}} \in \{2, 4, 6, 8, 10, 12\}$ at a single training
seed with an FPLD-no-quant reference (the $n_{\mathrm{bits}} \to
\infty$ limit), and run a multi-seed verification at the operating
point $n_{\mathrm{bits}} = 8$ over $3$ seeds against
FedDF~\citep{feddf2020} (the no-quantization probability-space
distillation analogue), FedAvg, and the pretrained checkpoint.

\begin{figure}[h]
\centering
\includegraphics[width=.55\textwidth]{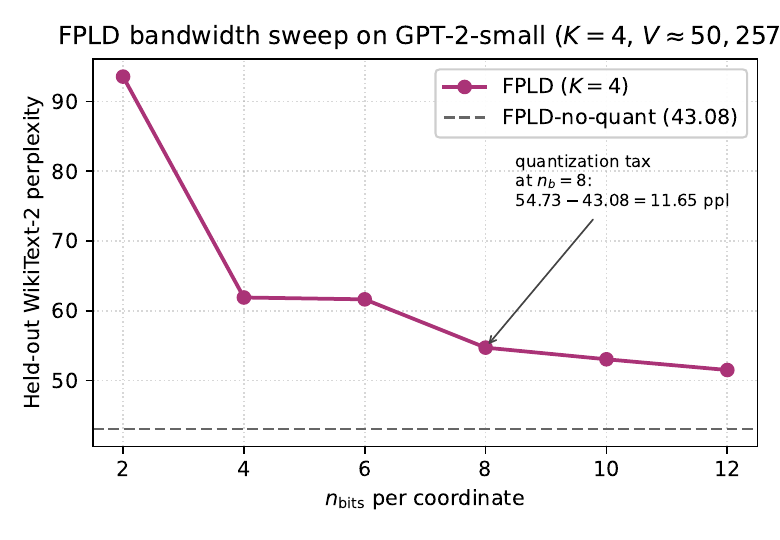}
\caption{Held-out WikiText-2 perplexity of FPLD on GPT-2-small as a
function of the per-coordinate quantization budget $n_{\mathrm{bits}}$,
at $K = 4$ (single training seed). Dashed line: the FPLD-no-quant
floor at $43.08$ ppl. FPLD perplexity decreases monotonically toward
this floor as $n_{\mathrm{bits}}$ grows, consistent with
Theorem~\ref{thm:theorem1}'s $\tfrac{1}{K}\,2^{-2 n_{\mathrm{bits}}}$
quantization-term decay.}
\label{fig:e3-bandwidth}
\end{figure}

The bandwidth sweep (Figure~\ref{fig:e3-bandwidth}) decreases
monotonically from $93.5$ ppl at $n_{\mathrm{bits}} = 2$ through
$54.7$ at $n_{\mathrm{bits}} = 8$ to the no-quant floor of $43.1$ as
$n_{\mathrm{bits}} \to \infty$. The $2^{-2 n_{\mathrm{bits}}}$
factor in Theorem~\ref{thm:theorem1}'s quantization envelope tightens
by two orders of magnitude between $n_{\mathrm{bits}} = 4$
($2^{-8} \approx 4 \cdot 10^{-3}$) and $n_{\mathrm{bits}} = 8$
($2^{-16} \approx 1.5 \cdot 10^{-5}$); the empirical curve replicates
this exponential-decay shape, with the universal constant $c_3 =
L_\ell^2/6 \approx 267$ at clip $40$ giving the bandwidth-term
envelope; the empirical curve sits well below the overall
Theorem~\ref{thm:theorem1} bound at every $n_{\mathrm{bits}}$ tested. The
multi-seed verification at $n_{\mathrm{bits}} = 8$ gives an
FPLD-vs-FedDF perplexity gap of $15.60$ ppl
($\sigma_{\mathrm{FPLD}} = 1.19$, $\sigma_{\mathrm{FedDF}} = 0.76$),
exceeding both methods' per-seed standard deviations by an order of
magnitude (full table in Appendix~\ref{app:e3-multiseed}). FedDF is
the chosen comparator because the FPLD-FedDF gap isolates the
quantization channel from the protocol family; FedAvg requires
$|\theta|$-bit weight averaging and violates the paper's bandwidth
budget (Section~\ref{sec:setup}), so it is reported in the appendix
table for parameter-space reference only. The single-seed
bandwidth sweep and the multi-seed verification were run on
different hardware configurations; both confirm a substantial
bandwidth tax of order $10$--$15$ ppl at the operating point.

GPT-2-small reproduces the qualitative $n_{\mathrm{bits}}$ axis of
Theorem~\ref{thm:theorem1} on a real LM: the empirical curve traces
the exponential-decay envelope, and the multi-seed FPLD-FedDF gap
quantifies the bandwidth tax with $>10\sigma$ statistical
separation. The $K$-axis is not load-bearing here by design: the
synthetic rate experiment (Section~\ref{ssec:e1}) covers it, the
heterogeneous-data regime is covered by
Appendix~\ref{app:e1_5-details}, and the real-LM contribution here
is the bandwidth axis.

\subsection{E4 $\alpha$-sweep verification across miscoverage levels}\label{app:e4-alpha-details}

The $\alpha$-sweep at $B_i = 32$ directly verifies
Theorem~\ref{thm:theorem2}'s
$\Pr[Y \in \mathcal{C}_\alpha(X)] \geq 1 - \alpha - \cdots$
inequality across miscoverage levels: empirical coverage tracks
$1-\alpha$ in every cell of the $3 \times 3$ grid
(Table~\ref{tab:e4-alpha}), within $\pm 0.015$ on DBpedia and AG
News and within $\pm 0.022$ on MMLU.

\begin{table}[h]
\centering
\caption{$\alpha$-sweep at $B_i = 32$, $3$ seeds. Each cell
reports empirical coverage (mean $\pm 1\sigma$) against the target
$1 - \alpha$. The empirical-vs-target gap is $\le 0.015$ on DBpedia
and AG News and $\le 0.022$ on MMLU, verifying
Theorem~\ref{thm:theorem2}'s coverage bound across miscoverage
levels.}
\label{tab:e4-alpha}
\resizebox{\textwidth}{!}{%
\begin{tabular}{lccc}
\toprule
Dataset & $\alpha = 0.05$ ($1\!-\!\alpha\!=\!0.95$)
        & $\alpha = 0.10$ ($1\!-\!\alpha\!=\!0.90$)
        & $\alpha = 0.20$ ($1\!-\!\alpha\!=\!0.80$) \\
\midrule
DBpedia (fullname) & $0.950 \pm 0.021$ & $0.909 \pm 0.021$ & $0.813 \pm 0.029$ \\
AG News (fullname) & $0.952 \pm 0.007$ & $0.915 \pm 0.010$ & $0.814 \pm 0.019$ \\
MMLU (fullname)    & $0.937 \pm 0.012$ & $0.888 \pm 0.012$ & $0.778 \pm 0.023$ \\
\bottomrule
\end{tabular}%
}
\end{table}

\subsection{E5 end-to-end propagation chain}\label{app:e5-details}

This experiment tests Corollary~\ref{cor:theorem3}'s Pinsker
propagation slack $\Delta_{\mathrm{train}}$ end to end on a real
LM: when FPLD's training-time KL varies, does FC-RAG coverage
remain at the $1-\alpha$ target as the corollary predicts? For
each training bandwidth $n_{\mathrm{bits}} \in \{2, 4, 6, 8, 12\}$
plus an FPLD-no-quant reference, we (i) train FPLD on WikiText-2 at
this bandwidth, reusing the configuration from
Appendix~\ref{app:e3-bandwidth-details} ($K = 4$, $T_{\mathrm{rounds}} = 5$,
$E_{\mathrm{local}} = 1$, $m = 2{,}048$, $\mathrm{clip} = 40$);
(ii) record the trained student's held-out WikiText-2 validation
perplexity as a proxy for $\mathbb{E}_X \mathrm{KL}(P^\star \,\|\,
\hat P)$ ($\log\mathrm{ppl}$ tracks KL up to the data-marginal
entropy, which is constant across runs); and (iii) run the
student behind FC-RAG on DBpedia and AG News (fullname,
$K = 4$, $\alpha = 0.1$, $B_i = 32$, $B_{\mathrm{cal}} = 8$),
averaging coverage and mean set size across $3$ FC-RAG calibration
seeds at fixed FPLD training seed. Corollary~\ref{cor:theorem3}
predicts $\Delta_{\mathrm{train}} = f_{\max}(\mathbb{E}\,\mathrm{KL}
+ \sqrt{2\,\mathbb{E}\,\mathrm{KL}})$, so the empirical signature
should be (a) coverage holding at $1-\alpha$ across the entire
training-bandwidth range whenever the parametric scorer's
$f_{\max}$ stays bounded, and (b) the training-bandwidth axis
manifesting in efficiency (set size, $\mathrm{acc}@1$) rather than
in coverage.

\begin{figure}[h]
\centering
\includegraphics[width=\textwidth]{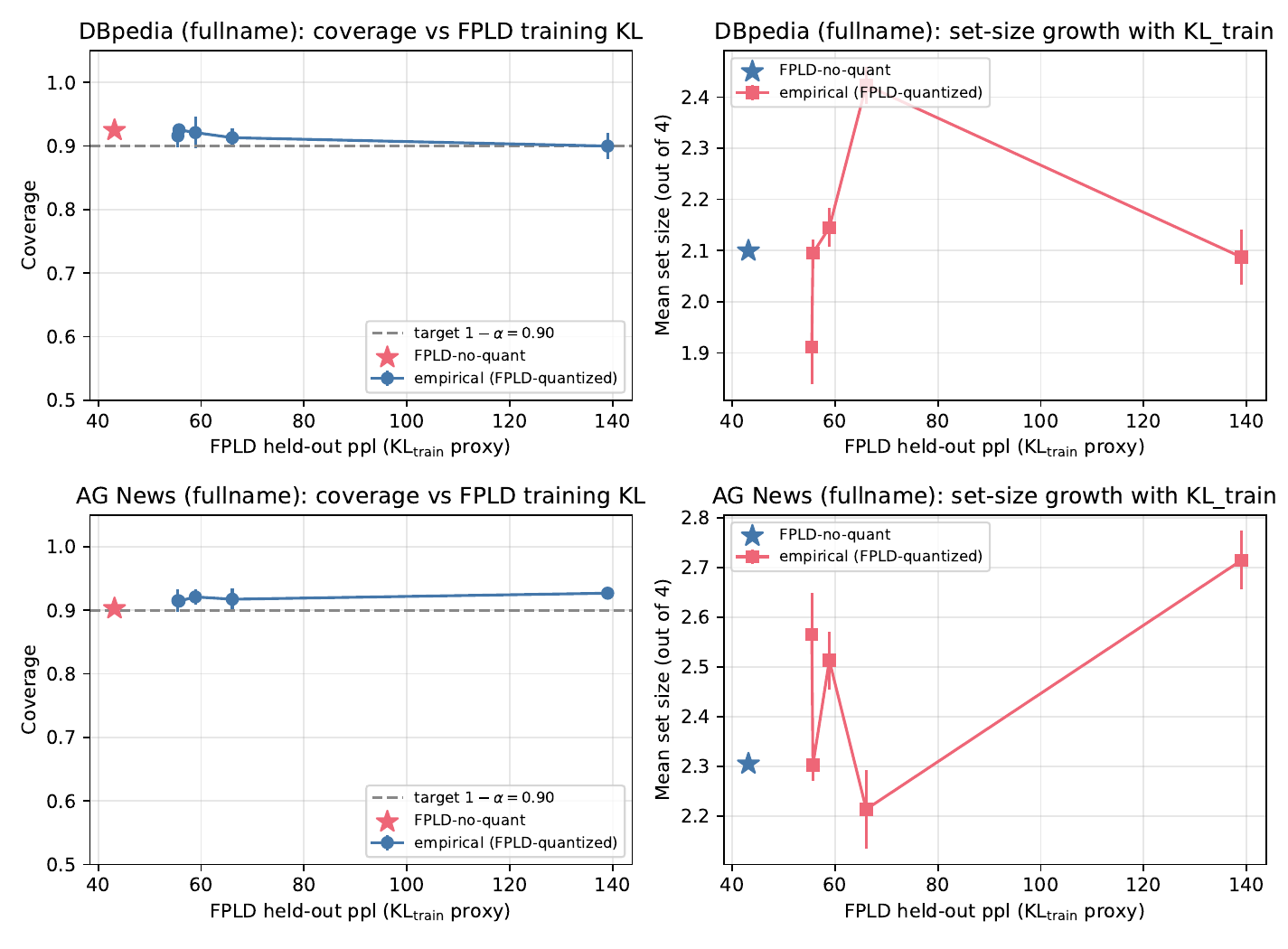}
\caption{End-to-end FC-RAG coverage (left columns) and mean
prediction-set size (right columns) for FPLD-trained students at
varying training bandwidth $n_{\mathrm{bits}} \in \{2,4,6,8,12\}$,
with the FPLD-no-quant reference shown as a star. Top row: DBpedia
fullname; bottom row: AG News fullname. Coverage stays at the
$1-\alpha = 0.9$ target across the entire training-bandwidth range
on both benchmarks, validating Corollary~\ref{cor:theorem3}'s
propagation chain end to end.}
\label{fig:e5}
\end{figure}

On DBpedia, coverage stays in $[0.900, 0.925]$ across all five
training bandwidths plus the no-quant reference, while the
underlying FPLD validation perplexity ranges from $139.0$ at
$n_{\mathrm{bits}} = 2$ (worst-trained student) to $43.2$ at
no-quant (best-trained student). Specifically,
$n_{\mathrm{bits}} = 2$ gives empirical coverage $0.900 \pm 0.021$
with mean set size $2.09$ and $\mathrm{acc}@1 = 0.326$; the
FPLD-no-quant reference gives $0.925 \pm 0.020$ with set size
$2.10$ and $\mathrm{acc}@1 = 0.662$. AG News tells the same
story: coverage in $[0.903, 0.927]$, with $n_{\mathrm{bits}} = 2$
at $0.927 \pm 0.003$ (set size $2.71$, $\mathrm{acc}@1 = 0.563$)
and no-quant at $0.903 \pm 0.013$ (set size $2.30$,
$\mathrm{acc}@1 = 0.613$). The training-bandwidth axis manifests
in $\mathrm{acc}@1$ (DBpedia drops from $0.66$ at no-quant to
$0.33$ at $n_{\mathrm{bits}} = 2$) and set size (AG News grows
from $2.30$ at no-quant to $2.71$ at $n_{\mathrm{bits}} = 2$),
exactly as Corollary~\ref{cor:efficiency} predicts.

The full Theorem~\ref{thm:theorem1}--Theorem~\ref{thm:theorem2}--Corollary~\ref{cor:theorem3}
chain holds end-to-end on a real LM: training-time bandwidth
determines scorer quality, which determines set size and
$\mathrm{acc}@1$, but distribution-free coverage survives the
chain. Corollary~\ref{cor:theorem3}'s $\Delta_{\mathrm{train}}$
slack is not binding for parametric LM students whose
log-density is locally smooth in their parameters: the
strengthened (\ref{ass:B5})'s $f_{\max}$ stays bounded and the
Pinsker $\sqrt{\mathrm{KL}}$ factor is loose, so the conformal
procedure absorbs scorer degradation into the calibration
quantile rather than into under-coverage.

\section{Broader impact and reproducibility}\label{app:impact-repro}

\paragraph{Broader impact.}
Our work is motivated by settings (clinical research networks,
enterprise knowledge bases, scientific consortia) in which data cannot
be centralized for legitimate reasons (regulation, consent,
institutional policy). A federated LLM pipeline that preserves data
locality while still delivering statistically calibrated predictions
is a more deployable tool for such users than a centralized
model. That said, two second-order concerns deserve explicit mention.
First, the
conformal coverage guarantee is marginal: it holds on average over the
query distribution, not conditionally per input, so in safety-critical
applications operators should pair FC-RAG with conditional diagnostics
(group-wise coverage, per-domain set size) rather than relying on the
marginal bound alone. Second, we do not claim differential privacy; an
adversary with access to the logit stream could in principle mount
membership-inference-style attacks against node-local data, and
deploying FPLD in sensitive settings should compose it with a
differentially private mechanism at the logit stage. Neither concern
invalidates the statistical contribution, but both are operational
caveats a practitioner must confront. Reproducibility details and
compute footprint are described below.

\paragraph{Reproducibility.}
All theorems are stated with explicit constants and proved from the
listed assumptions; the proofs of
Theorems~\ref{thm:theorem1},~\ref{thm:theorem2}, and
Corollary~\ref{cor:theorem3} (Appendix~\ref{sec:proofs}) make no
deferred claims. The synthetic experiments (E1, E1.5, E2) run on CPU
in minutes and fully reproduce the scaling predictions of
Theorem~\ref{thm:theorem1}, Corollary~\ref{cor:het}, and
Theorem~\ref{thm:theorem2} respectively on n-gram ground truth, with
no GPU or external data required. The real-LM experiments
(E3, E4, E5) use publicly available models (GPT-2-small from
HuggingFace, MiniLM for retrieval) and publicly available benchmarks
(WikiText-2 for FPLD pretraining and bandwidth-sweep splits; DBpedia
4-class, AG News 4-class, and MMLU 4-subject for the FC-RAG
end-to-end demonstrations and the Pinsker propagation chain). Source
code, configs, cached experiment result JSONs, and non-interactive
figure-regeneration scripts are available from the authors on request;
each experiment emits a single JSON summary that the plot scripts read
to produce the figures used in Section~\ref{sec:experiments}. Commit
hashes for the exact model and data snapshots used in each figure, and
a public repository link, will be added in a future revision.

\paragraph{Compute footprint.}
Total compute is $\approx 60$ GPU-hours on an academic GPU cluster
(mostly RTX 6000; the original single-seed bandwidth-sweep was run
on A100) plus $<10$ CPU-hours for the synthetic sweeps; per-run
breakdowns are recorded in the experiment JSONs (available on request).

\end{document}